\newtheorem{theorem}{Theorem}
\newtheorem*{theorem*}{Theorem}
\newtheorem{lemma}{Lemma}
\newtheorem{prop}{Proposition}
\Crefname{prop}{Proposition}{Propositions}
\newcommand{\bfI}{{\bf I}}
\newcommand{\bfK}{{\bf K}}
\newcommand{\bfP}{{\bf P}}
\newcommand{\bfQ}{{\bf Q}}
\newcommand{\bfT}{{\bf T}}
\newcommand{\bfV}{{\bf V}}
\newcommand{\bfW}{{\bf W}}
\newcommand{\bfX}{{\bf X}}
\newcommand{\bfY}{{\bf Y}}
\newcommand{\bfZ}{{\bf Z}}
\newcommand{\bfx}{{\bf x}}
\newcommand{\bfy}{ {\bf y}}
\newcommand{\bfu}{{\bf u}}
\newcommand{\bfz}{{\bf z}}
\newcommand{\bfpsi}{{\boldsymbol \psi}}
\newcommand{\bftheta}{{\boldsymbol \theta}}
\newcommand{\bfgamma}{{\boldsymbol \gamma}}
\title{
Optimal Control for Transformer Architectures: Enhancing Generalization, Robustness and Efficiency
}
\author{%
  Kelvin Kan \\
  Department of Mathematics\\
  UCLA \\
  \texttt{kelvin.kan@math.ucla.edu} \\
  \And
  Xingjian Li \\
  Oden Institute \\
  University of Texas at Austin \\
  \texttt{xingjian.li@austin.utexas.edu} \\
  \And
  Benjamin J. Zhang \\
  School of Data Science and Society \\
  UNC Chapel Hill \\
  \texttt{bjz@unc.edu} \\
  \And
  Tuhin Sahai \\
  SRI International \\
  \texttt{tuhin.sahai@sri.com} \\
  \And
  Stanley Osher \\
  Department of Mathematics\\
  UCLA\\
  \texttt{sjo@math.ucla.edu } \\
  \And
  Markos A. Katsoulakis \\
  Department of Mathematics and Statistics \\
  University of Massachusetts Amherst \\
  \texttt{markos@umass.edu} \\
}
\begin{document}

\maketitle

\begin{abstract}
We study Transformers through the perspective of optimal control theory, using tools from continuous-time formulations to derive actionable insights into training and architecture design. This framework improves the performance of existing Transformer models while providing desirable theoretical guarantees, including generalization and robustness.
Our framework is designed to be plug-and-play, enabling seamless integration with established Transformer models and requiring only slight changes to the implementation. We conduct seven extensive experiments on tasks motivated by text generation, sentiment analysis, image classification, and point cloud classification. Experimental results show that the framework improves the test performance of the baselines, while being more parameter-efficient. On character-level text generation with \texttt{nanoGPT}, our framework achieves a 46\% reduction in final test loss while using 42\% fewer parameters. On \texttt{GPT-2}, our framework achieves a 9.3\% reduction in final test loss, demonstrating scalability to larger models. To the best of our knowledge, this is the first work that applies optimal control theory to both the training and architecture of Transformers. It offers a new foundation for systematic, theory-driven improvements and moves beyond costly trial-and-error approaches.
\end{abstract}

\section{Introduction}\label{sec:intro}
Transformers have achieved state-of-the-art performance in various applications, including natural language processing~\cite{radford2019language,vaswani2017attention}, computer vision~\cite{dosovitskiy2021an}, program synthesis~\cite{chen2021evaluating}, computational biology~\cite{jumper2021highly}, speech processing~\cite{baevski2020wav2vec}, reinforcement learning~\cite{chen2021decision,lin2024transformersdecisionmakersprovable}, operator learning~\cite{li2023transformer,yang2023context} and climate modeling~\cite{gao2023earthformerexploringspacetimetransformers,nguyen2023climaxfoundationmodelweather,nguyen2024scalingtransformerneuralnetworks}.

The popularity of Transformers has led to myriad architectural variants~\cite{child2019generating,dai2019transformer,wang2020linformer}, each developed to exhibit certain advantages. Practitioners, however, often discover effective Transformer architectures through a trial-and-error approach. The objective function used to train Transformers admits a natural formulation as an optimal control problem, with the loss serving as a terminal cost and model depth corresponding to time. By examining the optimality conditions of this formulation, optimal transport (OT) emerges as a principled regularizer that ensures well-posedness. This \textit{optimal control framework} informs both the structure of the loss --- through regularization --- and the design of the final layer, grounding these architectural choices in control-theoretic principles rather than heuristics.

Guided by this framework, we propose OT-Transformer, a \emph{plug-and-play} model grounded in optimal control theory. Our model is flexible and straightforward to implement in the sense that one can directly insert a predefined Transformer architecture into the OT-Transformer model. This requires only slight modifications to existing code and allows seamless integration with established models. An illustration of our framework is given in~\Cref{fig:diagram}. 

\begin{figure}[h]
    \centering
    \includegraphics[width=1\textwidth]{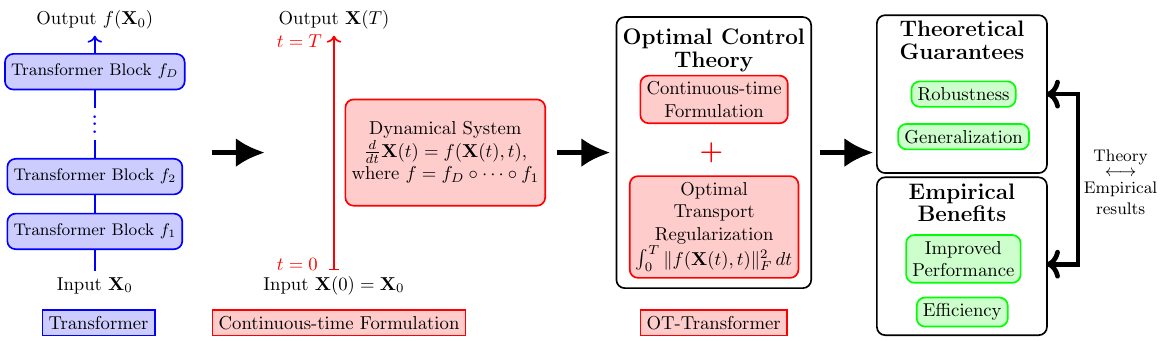}
    \caption{\textbf{Schematic Illustration of our Plug-and-Play Model~\eqref{eq:training_obj}.} 
    Given an existing Transformer, we construct a continuous-time formulation by using the Transformer to parametrize the velocity field of a dynamical system. 
    Optimal control theory informs the learning of the velocity field by imposing regularity, in particular, through the use of optimal transport regularization.  
    Empirical performance improvements   (Section~\ref{sec:experiment}) are consistent with our theoretical guarantees (Section~\ref{sec:theory}).
    } 
    \label{fig:diagram}
\end{figure}

Empirically, our model consistently improves the test performance of the original models across seven diverse experiments and demonstrates the following benefits. 

\begin{itemize}[left=20pt]
    \item \textbf{Efficiency}
    \begin{itemize}
        \item \textbf{Impementation Efficiency:} Our optimal control framework can be directly applied to existing models, improving their performance while bypassing the costly and time-consuming trial-and-error approach to manual parameter tuning.
        \item \textbf{Parameter Efficiency:} In our comprehensive experiments across various applications, we show that our framework can improve the performance of the original model with a reduced parameter count. This also improves memory efficiency during inference.
    \end{itemize}
\end{itemize}
A subset of our experimental results is reported in \Cref{tab:combined_results_text}. We show that for a variety of text generation tasks with nanoGPT and GPT-2, our OT-Transformer consistently improves the test performance using models of the same or smaller size.

\begin{table}[h]
    \centering
    \caption{Results for text generation experiments in~\Cref{subsec:text_exp}. Due to space constraints, additional LLM metrics are provided in~\Cref{sec:exp_details}; our method consistently outperforms the baseline across all reported metrics.}
    \vspace{1em}
    \begin{tabular}{@{}llcc@{}}
        \toprule
        \textbf{Experiment} & \textbf{Method} & \textbf{Para. Count} & \textbf{Test Loss} \\
        \midrule
        \multirow{2}{*}{nanoGPT on Shakespeare (Char.-level)} 
            & Baseline & 10.65M & $2.68 \pm 0.006$ \\
            & OT-Trans. (Ours) & 6.16M &  $\mathbf{1.44} \bm{\pm}  \mathbf{0.005} $ \\
        \midrule
        \multirow{2}{*}{GPT-2 on Shakespeare (Word-level)} 
            & Baseline & 123.7M & $5.18 \pm 0.032$ \\
            & OT-Trans. (Ours) & 123.7M & $\mathbf{4.96} \bm{\pm} \mathbf{0.012} $ \\
        \midrule
        \multirow{2}{*}{GPT-2 on OpenWebText (9B tokens)}
            & Baseline & 123.7M & $3.21$ \\
            & OT-Trans. (Ours) & 123.7M & $\mathbf{2.91}$ \\
        \bottomrule
    \end{tabular}
    \label{tab:combined_results_text}
\end{table}

The empirical findings are further supported by theoretical analysis in Section~\ref{sec:theory}. Our main theorem, Theorem~\ref{thm:stable_forward}, which we present a simplified version below, shows that OT-Transformer exhibits \emph{stable forward propagation}~\cite{haber2017stable,ruthotto2020deep}. 

\begin{theorem*}[Stable Forward Propagation]
    For input-target output pairs $(\bfx_1, \bfy_1)$ and $(\bfx_2, \bfy_2)$, the corresponding output of the optimally trained model $\tilde{\bfy}_1$ and $\tilde{\bfy}_2$ satisfy
    \begin{equation}\label{eq:intro_stable}
        \| \tilde{\bfy}_1 - \tilde{\bfy}_2 \| \leq C \| \bfx_1-\bfx_2 \| + C' \| \bfy_1 - \bfy_2 \|,
    \end{equation}
    where $C,C'>0$ are constants that can be controlled by adjusting the strength of the regularization. In the absence of the regularization, these constants become unbounded, and the model can exhibit unstable forward propagation. 
\end{theorem*}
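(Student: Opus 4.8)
The plan is to exploit the structure of the optimal control problem: because the loss serves as a terminal cost and the dynamics are governed by a velocity field $\bfv$ parametrized by the Transformer, the regularization of choice is a transport cost $\int_0^1 \|\bfv(\cdot,t)\|^2\,dt$ (or an $L^2$ kinetic energy). I would first write down the continuous-time forward model explicitly: the state $\bfz(t)$ solves $\dot{\bfz}(t) = \bfv(\bfz(t),t)$ with $\bfz(0)$ determined by the input $\bfx$, and the model output $\tilde{\bfy}$ is obtained by applying the (possibly linear) final layer to $\bfz(1)$. The key structural fact supplied by the optimal control framework is that at an optimizer, the OT regularizer forces the learned velocity field to have bounded Lipschitz constant in the state variable — or more precisely, that the trajectory map $\bfz(0)\mapsto\bfz(1)$ has a Lipschitz constant controlled by a quantity $L$ that depends on the regularization strength. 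I would make this precise by invoking (or re-deriving from the stated framework) the bound on $\|\nabla_{\bfz}\bfv\|$ that the optimality conditions impose.

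Second, I would carry out a Gr\"onwall-type estimate. For two trajectories $\bfz_1(t),\bfz_2(t)$ with initial data $\bfz_1(0),\bfz_2(0)$, set $\bfe(t) = \bfz_1(t) - \bfz_2(t)$. Then
\begin{equation}
\frac{d}{dt}\|\bfe(t)\|^2 = 2\iprod{\bfe(t),\, \bfv(\bfz_1(t),t) - \bfv(\bfz_2(t),t)} \leq 2 L(t)\,\|\bfe(t)\|^2,
\end{equation}
where $L(t)$ is the (time-dependent) Lipschitz constant of $\bfv(\cdot,t)$. Integrating gives $\|\bfe(1)\| \leq \exp\!\big(\int_0^1 L(t)\,dt\big)\,\|\bfe(0)\|$, and composing with the Lipschitz constant of the input embedding and the final layer yields $\|\tilde{\bfy}_1 - \tilde{\bfy}_2\| \leq C\,\|\bfx_1 - \bfx_2\|$ for $C = C_{\mathrm{final}}\,\exp(\int_0^1 L\,dt)\,C_{\mathrm{embed}}$. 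The point is that $\int_0^1 L(t)\,dt$ is finite and bounded in terms of the regularization parameter precisely because the OT penalty keeps $\bfv$ from developing large spatial gradients; without it, $L$ can blow up and $C$ becomes unbounded, which is the second assertion of the theorem.

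Third — and this is where the target output $\bfy$ enters — I would handle the $C'\|\bfy_1 - \bfy_2\|$ term by arguing about the \emph{optimally trained} model rather than an arbitrary one. Since the parameters are chosen to (approximately) minimize the terminal cost, the output $\tilde{\bfy}_i$ is close to $\bfy_i$ in a sense controlled by the achieved loss; combining $\|\tilde{\bfy}_1 - \tilde{\bfy}_2\| \leq \|\tilde{\bfy}_1 - \bfy_1\| + \|\bfy_1 - \bfy_2\| + \|\bfy_2 - \tilde{\bfy}_2\|$ with a bound on the per-sample terminal cost at optimality produces the $C'\|\bfy_1 - \bfy_2\|$ contribution (with $C'$ absorbing the residual loss, which the regularization trade-off also controls). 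Alternatively, if the intended argument is purely a stability statement with $C'$ coming from the final-layer/readout interpolation, I would instead split at $\bfz(1)$ and use that the optimal terminal state matches the target up to the regularization-dependent slack.

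\textbf{Main obstacle.} The crux is Step one: rigorously extracting a bound on the spatial Lipschitz constant of the \emph{optimal} velocity field from the OT-regularized optimality conditions. A naive energy bound $\int_0^1\|\bfv\|^2\,dt \le \text{const}$ controls the size of $\bfv$ but not its gradient, so one needs either an additional regularity assumption on the Transformer parametrization (e.g. bounded weights, Lipschitz attention/MLP blocks, which the OT penalty encourages), or a more delicate argument using the Pontryagin/Hamilton–Jacobi structure to bound $\nabla_{\bfz}\bfv$ along optimal trajectories. I expect the paper resolves this by positing mild boundedness of the network weights together with the OT term, so that $L(t)$ is explicitly a (monotone, vanishing as regularization $\to\infty$) function of the regularization strength; getting the dependence of $C,C'$ on that parameter sharp, and verifying the "unbounded without regularization'' claim via an explicit example or a compactness-failure argument, is the remaining delicate piece.
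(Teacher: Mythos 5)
Your proposal identifies the right geometric object (the trajectory map) and correctly anticipates that the crux is extracting some regularity of the optimal velocity from the optimality conditions, but the Gr\"onwall route you sketch is not the paper's argument, and the gap you flag at the end is real --- the OT penalty does not supply a bound on $\nabla_{\bfz}\bfv$, so your estimate $\tfrac{d}{dt}\|\bfe\|^2 \le 2L(t)\|\bfe\|^2$ has no usable $L(t)$, and "positing mild boundedness of the network weights'' is not how the paper closes it. The paper's actual resolution is the Pontryagin Maximum Principle applied to the quadratic Hamiltonian $H(\bfP) = \tfrac{1}{2\lambda}\|\bfP\|_F^2$: because $H$ has no state dependence, the costate is constant in time, $\bfP(t) = \nabla G(\bfX(T),\bfy)$, so the optimal velocity along every characteristic is the \emph{constant} $-\tfrac{1}{\lambda}\nabla G(\bfX(T),\bfy)$ (Proposition~\ref{thm:regular_trjectory}: straight trajectories at constant speed). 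This eliminates any need for a Lipschitz bound on $\bfv$ in the state variable. Integrating the constant velocity yields the closed-form relation $\bfX(T) = \bfX(0) - \tfrac{T}{\lambda}\nabla G(\bfX(T),\bfy)$, and subtracting for two samples gives
\[
\| \bfX_1(T) - \bfX_2(T) \|_F \le \| \bfX_1(0)-\bfX_2(0)\|_F + \tfrac{T}{\lambda}\bigl\|\nabla G(\bfX_1(T),\bfy_1) - \nabla G(\bfX_2(T),\bfy_2)\bigr\|_F .
\]
Splitting the gradient difference and using the Lipschitz constants $L^2$ (in $\bfX$) and $L$ (in $\bfy$) of $\nabla G$ --- which hold \emph{explicitly} for MSE and softmax/cross-entropy after a linear readout (Lemma~\ref{lemma:Lipschitz}), not by an assumed weight bound --- produces a self-referential inequality in $\|\bfX_1(T)-\bfX_2(T)\|_F$ that can be solved algebraically precisely when $\lambda > TL^2$; composing with the $L$-Lipschitz readout gives~\eqref{eq:estimate_joint}.

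Your treatment of the $C'\|\bfy_1-\bfy_2\|$ term is also off the paper's track: it does not rely on the model output being near the target at optimality (no bound on the achieved loss is invoked). The $\bfy$-dependence enters deterministically through the constant costate $\nabla G(\bfX(T),\bfy)$, which the optimal velocity carries, and the Lipschitz continuity of $\nabla G$ in its second argument converts $\|\bfy_1-\bfy_2\|$ directly into the second term of the bound. So the missing idea is the PMP/straight-line structure (together with convexity of $G$, used in Proposition~\ref{prop:charact_HJB} to guarantee the flow is well-defined and non-crossing); once you have that, the Gr\"onwall machinery and the "residual-loss'' argument both become unnecessary, and the dependence of $C,C'$ on $\lambda$ falls out exactly.
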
 

This stability induces highly favorable properties in practice, highlighted as follows.

\begin{itemize}[left=20pt]
    \item \textbf{Robustness} 
    \begin{itemize} 
        \item \textbf{Robustness to Input Perturbations:} On a data level, \eqref{eq:intro_stable} controls the extent to which input perturbations are amplified in the output. As a result, this enhances robustness against input perturbations, such as those caused by noise or adversarial attacks. 
        \item \textbf{Distributional Robustness:} On a distribution level, 
        \eqref{eq:intro_stable} guarantees that distributional perturbations in the input space are not disproportionately amplified at the output level, which we show in Theorem~\ref{prop:wassersteinstab}.
           Distributional robustness induces robustness to training data perturbation. In particular,
             it ensures that a model trained on perturbed data maintains good predictive performance on the original data, with the prediction error controlled by the degree of the perturbation.
    \end{itemize}
    \item \textbf{Generalization}

    \begin{itemize}
        \item \textbf{In-Distribution Generalization:} Stable forward propagation improves generalization to test inputs resembling the training data, as~\eqref{eq:intro_stable} ensures that similar inputs are mapped to similar outputs. Also, distributional robustness ensures that model trained on the training distribution (sampled from the true distribution) can generalize well to data drawn from the true distribution. Combined with distributionally robust optimization (DRO), this robustness result produces non-asymptotic generalization bounds for the learned Transformer model \cite{esfahanikuhn2018data, sinha_DRO_2018certifiable, Jegelka_DRO_NEURIPS2019}.

        \item \textbf{Out-of-Distribution Generalization:} Stable forward propagation combined with DRO also provides bounds on expected test loss for distributions within a Wasserstein ball around the empirical training distribution, offering principled insight into the model’s performance on out-of-distribution samples.
        
    \end{itemize} 
\end{itemize}

We summarize our contributions as follows:

\begin{itemize}[left=20pt]
    \item We analyze the training and architecture of Transformers using optimal control theory. To the best of our knowledge, this is the first such analysis of Transformers from this perspective.
    \item Based on optimal control methods, we propose a \emph{plug-and-play} model called OT-Transformer \eqref{eq:training_obj}-\eqref{eq:dynamics_discrete}, which formulates Transformers in a continuous-time setting and incorporates an OT regularization into the training objective. Our framework is versatile, allowing for easy adaptation of existing Transformer architectures.
    We remark that previous attempts to apply OT to the design of Transformer models remain underexplored and have achieved limited success~\cite{baier2020n}. 
    \item Our theoretical analysis demonstrates that the OT-Transformer model confers highly beneficial theoretical properties. The most notable property is its \emph{stable forward propagation}, which induces generalization and robustness of the learned model. These properties are further supported by our experimental results. We emphasize that our use of optimal control theory represents only an initial step in analyzing Transformers, and our framework provides a new foundation for future theoretical extensions to many other aspects of these models.
    
    \item We conduct seven extensive experiments across different fields and applications. Our framework improves the performance of the original architecture. In particular, it yields better test performance and is better at avoiding overfitting while using a reduced number of parameters.
\end{itemize}

\section{Background 
}\label{sec:background}
This section provides background on Transformers and continuous-time neural networks.

\paragraph{Notation.}  We use bold uppercase letters (e.g., $\bfX$) to denote matrices and bold lowercase letters (e.g., $\bfx$) to denote vectors. We also use $\bfx_j$ (resp. $\bfx_{i,j}$) to represent the $j$th column of $\bfX$ (resp. $\bfX_i$).

\paragraph{Transformers.} 
Let $\{ \bfz_i\}_{i = 1}^n$, where $\bfz_i \in \mathbb{R}^{d_f}$, be a sequence of $n$ vectors. In the language of Transformers, each vector $\bfz_i$ is referred to as a \emph{token}. Transformers are mappings which take in inputs $\bfZ = [ \bfz_1 , \ldots, \bfz_n] \in \mathbb{R}^{d_f \times n}$, with the specific form of the output depending on the downstream application.
Each token is first embedded into $\mathbb{R}^d$ through the mapping 
\begin{equation}\label{eq:input_embedding}
    \bfx_{0,j} = g_l(\bfz_{j}; \bfgamma_l), \quad \text{for} \quad j=1,2,...,n,
\end{equation}
where $\bfx_{0,j} \in \mathbb{R}^{d}$. The input embedding $g_l$, parametrized by weights $\bfgamma_l$, embeds each token into a $d$-dimensional space and incorporates positional encoding into each token. Then, it is processed through a series of Transformer blocks, where the output of each block serves as the input to the next. At each step, the model sequentially applies the operation $\bfX_{i+1}=f_{i+1}(\bfX_i)$ where $f_{i+1}: \mathbb{R}^d \to \mathbb{R}^d$ is given by\footnote{Layer normalization is commonly applied in each Transformer block~\cite{xiong2020layer}. We omit layer normalization in this exposition for brevity, but it is included in our experiments.}
\begin{align}
     &\bfu_{i,j} = \bfx_{i,j} + \sum_{h=1}^H  \bfW_i^h \bfV_i^h \bfX_i \, \text{softmax} \left( \frac{(\bfK_i^h \bfX_i)^\top \bfQ_i^h \bfx_{i,j}}{\sqrt{k}} \right), \label{eq:self-attention}\\
     &\bfx_{i+1,j} = \bfu_{i,j} + g_{f}(\bfu_{i,j};\bftheta_{i}), \label{eq:fully-connected}
\end{align}
for $j=1,2,...,n$, and $i=0,1,...,D-1$, where $D$ is the total number of Transformer blocks. Here, each summand of \eqref{eq:self-attention} are called \emph{self-attention heads} where $H$ is the number of heads, $\bfQ_i^h,\bfK_i^h, \bfV_i^h \in \mathbb{R}^{k \times d}$ are known as query, key, and value matrices, and $\bfW_i^h \in \mathbb{R}^{d \times k}$ are weight matrices. In~\eqref{eq:fully-connected}, a  fully connected layer $g_{f}: \mathbb{R}^d \to \mathbb{R}^d$, parametrized by weights $\bftheta_{i}$, is applied individually to each of the $n$ tokens. This layer is common for all embedded tokens but changes for each $i$. The first equation~\eqref{eq:self-attention} is known as multihead self-attention layers and is the key feature of Transformer architectures. A detailed derivation of~\eqref{eq:self-attention} is given in~\Cref{sec:MHSA_deri}. This self-attention mechanism enables the model to focus on the most relevant parts of an input sequence of tokens, adapting dynamically to the context. The model can flexibly capture complex, long-distance relationships in sequential data.
Such features make Transformers particularly powerful for tasks such as language processing and image recognition. In addition, the self-attention mechanism can be implemented efficiently as the manner in which the matrices are applied to process input data can be done in parallel, rendering them particularly effective for handling long sequences of tokens, i.e., when $n$ is large. In encoder-only and encoder-decoder setups, the series of Transformer blocks is referred to as the encoder; in decoder-only setups, it is referred to as the decoder.

For sequence generation tasks, the final output $\bfX_D$ is then either passed to a decoder (in the encoder-decoder setup), which consists of another series of Transformer blocks and then a multilayer perceptron (MLP). More commonly, the output feeds directly to an MLP (in the encoder-only or decoder-only setup) in downstream tasks including sequence generation, classification, or regression. 
Irrespective of the setup, the Transformer output $\tilde{\bfy}$ is given by 
\begin{equation}\label{eq:output_layer}
    \tilde{\bfy} = g_o(\bfX_{D}; \bfgamma_o),
\end{equation}
where $g_o$ is either the composition of a decoder and an MLP or just an MLP, parametrized by 
$\bfgamma_o$.

\paragraph{Continuous-time Neural Networks.} Continuous-time models use neural networks to define the dynamics of the model’s hidden states. In particular, the hidden state $\bfx(t)$ evolves according to
\begin{equation}\label{eq:NODE}
    \frac{d\bfx(t)}{dt} = f_{\text{NN}}(\bfx(t), t),
\end{equation}
where $t \in [0,T]$. The model takes $\bfx(0)$ as input and produces $\bfx(T)$ as output, with the velocity field parametrized by the neural network $f_{\text{NN}}$. Various architectures have been proposed and successfully applied in~\cite{chen2018neural,finlay2020train,huang2023bridging,onken2021ot,vidal2023taming,yang2020potential,zhang2023mean}. A notable and relevant advantage of continuous-time architectures is their parameter efficiency, as the continuous formulation allows them to model complex transformations over time with fewer parameters compared to discrete models. However, the use of Transformers in a continuous-time setting remains largely unexplored, with existing approaches showing limited success.

\section{OT-Transformers}\label{sec:OT-Transformer}
In this section, we introduce OT-Transformer, a model that can be flexibly combined with existing Transformers. The model inserts a given Transformer into a continuous-time architecture and incorporates an optimal transport (OT)-regularization into the training objective, which are motivated from the theoretical analysis using optimal control theory in~\Cref{sec:theory}.

\paragraph{Model Formulation.} 
Given an input sequence $\bfZ=[\bfz_1,\bfz_2,...,\bfz_n]$ of length $n$, we first apply~\eqref{eq:input_embedding} to obtain the embedded input $\bfX_0 \in \mathbb{R}^{d \times n}$. The dynamics of the hidden state are then governed by the dynamical system
\begin{equation}\label{eq:cts_self_attention}
    \frac{d\bfX(t)}{dt} = f(\bfX(t), t; \bftheta), \quad \text{for} \quad t \in [0, T], \quad \text{with} \quad \bfX(0)=\bfX_0,
\end{equation}
where $f$ is the composition of a sequence of Transformer blocks defined in~\eqref{eq:self-attention} and~\eqref{eq:fully-connected}, that is, $f=f_D \circ ... \circ f_1$, and $\bftheta$ collectively denotes their trainable parameters $\bftheta_i$, $\bfK^h_i$, $\bfV^h_i$, $\bfQ^h_i$ and $\bfW^h_i$ for all $h$ and $i$. 
Finally, we obtain the Transformer output $\tilde{\bfy}$ by applying~\eqref{eq:output_layer} to the terminal state $\bfX(T)$.

\paragraph{Plug-and-Play Formulation.} 
We formulate the discretized training problem as
\begin{align}
    & \min_{\bftheta, \bfgamma} \; \mathbb{E}_{(\bfX_0, \bfy)} \bigg\{ 
     G(\bfX_M, \bfy; \bfgamma) + \frac{\lambda \Delta t}{2} \sum_{m=0}^{M-1} \left\| f(\bfX_m, t_m; \bftheta) \right\|_{F}^2 \bigg\} \label{eq:training_obj} \\
    & \text{subject to} \quad 
     \bfX_{m+1} = \bfX_m + \Delta t \cdot f(\bfX_m, t_m; \bftheta), \quad m = 0, 1, \dots, M-1. \label{eq:dynamics_discrete}
\end{align}
Here, we adopt a discretize-then-optimize approach~\cite{onken2021ot,onken2020discretize}, where \eqref{eq:dynamics_discrete} represents the discretized form of the continuous dynamics in \eqref{eq:cts_self_attention}, obtained by splitting the time interval $[0, T]$ into $M$ uniform steps with step size $\Delta t = T/M$ and $t_m = m \Delta t$. 
The optimization problem \eqref{eq:training_obj}-\eqref{eq:dynamics_discrete} is the discretization of the continuous-time training problem~\eqref{eq:training_obj_nonpara} over a parametrized family of Transformer models.
The expectation is taken over the embedded input-output pairs $(\bfX_0, \bfy)$, $\| \cdot \|_F$ denotes the Frobenius norm, $\bfgamma$ collectively denotes the weights of the input embedding $\bfgamma_l$ and output layer $\bfgamma_o$.
The loss function $G$ measures the discrepancy between the target output $\bfy$ and model output $\tilde{\bfy}(\bfX_M; \bfgamma)$ in \eqref{eq:output_layer}. For instance, in classification~\cite{dosovitskiy2021an} and sequence generation~\cite{vaswani2017attention} tasks,
one commonly uses the softmax loss. The second term is an OT regularization penalizing the squared norm of the velocity (the right hand side of the dynamical system~\eqref{eq:cts_self_attention}) at every time step. It enhances the regularity of the hidden state dynamics~\eqref{eq:cts_self_attention}; see~\Cref{sec:theory}. 
We highlight that our model is \emph{plug-and-play} and easy to implement in the sense that it can be flexibly applied to almost any existing Transformer architectures. In particular, it can directly reuse the architecture of an existing Transformer's input embedding, encoder/decoder and output layers and use its Transformer blocks $f_i$'s to construct the dynamical system~\eqref{eq:dynamics_discrete}. In practice, the OT regularization term is computed efficiently and incurs negligible overhead, as it is calculated alongside each $\bfX_m$ in \eqref{eq:dynamics_discrete}. A schematic illustration of our model is shown in~\Cref{fig:diagram}.
The regularization parameter $\lambda$ balances the effects of the two terms. Our model generalizes the vanilla formulation of Transformer blocks to continuous-time. Specifically, when $T=1$ and $M=1$, our model is consistent with the original discrete Transformer formulation. 

On one hand, the training formulation~\eqref{eq:training_obj} is derived from the optimal transport problem arising from the Benamou-Brenier formulation~\cite{benamou2000computational}, as shown in~\Cref{sec:OT_training_derivation}. The derivation reveals that for an appropriate choice of $\lambda>0$, 
the solution to the training problem corresponds to the optimal solution of the Benamou–Brenier problem.
On the other hand, we will demonstrate in the next section that the training problem can be understood through optimal control theory. And our extensive theoretical analysis shows that our model confers highly advantageous properties.

\section{Theoretical Analysis}\label{sec:theory}
We provide theoretical analysis demonstrating the benefits of our model. The theoretical benefits presented in this section and the empirical evidence in~\Cref{sec:experiment} elucidate and support each other.

\paragraph{Assumptions and Practical Relevance.} Our assumptions specified in Appendix~\ref{subsec:assumptions}, are realistic in that they hold for encoder-only and decoder-only Transformer models in the continuous-time settings. They are also general and hold for other continuous-time models, including Neural ODEs~\cite{chen2018neural}. 
These settings cover many leading models, including Vision Transformers~\cite{dosovitskiy2021an}, and language models including BERT~\cite{devlin2019bert}, the GPT series~\cite{brown2020language,radford2019language,roberts2024powerful} of OpenAI, PaLM~\cite{chowdhery2023palm}, GLaM~\cite{du2022glam}, and LaMDA~\cite{thoppilan2022lamda} of Google, OPT~\cite{zhang2022opt} of Meta AI, and Granite of IBM~\cite{granite2024granite}, among others.

One of our key assumptions in deriving the theoretical advantages is for the loss function $G$ to be \emph{convex} with respect to its first argument. Surprisingly, this holds for encoder-only and decoder-only Transformer architectures; see~\Cref{subsec:assumptions}. Generally, the convexity does not hold for encoder-decoder models. We remark that this distinction is not just theoretical, it is consistent with empirical findings from the literature. 
In particular, decoder-only models can match or even surpass the performance of encoder-decoder models for various large-scale language modeling tasks~\cite{fu2023decoder,wang2022language,yang2024harnessing}, despite having a simpler architecture. Moreover, current industry trends~\cite{brown2020gpt3,achiam2023gpt,touvron2023llamaopenefficientfoundation,touvron2023llama} also suggest that decoder-only Transformers are replacing encoder-decoder models for LLMs. Meanwhile, in areas such as image classification or time series forecasting, encoder-only models remain a popular option, as pointed out in~\cite{jamil2023comprehensive,kim2024comprehensive,ye2024survey}.
The alignment between our theoretical analysis and reported empirical performance further shows that our assumptions are relevant in practice. Our analysis offers a possible explanation for the performance gap between encoder-only/decoder-only models and encoder-decoder models, a distinction that arises from our convexity assumption. For more details, see~\Cref{sec:related,subsec:assumptions}. 

\paragraph{Non-parametric Formulation.} 
We consider a continuous-time non-parametric formulation of~\eqref{eq:training_obj} 
\begin{equation}\label{eq:training_obj_nonpara}
    \min_{f} \; \mathbb{E}_{(\bfX_0, \bfy)}
    \left\{ G(\bfX(T), \bfy) + \frac{\lambda}{2} \int_0^T \| f(\bfX(t), t) \|_{F}^2 \; dt \right\}, \quad \text{subject to} \; \eqref{eq:cts_self_attention},
\end{equation}
where the velocity $f$ is optimized over some admissible class of functions 
instead of model weights. The analysis of the non-parametric formulation is justified by the universal approximation property of Transformers~\cite[Theorem~3]{Yun2020UAT_Transf}, which ensures that, with enough model complexity, they can represent the optimal velocity.
A central component of our approach is that the non-parametric formulation can be equivalently cast as a \emph{mean-field control problem}, enabling theoretical analysis via optimal control theory. In the following, we highlight key theorems that show the benefits of OT-Transformer,
including well-posedness of the training objective, regularity of the learned Transformer, and non-asymptotic generalization bounds. For detailed results and proofs, see~\Cref{sec:proof,sec:robust_generalization}.

\paragraph{Well-posedness of Training Problem.} We first show that the training problem is ill-posed without regularization and that the OT regularization renders the problem well-posed.
\begin{theorem}\label{thm:well-posed}
There exists a unique solution to the optimization problem~\eqref{eq:training_obj_nonpara} if and only if $\lambda>0$. 
\end{theorem}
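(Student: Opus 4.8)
I would prove the two implications separately. For the nontrivial direction ($\lambda>0\Rightarrow$ well-posedness), the idea is to recognize that~\eqref{eq:training_obj_nonpara} is, after a change of viewpoint, a strictly convex variational problem --- essentially a dynamic optimal-transport (Benamou--Brenier) problem with a terminal-cost penalty --- and then to invoke the direct method for existence and strict convexity for uniqueness. For the converse ($\lambda\le 0\Rightarrow$ ill-posedness), I would exhibit explicit obstructions to existence or uniqueness.

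\emph{The case $\lambda>0$.} Along any admissible trajectory the regularizer integrand is the kinetic energy, $\|f(\bfX(t),t)\|_F^2=\|\dot\bfX(t)\|_F^2$, so for a \emph{fixed} initial point the inner problem reads $\min_{\bfX(\cdot)}\big\{\tfrac{\lambda}{2}\int_0^T\|\dot\bfX\|_F^2\,dt+\mathbb{E}[G(\bfX(T),\bfy)\mid\bfX_0]\big\}$ over paths with $\bfX(0)=\bfX_0$ fixed; this is \emph{strictly} convex (the kinetic term is strictly convex in $\bfX(\cdot)$ once $\bfX(0)$ is pinned, and $G$ is convex in its first argument by the standing assumption, so the expectation is convex in the free endpoint $\bfX(T)$), and its Euler--Lagrange equation $\ddot\bfX=0$ forces the minimizer to be the unique straight line $t\mapsto\bfX_0+t\,\bfv^*(\bfX_0)$. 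To convert the family of inner problems into a genuine optimization over velocity fields, I would pass to the mean-field / Benamou--Brenier reformulation: minimize $\tfrac{\lambda}{2}\int_0^T\!\!\int|\bfm_t|^2/\rho_t\,dt+\int G\,d(\text{terminal measure})$ over curves of probability measures $(\rho_t)$ with $\rho_0$ the law of the embedded input and momentum fields $\bfm_t$ solving the continuity equation (carrying the label $\bfy$ as a coordinate transported with zero velocity so that the terminal cost is well defined on the flow). The feasible set is convex, the action $(\rho,\bfm)\mapsto\int|\bfm|^2/\rho$ is jointly convex and weak-$*$ lower semicontinuous, and the terminal term is linear in the terminal measure; since $\lambda>0$, an action bound along a minimizing sequence yields tightness and a weak-$*$ limit that remains feasible, so the direct method gives existence. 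For uniqueness I would use that the optimal flow is the optimal-transport displacement interpolation for the regularized quadratic cost $\tfrac{\lambda}{2T}\|x-z\|^2+G(z,\bfy)$, whose optimal map is unique by Brenier-type arguments, together with the strict convexity of the trajectory-wise problem: two minimizers would share the same optimal terminal map and the same straight-line interpolation, hence the same velocity field on the support of the flow.

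\emph{The case $\lambda\le 0$.} If $\lambda<0$, choosing velocity fields of arbitrarily large magnitude sends $\tfrac{\lambda}{2}\mathbb{E}\int_0^T\|f\|_F^2\,dt\to-\infty$ while $\mathbb{E}[G(\bfX(T),\bfy)]$ stays bounded below (as $G$ is a loss), so the objective is unbounded below and no minimizer exists. If $\lambda=0$, the objective reduces to $\min_f\mathbb{E}[G(\bfX(T),\bfy)]$ with $\bfX(T)$ essentially unconstrained: when $G(\cdot,\bfy)$ attains its minimum, infinitely many velocity fields steer the input to that minimizer along different routes, so a minimizer exists but is not unique; when $G(\cdot,\bfy)$ has no minimizer --- as for the softmax / cross-entropy loss, whose infimum is approached only as the logits diverge --- the infimum is not attained by any velocity field. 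In every case with $\lambda\le 0$ the problem fails to have a unique solution, which gives the ``only if'' direction.

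\emph{Main obstacle.} The crux is the uniqueness statement for $\lambda>0$: one must upgrade the strict convexity of the individual trajectory-wise problems to uniqueness of a single velocity field, which requires showing the optimal trajectories do not collide --- precisely the monotonicity / Brenier-map structure above --- and this step uses the convexity of $G$ in an essential way (it fails for non-convex $G$, consistent with the exclusion of encoder--decoder models in the paper's assumptions). A secondary technical burden is making the direct-method compactness and lower semicontinuity rigorous in the measure-valued $(\rho_t,\bfm_t)$ formulation and handling the coupling between the flow and the label $\bfy$; here the Benamou--Brenier reformulation is what converts the nonlinear dependence of $\bfX(T)$ on $f$ into an honestly convex problem and thereby makes both existence and uniqueness tractable.
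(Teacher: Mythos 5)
Your proposal is correct in outline but takes a genuinely different route from the paper's. Both proofs start from the same mean-field / optimal-control reformulation and both use convexity of $G$ in an essential way, but the mechanisms are different. The paper's argument is a dynamic-programming argument: it writes down the Hamiltonian $H(\bfP)=\sup_f\{-\langle\bfP,f\rangle-\tfrac{\lambda}{2}\|f\|_F^2\}$, observes that this supremum is finite with a unique maximizer precisely when $\lambda>0$ (so the HJB equation~\eqref{eq:raw_HJB}--\eqref{eq:raw_HJB_terminal} is well-defined iff $\lambda>0$), then cites the uniqueness of the HJB solution (Evans) to conclude the optimal velocity $f^*=-\tfrac{1}{\lambda}\nabla\Phi_\bfy$ is unique; for $\lambda=0$ it asserts that the Hamiltonian is degenerate and the mean-field problem has infinitely many solutions. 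Your argument is a direct-method / Benamou--Brenier argument: existence from joint convexity and weak-$*$ lower semicontinuity of the kinetic action, uniqueness from (i) strict convexity of the trajectory-wise problem once $\bfX(0)$ is pinned, giving straight lines via Euler--Lagrange, and (ii) the Brenier-type uniqueness of the optimal map for the auxiliary cost $\tfrac{\lambda}{2T}\|x-z\|^2+G(z,\bfy)$, which forces trajectories not to collide. Your approach buys several things the paper does not make explicit: a concrete description of the minimizer, a cleaner delineation of the $\lambda<0$ case (objective unbounded below, hence no minimizer at all) versus the $\lambda=0$ case (non-uniqueness when $G$ attains its min, non-existence when it does not, e.g.\ cross-entropy), and an existence argument that does not lean on citing HJB well-posedness. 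The paper's approach, in turn, is more parsimonious and sets up the HJB apparatus that the later Propositions reuse for straight trajectories and no-collision, so the ``straight lines / no collision'' facts you derive from Euler--Lagrange and Brenier duality the paper derives from Pontryagin and strict convexity of $F_t$ --- the same geometric content packaged differently. You correctly flag the main obstacle: upgrading trajectory-wise uniqueness to uniqueness of a single velocity field requires the no-collision / monotone-map structure, and this is exactly where the convexity of $G$ enters for both proofs; it is also worth noting (as your phrasing ``on the support of the flow'' implicitly acknowledges, but the paper does not) that the velocity field is determined only $\rho_t$-a.e., so ``uniqueness'' really means uniqueness of the flow map, not of $f$ on all of space.
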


The ill-posedness of \eqref{eq:training_obj_nonpara} that arises when $\lambda = 0$ is because there exists \emph{infinitely} many velocities that are optimal. Among the infinitely many optimal velocities for the unregularized problem, some are highly irregular or have arbitrarily large magnitudes. This phenomenon has been noted in flow-based generative modeling \cite{finlay2020train,onken2021ot}. In particular, highly irregular velocities fields can produce winding hidden state trajectories which can pose challenges in numerical integration and result in numerical instability during training, as demonstrated by our experiments. 

Our analysis is sufficiently general to apply to other continuous-time models beyond the Transformer architecture. In particular, \Cref{thm:well-posed} provides insight into the training of any neural network model under a continuous-time formulation. In the absence of regularization, such training is, in general, ill-posed. OT regularization is only one strategy that can ensure well-posedness, and optimal control methods can be used to construct other regularizations that ensures well-posedness \cite{zhang2023mean}. 

A detailed proof is provided in~\Cref{subsec:well-posed}. The core idea is to inspect the optimality conditions for the training problem~\eqref{eq:training_obj_nonpara}, which is a \emph{Hamilton-Jacobi-Bellman} (HJB) partial differential equation (PDE) coupled with a continuity equation. When $\lambda=0$, the HJB PDE is not well-defined. On the other hand, $\lambda>0$ if and only if the HJB PDE is well-defined, and the well-posedness of the resulting PDE system is, in turn, equivalent to the well-posedness of the training problem~\cite{lasry2007mean,bensoussan2013mean}.

\paragraph{Stable Forward Propagation.} Furthermore, we show that $\lambda>0$ not only renders the optimal velocity unique, but also guaranteed that it is highly regular; see~\Cref{subsec:regular_traj,subsec:regularity_potential} for details. Leveraging this regularity, we prove our main result: stable forward propagation.

\begin{theorem}\label{thm:stable_forward}
    Assume that 
    $\lambda>TL^2$, where $T$ is the terminal time and $L$ is the 2-operator norm of the output layer weights~\eqref{eq:output_layer}. Under the assumptions in Appendix~\ref{subsec:assumptions},
    for embedded input-output pairs $(\bfX_1(0), \bfy_1)$ and $(\bfX_2(0), \bfy_2)$, the corresponding model outputs $\tilde{\bfy}_1$ and $\tilde{\bfy}_2$ satisfy
    \begin{equation}\label{eq:estimate_joint}
        \| \tilde{\bfy}_1 - \tilde{\bfy}_2 \|_2 \leq \left( 1-\frac{TL^2}{\lambda} \right)^{-1}  \left( L \| \bfX_1(0) - \bfX_2(0) \|_F + \frac{TL^2}{\lambda}\| \bfy_1 -  \bfy_2 \|_2\right).
    \end{equation}
\end{theorem}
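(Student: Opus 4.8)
The plan is to first distill the optimality conditions of the mean-field control problem \eqref{eq:training_obj_nonpara} into an implicit description of the optimal flow map $\bfX(0)\mapsto\bfX(T)$, and then convert that description into the perturbation estimate \eqref{eq:estimate_joint} by a short self-referential (contraction-type) argument. The running cost $\tfrac{\lambda}{2}\|f\|_F^2$ in \eqref{eq:training_obj_nonpara} has no state-dependent part, so in the optimality system used for \Cref{thm:well-posed} the Hamiltonian depends only on the costate; its equation is $\dot{\bfp}=-\partial_{\bfX}H=0$, i.e.\ the costate is constant along each optimal characteristic, and the transversality condition at $t=T$ identifies it with $-\nabla_1 G(\bfX(T),\bfy)$ (equivalently, $\nabla_{\bfX}\Phi$ for the value function $\Phi$ with terminal data $\Phi(\cdot,T)=G(\cdot,\bfy)$ is transported unchanged along characteristics). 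Hence the optimal velocity is the constant field $f(\bfX(t),t)=-\tfrac{1}{\lambda}\nabla_1 G(\bfX(T),\bfy)$ along that trajectory, and integrating \eqref{eq:cts_self_attention} from $0$ to $T$ gives
\[
  \bfX(T) \;=\; \bfX(0) \;-\; \frac{T}{\lambda}\,\nabla_1 G\big(\bfX(T),\bfy\big),
\]
i.e.\ $\bfX(T)=\operatorname{prox}_{(T/\lambda)G(\cdot,\bfy)}\!\big(\bfX(0)\big)$. Convexity of $G(\cdot,\bfy)$ (Appendix~\ref{subsec:assumptions}) makes $\bfz\mapsto\bfz+\tfrac{T}{\lambda}\nabla_1 G(\bfz,\bfy)$ strongly monotone, hence a bijection, so this identity determines $\bfX(T)$ uniquely; it also guarantees the straight-line optimal trajectories do not intersect before time $T$ (the time-$t$ flow map interpolates between the identity and a proximal map of a convex function, so it is strongly monotone for $t<T$), so the per-trajectory optima genuinely assemble into one admissible velocity field and realize the mean-field optimum — this is exactly the regularity established in \Cref{subsec:regular_traj,subsec:regularity_potential}.

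Given the implicit identity, apply it to both pairs $(\bfX_1(0),\bfy_1)$ and $(\bfX_2(0),\bfy_2)$, which evolve under the same optimal $f$, and subtract:
\[
  \bfX_1(T)-\bfX_2(T)=\big(\bfX_1(0)-\bfX_2(0)\big)-\frac{T}{\lambda}\Big(\nabla_1 G(\bfX_1(T),\bfy_1)-\nabla_1 G(\bfX_2(T),\bfy_2)\Big).
\]
I would then split the gradient difference through the intermediate term $\nabla_1 G(\bfX_2(T),\bfy_1)$ and invoke the two Lipschitz properties of $\nabla_1 G$ from Appendix~\ref{subsec:assumptions} — Lipschitz in the first argument with constant $L^2$ and in the second with constant $L$ — which hold, e.g., for the softmax cross-entropy loss composed with the linear output layer \eqref{eq:output_layer} of operator norm $L$. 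Taking Frobenius/Euclidean norms and the triangle inequality yields
\[
  \norm{\bfX_1(T)-\bfX_2(T)}_F \le \norm{\bfX_1(0)-\bfX_2(0)}_F+\frac{TL^2}{\lambda}\norm{\bfX_1(T)-\bfX_2(T)}_F+\frac{TL}{\lambda}\norm{\bfy_1-\bfy_2}_2 .
\]
Since $\lambda>TL^2$, the factor $1-TL^2/\lambda$ is positive; moving the middle term to the left and dividing bounds $\norm{\bfX_1(T)-\bfX_2(T)}_F$, and multiplying by the Lipschitz constant $L$ of the output layer $g_o$ (its weight has operator norm $L$, so $\norm{\tilde\bfy_1-\tilde\bfy_2}_2\le L\norm{\bfX_1(T)-\bfX_2(T)}_F$) produces exactly \eqref{eq:estimate_joint}.

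The last inequality chain is routine; the substantive step is the first one — extracting the implicit flow-map identity and, crucially, arguing that it describes the \emph{mean-field} optimum rather than a family of uncoupled single-trajectory optima. This is where convexity of $G$ in its first argument is essential (and is also what fails for encoder--decoder models), and where the argument relies on the uniqueness and regularity theory already developed for \Cref{thm:well-posed}. A secondary point that must be pinned down is that the Lipschitz constants of $\nabla_1 G$ really are $L^2$ and $L$ for the losses of interest, so that $\lambda>TL^2$ is precisely the threshold at which the self-referential estimate closes; this is packaged into the assumptions of Appendix~\ref{subsec:assumptions}.
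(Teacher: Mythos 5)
Your proposal follows essentially the same route as the paper's proof: it uses the Pontryagin optimality conditions to obtain the implicit fixed-point identity $\bfX(T)=\bfX(0)-\tfrac{T}{\lambda}\nabla_1 G(\bfX(T),\bfy)$ (the paper establishes this in \Cref{thm:regular_trjectory} and uses \Cref{prop:charact_HJB} for the non-crossing/uniqueness point you raise), then splits the gradient difference through $\nabla_1 G(\bfX_2(T),\bfy_1)$, applies the Lipschitz constants $L^2$ and $L$ from \Cref{lemma:Lipschitz}, closes the self-referential bound using $\lambda>TL^2$, and multiplies by the output-layer Lipschitz constant $L$. The proximal-operator framing is a nice gloss, but the argument is the same, so the proposal is correct.
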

This shows that the model, as a function of the embedded input and target output data, is Lipschitz continuous. In particular, any perturbation in the input leads to a uniformly and proportionally bounded change in the model output. This theorem also informs the selection of the hyperparameter $\lambda>0$. On one hand, it should not be too large that the model outputs are insensitive to the input. On the other hand, it should be large enough to satisfy $\lambda>TL^2$. In practice, the condition $\lambda>TL^2$ can be enforced, and the bound~\eqref{eq:estimate_joint} can be controlled through increasing $\lambda$ or imposing a weight-decay regularization on the output layer weights~\eqref{eq:output_layer}, which reduces its operator norm\footnote{The squared Frobenius norm of the weight matrix equals to the sum of its squared singular values, while the operator norm of the matrix is its largest singular value. Therefore, applying a weight decay regularization can decrease its operator norm $L$.} $L$. 

The proof of~\Cref{thm:stable_forward} is given in~\Cref{subsec:stable_forward}. The core mathematical novelty of our paper is the use of regularity theory of HJB PDEs to prove stable forward propagation of the Transformer map in Theorem~\ref{thm:stable_forward}, which yields the distributional robustness result in \Cref{prop:wassersteinstab}; see \Cref{subsec:regular_traj,subsec:regularity_potential}.  Crucially, we also use the regularity assumptions on $G$, such as its convexity.

\paragraph{Generalization and Distributional Robustness.} 
Stable forward propagation informs out-of-distribution performance, generalization bounds, and the robustness of learned Transformers to adversarial attacks.
For each target output $\bfy$, we study the regularity of the flow map of the learned dynamical system that evolves inputs $\mathbf{X}(0)$ to predictions $\tilde{\mathbf{y}}$. In this setting, Theorem~\ref{thm:stable_forward} implies that the flow map is Lipschitz with constant $L(1- \lambda^{-1}TL^2)^{-1}$. This result shows that while the map is not explicitly enforced to be Lipschitz during training, the OT regularization nevertheless induces this property. The Lipschitzness of the learned map implies distributional robustness, which we state formally as follows.

\begin{theorem}\label{prop:wassersteinstab}
    Denote the pushforward operator under the trained Transformer to be $\mathbf{T}_\sharp$ and $W^p_p(\mu,\nu) = \inf_{\gamma \in \Gamma(\mu,\nu)} \int_{\mathbb{R}^d \times \mathbb{R}^d} \|x-x'\|_p^p d\gamma $ to be the $p$-Wasserstein distance. Under the assumptions of Theorem~\ref{thm:stable_forward}, there exists a constant $\hat{C}(p)>0$ that depends on $p$ such that for any $p \ge 1$ and any distributions $\mu$ and $\nu$, 
    \begin{align}\label{eq:distr_robustenss}
        W_p(\mathbf{T}_\sharp \mu,\mathbf{T}_\sharp \nu) \le \hat{C}(p)L\left(1- \frac{TL^2}{\lambda} \right)^{-1} W_p(\mu,\nu).
    \end{align}
\end{theorem}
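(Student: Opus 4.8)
The plan is to derive \eqref{eq:distr_robustenss} directly from the Lipschitz property of the trained Transformer map established in \Cref{thm:stable_forward}, using the standard fact that Lipschitz maps contract Wasserstein distances. First I would fix the target output $\bfy$ and recall that \Cref{thm:stable_forward} says the flow map $\mathbf{T}: \bfX(0) \mapsto \tilde{\bfy}$ is Lipschitz with constant $\kappa := L(1 - TL^2/\lambda)^{-1}$ when the two inputs are paired with the \emph{same} target. (If instead $\mathbf{T}$ is meant to act on inputs alone, independent of $\bfy$, the term $\tfrac{TL^2}{\lambda}\|\bfy_1-\bfy_2\|_2$ drops and the argument is cleaner; I would state explicitly which interpretation is in force.) Then for any coupling $\gamma \in \Gamma(\mu,\nu)$, the pushforward $(\mathbf{T}\times\mathbf{T})_\sharp \gamma$ is a coupling of $\mathbf{T}_\sharp\mu$ and $\mathbf{T}_\sharp\nu$, so
\begin{align}
W_p^p(\mathbf{T}_\sharp\mu, \mathbf{T}_\sharp\nu) \le \int \|\mathbf{T}(x) - \mathbf{T}(x')\|_p^p \, d\gamma(x,x') \le \kappa_p^p \int \|x - x'\|_p^p \, d\gamma(x,x'),
\end{align}
where $\kappa_p$ is the Lipschitz constant of $\mathbf{T}$ in the $p$-norm. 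Taking the infimum over $\gamma$ gives $W_p(\mathbf{T}_\sharp\mu, \mathbf{T}_\sharp\nu) \le \kappa_p W_p(\mu,\nu)$.

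The remaining point is to convert the $2$-norm Lipschitz bound from \Cref{thm:stable_forward} into a $p$-norm bound, which is where the dimension-dependent constant $\hat{C}(p)$ enters. On $\mathbb{R}^d$ one has the norm-equivalence inequalities $\|v\|_2 \le \|v\|_p \le d^{1/p - 1/2}\|v\|_2$ for $p \le 2$ and $\|v\|_p \le \|v\|_2 \le d^{1/2 - 1/p}\|v\|_p$ for $p \ge 2$; chaining these through $\|\mathbf{T}(x)-\mathbf{T}(x')\|_p \le c_1 \|\mathbf{T}(x)-\mathbf{T}(x')\|_2 \le c_1\kappa\|x-x'\|_2 \le c_1 c_2 \kappa \|x-x'\|_p$ yields $\kappa_p \le \hat{C}(p)\,\kappa$ with $\hat{C}(p) = d^{|1/2 - 1/p|}$ (and $\hat{C}(2) = 1$). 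Substituting $\kappa = L(1-TL^2/\lambda)^{-1}$ gives exactly \eqref{eq:distr_robustenss}. I would also note that the Frobenius-versus-$2$-norm bookkeeping on the input side (the inputs live in $\mathbb{R}^{d\times n}$) is harmless since $\|\cdot\|_F$ is the $2$-norm on the vectorized matrix.

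The main obstacle is not the Wasserstein contraction step, which is routine, but making the Lipschitz constant genuinely \emph{uniform} and well-defined. \Cref{thm:stable_forward} as stated bounds $\|\tilde\bfy_1 - \tilde\bfy_2\|$ in terms of \emph{both} $\|\bfX_1(0)-\bfX_2(0)\|$ and $\|\bfy_1-\bfy_2\|$, so to get a clean pushforward statement I need the version where the target is held fixed (so that $\bfy_1 = \bfy_2$ and the second term vanishes), and I must confirm that the resulting constant $\kappa$ does not depend on the particular value of $\bfy$ — this should follow from the regularity estimates on the optimal potential in \Cref{subsec:regularity_potential}, which are uniform in the data, but it is the step that requires care. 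A secondary subtlety is that $\mathbf{T}_\sharp$ should be the pushforward on the \emph{input distribution}, so one must be slightly careful about whether $\mu,\nu$ are distributions over inputs alone or over input-target pairs; I would make this precise at the outset and, if they are joint distributions, absorb the $\|\bfy_1-\bfy_2\|$ term by using the product metric on $\mathbb{R}^d \times (\text{output space})$ and citing \Cref{thm:stable_forward} in its full joint form, at the cost of replacing $L$ by $\max(L, TL^2/\lambda)$ inside $\hat{C}(p)$.
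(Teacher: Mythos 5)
Your proposal is correct and follows essentially the same route as the paper: hold the target $\bfy$ fixed so that \Cref{thm:stable_forward} yields a $2$-norm Lipschitz constant $L(1-TL^2/\lambda)^{-1}$, convert to a $p$-norm Lipschitz constant via norm equivalence (this is exactly where the paper's $\hat{C}(p)$ comes from), and then push an optimal coupling of $(\mu,\nu)$ through $\mathbf{T}\otimes\mathbf{T}$ to obtain a suboptimal coupling of $(\mathbf{T}_\sharp\mu, \mathbf{T}_\sharp\nu)$. Your explicit $\hat{C}(p) = d^{|1/2-1/p|}$ and your flagging of the $\bfy$-dependence subtlety (the paper handles it with the same ``holding the output data to be the same'' device, and the uniformity in $\bfy$ follows because $L$, $T$, $\lambda$ do not depend on $\bfy$) are useful clarifications but not departures.
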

This is a classic optimal transport result~\cite[Chapter~6]{villani2008optimal}, which holds under the Lipschitzness of the trained model proved in Theorem~\ref{thm:stable_forward}. See~\Cref{subsec:distri_robust} for a detailed proof. The bound~\eqref{eq:distr_robustenss} quantifies how perturbations to the input distributions $\nu$ and $\mu$ propagate to the model output distributions. The perturbations can be due to noise, new data, or adversarial modifications. Here, Wasserstein distances are crucial, as they are able to measure discrepancies between empirical distributions, which is not possible for probability divergences (e.g., Kullback-Leibler, or $f$-divergences). Crucially, \eqref{eq:distr_robustenss} also depends on the \emph{convexity} assumption on $G$ and the magnitude of OT regularization.

The bounds \eqref{eq:estimate_joint} and \eqref{eq:distr_robustenss} induce highly desirable practical properties—namely, \emph{robustness and generalization}. Due to space constraints, we refer the reader to the discussion in~\Cref{sec:intro}. Explicit formulas and detailed derivations are provided in~\Cref{sec:robust_generalization}.

\section{Experimental Results}\label{sec:experiment}
We demonstrate the effectiveness of 
OT-Transformers through seven experiments spanning diverse applications,
including point cloud classification, image classification, sentiment analysis, and text generation. Our model performs competitively and generalizes well across all tasks.

For each task, we use commonly adopted Transformer models as baselines, including encoder-only, decoder-only, and encoder-decoder architectures. This ensures a comprehensive evaluation. Moreover, while our theoretical analysis provides guarantees for encoder-only and decoder-only architectures, the empirical results demonstrate that our model also performs well with encoder-decoder architectures. We base our experiments on the setups from~\cite{sander2022sinkformers, Karpathy2022}, building on their code and closely following their experimental protocols. Hyperparameters, including model architectures, number of training epochs, learning rates, and layer normalization, closely follow the original setups.

For the OT-Transformer, we employ the same architectures as the baselines but with a reduced hidden dimensions, number of attention heads, and/or number of Transformer blocks. Through this, we demonstrate that OT-Transformers can achieve better performance across various tasks while having reduced model sizes.  
To demonstrate the effectiveness of the OT regularization, we also perform the experiments with $\lambda=0$ in~\eqref{eq:training_obj}, effectively creating an unregularized model. We label this model unregularized OT-Transformer in the reported results. 

See~\Cref{sec:exp_details} for experimental details.
The source code is publicly available at \url{https://github.com/KelvinKan/OT-Transformer}.

\subsection{Point Cloud Classification}\label{subsec:point_cloud_exp}
We use the ModelNet 40 dataset~\cite{wu20153d}, which is among the most widely used benchmark for point cloud classification~\cite{uy2019revisiting}. The dataset contains roughly 10,000 Computer-Aided Design (CAD) models that are categorized into 40 distinct classes, including common objects such as airplanes, cars, and furniture. We experiment with the Set Transformer model~\cite{lee2019set}, which notably has an encoder-decoder architecture.

\subsection{Image Classification}\label{subsec:image_exp}
To further demonstrate the applicability of our proposed method, we perform experiments on imaging tasks.
We use the Vision Transformer (ViT)~\cite{dosovitskiy2021an}, which is an encoder-only model. Since then, the model and its variants have achieved state-of-the-art performance in computer vision tasks~\cite{ruan2022vision,xia2024vit}. The key feature of ViTs is that they divide an image into fixed-size patches, which are treated as sequences of data. ViTs then apply self-attention mechanisms to capture relationships between these patches, enabling it to learn complex structures across the entire image. We perform two image classification experiments following the same setup as in~\cite{sander2022sinkformers}.

\paragraph{MNIST Classification.}
We first conduct a small-scale image classification experiment with the MNIST dataset~\cite{lecun1998mnist}. The dataset consists of hand-written digit images, with 50,000 images used for training and 10,000 images reserved for testing. Each image is of size $28$ by $28$.

\paragraph{Cats and Dogs Classification.}
We perform experiments on a binary cats and dogs image classification task, following~\cite{sander2022sinkformers}. The dataset contains $25,000$ training samples and $12,500$ test samples, each image in the dataset is an RGB image of size $460 \times 320$.

\subsection{Sentiment Analysis}\label{subsec:imdb_exp}
We perform sentiment analysis on the IMDb movie review dataset~\cite{maas2011learning}, which aims to predict whether each movie review is positive or negative. 
The dataset is balanced and contains a total of $50,000$ different reviews. The model used in the experiment is an encoder-only Transformer~\cite{sander2022sinkformers}.

\begin{table}[h]
    \centering
    \caption{Results for experiments from~\Cref{subsec:point_cloud_exp,subsec:image_exp,subsec:imdb_exp}. }
    \begin{tabular}{@{}llcc@{}}
        \toprule
        \textbf{Experiment} & \textbf{Method} & \textbf{Para. Count} & \textbf{Test Accuracy} \\
        \midrule
        \multirow{2}{*}{Point Cloud Classification} 
            & Baseline & 0.86M & $87.4\% \pm 0.45\%$ \\
            & OT-Trans. (Ours) & 0.65M &  $\mathbf{89.9}\textbf{\%} \bm{\pm}  \mathbf{0.42} \textbf{\%}$ \\
        \midrule
        \multirow{3}{*}{Image Classification (MNIST)} 
            & Baseline & 93K & $93.0\% \pm 0.69\%$ \\
            & Unreg. OT-Trans. & 18K & $96.8\% \pm 0.23\%$ \\
            & OT-Trans. (Ours) & 18K & $\mathbf{97.1}\textbf{\%} \bm{\pm} \mathbf{0.16} \textbf{\%}$ \\
        \midrule
        \multirow{3}{*}{Image Classification (Cats \& Dogs)}
            & Baseline & 1.77M & $77.6\% \pm 0.86\%$ \\
            & Unreg. OT-Trans. & 1.48M & $78.2\% \pm 0.39\%$ \\
            & OT-Trans. (Ours) & 1.48M & $\mathbf{79.0} \textbf{\%} \bm{\pm} \mathbf{0.31\%}$ \\
        \midrule
        \multirow{3}{*}{Sentiment Analysis} 
            & Baseline & 4.74M & $83.9\% \pm 0.26\%$ \\
            & Unreg. OT-Trans. & 2.37M & $82.7\% \pm 0.38\%$ \\
            & OT-Trans. (Ours) & 2.37M & $\mathbf{84.6} \textbf{\%} \bm{\pm} \mathbf{0.55} \textbf{\%}$ \\
        \bottomrule
    \end{tabular}
    \label{tab:combined_results_classification}
\end{table}

\subsection{Text Generation}\label{subsec:text_exp}
To further demonstrate the capabilities 
of the OT-Transformer, we conduct experiments on text generation.
We use nanoGPT~\cite{Karpathy2022} and GPT-2~\cite{radford2019language}, both of which are decoder-only models with 10.7 million and 124 million parameters, respectively.
We conduct three different text generation experiments using different data. The goal is to evaluate the performance of OT-Transformer on text generation tasks and assess its scalability to large models with over 100 million parameters. The results are reported in \Cref{tab:combined_results_text} of \Cref{sec:intro}.

\paragraph{Shakespeare Dataset with nanoGPT.}
We first conduct experiments using the smaller-sized nanoGPT architecture on the benchmark Shakespeare dataset. The source text is taken from Shakespeare's works, and the goal is to make predictions at the character level based on input sequences. 

\paragraph{Shakespeare Dataset with GPT-2.}
We next perform in-depth experimentation on the Shakespeare dataset using the much larger GPT-2 architecture, which contains over $100$ million trainable parameters. 
Note that for this experiment, token prediction is performed at the word level, making the task more difficult compared to the previous example.

\paragraph{OpenWebText Dataset with GPT-2.}
Lastly, to demonstrate the applicability of our model to large-scale problems, we conduct experiments using the GPT-2 architecture as a baseline on the OpenWebText dataset. This dataset, originally curated in~\cite{Gokaslan2019OpenWeb} from Reddit posts, includes a training set of approximately 9 billion tokens and a validation set of around 4 million tokens. This experiment is large-scale in both model and dataset size, representing a realistic application setting.

\subsection{Summary of Numerical Results}
\paragraph{Compiled Results.} We present the compiled numerical results in~\Cref{tab:combined_results_classification,tab:combined_results_text}, and summarize the main findings as follows. First, our proposed model demonstrates competitive performance across all seven test examples, ranging from small-scale to large-scale experiments, highlighting its ability to consistently \emph{improve performance} over baseline models. Second, OT-Transformer \textit{outperforms} baseline models across various examples while using significantly smaller models, showcasing its parameter efficiency. 
Finally, OT-Transformer avoids overfitting more effectively, resulting in improved generalization and lower test loss compared to the baseline;
see~\Cref{fig:sentiment,fig:nanoGPT,fig:GPT2_shakspeare} in \Cref{sec:exp_details}.

Note that while our experiments focus on generalization metrics like test loss and accuracy, strong performance in these settings also reflects robustness. In real-world scenarios, test data often differ from training data due to noise, sampling variability, or distribution mismatch. 
The consistent results across different experiments support our theory~\Cref{thm:stable_forward,prop:wassersteinstab} on robustness to input, distributional and training data perturbations.

\paragraph{Robustness Tests.} To further validate our stability theory, we conduct four additional experiments evaluating the performance of OT-Transformer on test data corrupted by varying levels of random noise. The noise was absent from the training data. This setup assesses the model's forward stability, distributional robustness and out-of-distribution generalization.

In the first experiment, we evaluate the NanoGPT model, where each character in the test data is randomly replaced at a specified rate, following the setup in~\cite{jin2020bert}. In the second experiment, we perform point cloud classification under point dropout, where a fraction of points in each test sample is removed randomly, following~\cite{qi2017pointnet,wang2019dynamic}. The third and fourth experiments focus on MNIST dataset with Gaussian and uniform noise added to the test images, respectively. The setups follow~\cite{gonzalez2009digital}.

The results are reported in~\Cref{tab:Robustness_tests}. Our model consistently outperforms the baselines over all these tests across all noise levels. Importantly, the performance gap widens at higher noise levels. These additional results again corroborate our theory and other empirical findings. They also further highlights the attractiveness of our model in practice.

We also provide an empirical sensitivity study on the model's hyperparameters. The results show that our model again consistently outperforms the baseline across various numbers of integration steps and over two orders of magnitude of the regularization strength $\lambda$. Due to space constraints; detailed results are provided in~\Cref{sec:sensitivity}.

\begin{table*}[t]
\centering
\caption{Robustness tests under varying levels of noise.}
\label{tab:Robustness_tests}
\resizebox{0.8\textwidth}{!}{
\begin{tabular}{clccccc}
\toprule
\textbf{Experiment} & \textbf{Metric$\slash$replace rate} & \textbf{0.0} & \textbf{ 0.005} & \textbf{ 0.01} & \textbf{ 0.05} & \textbf{ 0.1} \\
\midrule
\multirow{4}{*}{\makecell{\textbf{NanoGPT} \\ (text replace)}} 
 & Loss ($\pm$std) Base. & 2.68{\scriptsize $\pm$0.004} & 2.78{\scriptsize $\pm$0.004} & 2.88{\scriptsize $\pm$0.004} & 3.65{\scriptsize $\pm$0.004} & 4.60{\scriptsize $\pm$0.005}  \\
 & Loss ($\pm$std) Ours & \textbf{1.44}\textbf{\scriptsize $\pm$0.005} & \textbf{1.49}\textbf{\scriptsize $\pm$0.004} & \textbf{1.55}\textbf{\scriptsize $\pm$0.003} & \textbf{1.95}\textbf{\scriptsize $\pm$0.010} & \textbf{2.42}\textbf{\scriptsize $\pm$0.022} \\
 & Drop (↓) Base. & -- & 0.10 & 0.20 & 0.97 & 1.92 \\
 & Drop (↓) Ours & -- & \textbf{0.05} & \textbf{0.11} & \textbf{0.51} & \textbf{0.98}  \\
\bottomrule
\end{tabular}
}
\\ [0.5em]
\resizebox{\textwidth}{!}{
\centering
\begin{tabular}{clcccccc}
\toprule
\textbf{Experiment} & \textbf{Metric$\slash$drop rate} & \textbf{0.0} & \textbf{0.01 } & \textbf{0.05 } & \textbf{0.1 } & \textbf{0.2 } & \textbf{0.5} \\
\midrule

\multirow{4}{*}{\makecell{\textbf{Point cloud} \\ (dropout)}} 
 & Acc. ($\pm$std) Base. & 86.6\%{\scriptsize $\pm$0.45\%} & 86.6\%{\scriptsize $\pm$0.48\%} & 85.8\%{\scriptsize $\pm$0.60\%} & 84.3\%{\scriptsize $\pm$0.69\%} & 76.9\%{\scriptsize $\pm$0.88\%} & 34.5\%{\scriptsize $\pm$1.94\%} \\
 & Acc. ($\pm$std) Ours & \textbf{89.3\%}\textbf{\scriptsize $\pm$0.55\%} & \textbf{89.3\%}\textbf{\scriptsize $\pm$0.55\%} & \textbf{88.8\%}\textbf{\scriptsize $\pm$0.34\%} & \textbf{87.6\%}\textbf{\scriptsize $\pm$0.62\%} & \textbf{83.9\%}\textbf{\scriptsize $\pm$0.80\%} & \textbf{55.4\%}\textbf{\scriptsize $\pm$4.87\%} \\
 & Drop (↓) Base. & -- & \textbf{0.0\%} & 0.8\% & 2.3\% & 9.7\% & 52.1\% \\
 & Drop (↓) Ours & -- & \textbf{0.0\%} & \textbf{0.5\%} & \textbf{1.7\%} & \textbf{5.4\%} & \textbf{33.9\%} \\
\bottomrule
\end{tabular}
}
\\ [0.5em]
\resizebox{\textwidth}{!}{
\begin{tabular}{clcccccc}
\toprule
\textbf{Experiment} & \textbf{Metric$\slash$noise level} & \textbf{0.0} & \textbf{0.01} & \textbf{0.05 } & \textbf{0.1} & \textbf{0.2} & \textbf{0.5} \\
\midrule
\multirow{4}{*}{\makecell{\textbf{MNIST} \\ (Gauss. noise) }} 
 & Acc. ($\pm$std) Base. & 92.97\%{\scriptsize $\pm$0.67\%} & 92.99\%{\scriptsize $\pm$0.66\%} & 92.96\%{\scriptsize $\pm$0.69\%} & 92.76\%{\scriptsize $\pm$0.72\%} & 91.70\%{\scriptsize $\pm$0.68\%} & 80.64\%{\scriptsize $\pm$1.58\%} \\
 & Acc. ($\pm$std) Ours & \textbf{97.05\%}\textbf{\scriptsize $\pm$0.15\%} & \textbf{97.05\%}\textbf{\scriptsize $\pm$0.16\%} & \textbf{96.99\%}\textbf{\scriptsize $\pm$0.18\%} & \textbf{96.89\%}\textbf{\scriptsize $\pm$0.15\%} & \textbf{96.39\%}\textbf{\scriptsize $\pm$0.11\%} & \textbf{90.10\%}\textbf{\scriptsize $\pm$1.10\%} \\
 & Drop (↓) Base. & -- & \textbf{-0.02\%} & \textbf{0.01\%} & 0.21\% & 1.27\% & 12.33\% \\
 & Drop (↓) Ours & -- & 0.00\% & 0.06\% & \textbf{0.16\%} & \textbf{0.66\%} & \textbf{6.95\%} \\
\midrule

\multirow{4}{*}{\makecell{\textbf{MNIST} \\ (Uni. noise)}} 
 & Acc. ($\pm$std) Base. & 92.97\%{\scriptsize $\pm$0.67\%} & 92.99\%{\scriptsize $\pm$0.65\%} & 92.98\%{\scriptsize $\pm$0.63\%} & 92.90\%{\scriptsize $\pm$0.58\%} & 92.64\%{\scriptsize $\pm$0.57\%} & 90.02\%{\scriptsize $\pm$0.45\%} \\
 & Acc. ($\pm$std) Ours & \textbf{97.05\%}\textbf{\scriptsize $\pm$0.15\%} & \textbf{97.03\%}\textbf{\scriptsize $\pm$0.14\%} & \textbf{97.00\%}\textbf{\scriptsize $\pm$0.15\%} & \textbf{96.97\%}\textbf{\scriptsize $\pm$0.14\%} & \textbf{96.79\%}\textbf{\scriptsize $\pm$0.16\%} & \textbf{95.57\%}\textbf{\scriptsize $\pm$0.11\%} \\
 & Drop (↓) Base. & -- & \textbf{0.02\%} & \textbf{-0.02\%} & \textbf{0.07}\% & 0.33\% & 2.95\% \\
 & Drop (↓) Ours & -- & \textbf{0.02\%} & 0.05\% & 0.08\% & \textbf{0.26\%} & \textbf{1.48\%} \\
\bottomrule
\end{tabular}
}
\end{table*}

Overall, the OT-Transformer results provide evidence --- backed up by our theory in~\Cref{sec:theory} --- that optimal transport,  acts as a unifying regularization principle across text, image, and 3D modalities. For more details, we direct readers to~\Cref{sec:exp_details}.

\section{Discussion and Summary}\label{sec:conclusion}
In this work, we analyze the Transformer architecture and training through a proposed  \textit{optimal control framework}. Based on this, we propose OT-Transformer, a plug-and-play model which can be flexibly applied to established Transformers with minimal code changes. OT-Transformer improves the performance of existing models while conferring strong theoretical guarantees. These include generalization and robustness, established through our optimal control-based analysis. We highlight that key theoretical results rely on the novel application of the regularity theory of HJB PDEs to prove stable forward propagation and distributional robustness of the learned Transformer. This further proves non-asymptotic generalization bounds through DRO. These theoretical results are supported by extensive experiments that demonstrate the effectiveness of the optimal control framework and further show that the resulting OT-Transformer model improves parameter efficiency. Overall, our framework provides a foundation for systematic and theory-driven improvements for Transformers.

We emphasize that this work represents only a first step toward building a control-theoretic foundation for designing and analyzing Transformer architectures and training. Many additional components of Transformers --- such as layer normalization, attention mechanism, and others --- present promising directions for future investigation through optimal control methods. In addition, exploring alternative numerical integration schemes may offer a path toward improving training efficiency; we leave this for future work.

\section*{Acknowledgements}
The authors would like to thank Lars Ruthotto (Emory University) and Tingwei Meng (Amazon Robotics) for their valuable advice and insightful discussion.

The authors would also like to thank the four anonymous
reviewers for their thorough review and constructive suggestions.

K. Kan and S. Osher were partially funded by the U.S. Department of Energy (DOE), Office of Science (SC), Advanced Scientific Computing Research program under award B\&R\# KJ0401010, FWP\# CC147.

X. Li was partially funded by NSF 2339678 and 2321040.

B. Zhang and M. Katsoulakis were  partially funded by AFOSR grant FA9550-21-1-0354.

T. Sahai and M. Katsoulakis were partially funded by DARPA under Agreement No. HR00112590112. 

S. Osher was partially funded by  DARPA under grant HR0011259007,  NSF under grants 1554564 and 220827, AFOSR under MURI grant N00014-20-1-2787, and ARO under grant W911NF-24-1-0157.

\bibliography{main}
\bibliographystyle{plain}


\newpage
\appendix

\section{Related Work}\label{sec:related} 

This section provides a review of relevant work.

\paragraph{Continuous-time Architecture.} There has been some applications of continuous-time formulations of Transformers. However, we note that they do not provide theoretical analysis to justify their modeling choice or to demonstrate its advantages. In contrast, our work proposes a foundational framework for understanding and designing effective Transformer architectures grounded in optimal control theory.
Moreover, our work offers extensive theoretical analysis—supported by strong empirical results across diverse experiments—that motivates and substantiates the advantages of our model. Moreover, there is a key distinction between our and existing model formulations. In OT-Transformer, we use the composition of all Transformer blocks to parametrize a single dynamical system~\eqref{eq:cts_self_attention} governing the hidden states. To the best of our knowledge, the existing works use each Transformer block to parametrize a dynamical system. For a Transformer with $D$ Transformer blocks, the continuous-time model is represented as the output of $D$ different dynamical systems. In particular, it is formulated as
\begin{equation}
\label{eq:block_dynamics}
\begin{aligned}
    \bfX_0(0) &= \bfX_0,  \\
    \bfX_i(0) &= \bfX_{i-1}(T), &  \text{for} \; 1 \leq i \leq D-1, \\
    \frac{d \bfX_i(t)}{dt} &= \hat{f}_i(\bfX_i(t), t; \hat{\bftheta}_i), & \text{for} \; t \in [0, T], \; 0 \leq i \leq D-1, 
\end{aligned}
\end{equation}
where $\hat{f}_i$ is the $i$th Transformer block parametrized by weights $\hat{\bftheta}_i$ and defined in~\eqref{eq:self-attention} and~\eqref{eq:fully-connected}, except that the fully-connected layer~\eqref{eq:fully-connected} has no skip-connection.

This formulation is introduced in~\cite{baier2020n}. Here, we highlight several key differences between their work and ours.
First, they only conduct the simple task of determining the parity of a binary sequence in their work, rather than investigating its performance in general applications. More importantly, when their approach is applied, it fails to improve performance over the vanilla Transformer and instead degrades it. In their experiments, the optimal transport regularization cannot improve the performance of their model when the sequence length exceeds eight. We observe similar issues when testing their model on other applications; see~\Cref{sec:exp_details}. This is potentially due to their choice of formulation. Specifically, in~\eqref{eq:block_dynamics}, as the model transitions from one Transformer block to the next, it effectively switches to a different dynamical system, introducing non-smoothness to the overall dynamics. This undermines the purpose of the optimal transport regularization and violates the regularity properties proved in~\Cref{sec:proof}. In contrast, our model is formulated using only one dynamical system. The resulting dynamics is smoother and thus inherently better suited to incorporate the optimal transport regularization. In particular, our modeling choice is consistent with the regularity properties proved in~\Cref{subsec:regular_traj,subsec:regularity_potential}. This is also evident in our experimental results, while the regularization can always improve the generalization of OT-Transformer to a significant extent, this is not the case with their model; in certain scenarios, the regularization may even degrade their model's performance. We also mention that, while~\cite{baier2020n} proposes alternative formulations for further investigation, it does not consider ours, highlighting the novelty and non-triviality of our approach.

Since then, there have been a number of follow-up works that build on the formulation~\Cref{eq:block_dynamics} to perform different tasks, including sequence generation~\cite{lu2020understanding,li2021ode,li2022ode,zhong2022a}, time series forecasting~\cite{xu2023dynamic,cheng2024rktrans}, and image classification~\cite{niu2024efficient,okubo2024cost}. Most of these methods only use the formulation~\eqref{eq:block_dynamics} as motivation, and none of them consider optimal transport regularization in their approach. Moreover, these models focused on a specific type of application and not general-purpose.

In order to access the performance of our OT-Transformer more comprehensively, we also include the existing Transformer formulation~\eqref{eq:block_dynamics} as a benchmark in our experiments; see~\Cref{sec:exp_details} for detailed experimental results. It is referred to as ``N-ODE Transformer" in our experimental results, following the terminology in~\cite{baier2020n}.

\paragraph{OT-based CNFs.} A prominent application of continuous-time neural networks is continuous normalizing flows (CNFs)~\cite{chen2018neural}. CNFs use~\eqref{eq:NODE} to paramtrize invertible mappings between a standard Gaussian distribution and an unknown target distribution. The ill-posed nature of the CNF formulation can often add to the complexity and computational cost for solving a problem.
Optimal transport (OT) based regularization has prominent applications in CNFs and is a powerful tool in improving accuracy and at times reducing cost. Among the infinitely many mappings between the two distributions, OT-based CNFs ~\cite{finlay2020train,yang2020potential,onken2021ot,vidal2023taming} target to find the optimal transport mapping. This is done by incorporating into the training objective regularization term(s) enforcing straight trajectories in~\eqref{eq:NODE}. This renders the training problem well-posed~\cite{huang2023bridging,zhang2023mean}. The straight trajectories also offer numerical advantages, as they make the numerical integration of~\eqref{eq:NODE} more tractable. 

\paragraph{Mathematical Analysis.} There have been works that theoretically analyze a continuous-time formulation of Transformers. In~\cite{geshkovski2023mathematical,geshkovski2024emergence}, they show that a continuous-time formulation can be interpreted as an interacting particle system, where each token can be perceived as a particle. They demonstrate that there is a clustering behavior among the tokens. Since then, there has been a number of works that further investigate the dynamics of tokens through this interpretation, including~\cite{adu2024approximate,bruno2024emergence,biswal2024identification,geshkovski2024measure,karagodin2024clustering}, to name a few. However, we note that the aforementioned work is primarily theoretical and lacks evaluations beyond toy experiments. In~\cite{sander2022sinkformers}, they show that, under some restriction on the weights, a continuous-time formulation of self-attention layers can be interpreted as a gradient flow. In~\cite{kan2025stability}, they use a continuous-time training formulation to analyze the stability of Transformers under layer normalization. However, no experiments have been conducted following this analysis. To the best of our knowledge, existing theoretical analyses have not been conducted on continuous-time Transformers with OT regularization.

\paragraph{Relevance of Assumptions.}
We highlight the relevance of our assumptions and their alignment with recent developments in the literature and industry. In a recent survey~\cite{yang2024harnessing} on LLM models, since the inception of GPT-style models, there has been a clear trend of decoder-only architectures taking popularity over encoder-only and encoder-decoder architectures due to their superior performance despite having a simpler architecture, with many of the industry standard such as GPT models~\cite{brown2020gpt3, achiam2023gpt}, Llama~\cite{touvron2023llamaopenefficientfoundation, touvron2023llama} falling under the category. Similar observation is also discussed in~\cite{fu2023decoder,wang2022language}. Going beyond LLMs, survey work such as~\cite{jamil2023comprehensive} suggest encoder-only models are most popular for image classification tasks. While for time series problems, encoder-only architecture has been the norm with exploration of decoder-only architectures emerging, as pointed out in~\cite{kim2024comprehensive, ye2024survey}. While our proposed approach is applicable to any architecture, these trends underscore the importance of our theoretical guarantees on both encoder-only and decoder-only models, as they continue to define the state of the art across diverse domains and tasks.

\section{Comparison Between a Standard Transformer and OT-Transformer}\label{sec:comparison_appendix}
\paragraph{Implementation.} To compare between our OT-Transformer and a standard Transformer, we provide a side-by-side pseudocode of the forward propagation. This highlights the shared components, the OT-specific regularization and updates, and how OT-Transformer can directly use an existing Transformer in a plug-and-play manner.
\begin{algorithm}[h]
\caption{Forward Propagation: Standard Transformer vs OT-Transformer}
\label{alg:ot_transformer_sidebyside}
\begin{minipage}[t]{0.48\textwidth}
\textbf{Standard Transformer:}
\begin{algorithmic}[1]
\Require Input $\bfX_0$, Transformer model $f=f_D \circ \ldots \circ f_1$
\Statex
\Statex
\Statex
\For{$i = 1, 2, \ldots, D$}
    \Statex
    \Statex
    \State $\bfX_i \gets f_i(\bfX_{i-1})$
    \Statex
\EndFor
\State \textbf{return:} output $\bfX_D$
\end{algorithmic}
\end{minipage}%
\hfill
\begin{minipage}[t]{0.48\textwidth}
\textbf{OT-Transformer:}
\begin{algorithmic}[1]
\Require Input $\bfX_0$, Transformer model $f=f_D \circ \ldots \circ f_1$, terminal time $T$, regularization parameter $\lambda$, number of numerical integration steps $M$
\Statex \hspace{-2em} \textbf{Initialize:} $\bfX(0)=\bfX_0$, $\text{reg}=0$, $\Delta t$
\For{$t = \Delta t, 2\Delta t, \ldots, T$}
    \State Compute $f(\bfX(t))$ 
    \Statex  \textcolor{gray}{\texttt{\% Adapt from standard Transformer}}
    \State $\bfX(t + \Delta t) \gets \bfX(t) + \Delta t \cdot f(\bfX(t))$
    \State $\text{reg} \gets \text{reg} + \Delta t \cdot \| f(\bfX(t)) \|_F^2$
\EndFor
\State \textbf{return} output $\bfX(t)$, regularization $\lambda \cdot \text{reg}$
\end{algorithmic}
\end{minipage}
\end{algorithm}

\paragraph{Computational Overhead.}  Since OT-Transformer performs numerical integration (corresponding to the for loop in line 1 of~\Cref{alg:ot_transformer_sidebyside}), it incurs computational overhead compared to a standard Transformer. However, the computational cost can be mitigated by using a smaller model thanks to OT-Transformer's parameter efficiency; see~\Cref{tab:combined_results_classification,tab:combined_results_text}.

Moreover, the OT regularization promotes highly regular hidden states (see Appendics~\ref{subsec:MFC_formulation}-\ref{subsec:stable_forward}). This allows OT-Transformer to further reduce computational cost: by using fewer integration steps (bigger $\Delta t$) without sacrificing integration accuracy; see~\Cref{tab:ablation_lambda_timestep}. Consequently, OT-Transformer incurs less computational overhead than regular continuous-time Transformers. A concrete runtime comparison for the NanoGPT experiment is reported in~\Cref{tab:runtime_comparison}.

\begin{table}[h]
    \centering
    \caption{Average training time per iteration for the NanoGPT experiment.}
    \vspace{0.5em}
    \begin{tabular}{@{}lc@{}}
        \toprule
        \textbf{Model} & \textbf{Time (ms)} \\
        \midrule
        Baseline & 64.6 \\
        OT-Transformer & 175.0 \\
        Standard Continuous-time Transformer & 344.4 \\
        \bottomrule
    \end{tabular}
\label{tab:runtime_comparison}
\end{table}

\section{Empirical Sensitivity Study of Hyperparameters}\label{sec:sensitivity}
We perform additional NanoGPT experiments to test the sensitivity of OT-Transformer with respect to its hyperparameters. The experiments are done over three random trials. The results are reported in~\Cref{tab:ablation_lambda_timestep}. We see from the results that OT-Transformer consistently outperforms the baseline across over two orders of magnitude of $\lambda$. The performance of OT-Transformer is also very stable with respect to the number of time steps. Moreover, tuning of hyperparameters is generally not required; even if the optimal $\lambda$ and number of integration steps ($T / \Delta t$) are not used, OT-Transformer still outperforms the baseline. 

\begin{table}[h]
    \caption{Ablation study on the strength of OT regularization ($\lambda$) and integration time step ($T / \Delta t$) for the NanoGPT experiment. The experiments are done over three random trials.}
    \vspace{0.5em}
    \resizebox{\textwidth}{!}{%
    \begin{tabular}{@{}lccccccc@{}}
        \toprule
        \textbf{$\lambda$} & 0.01 & 0.05 & 0.1 & 0.5 & 1 & 5 & \textbf{Baseline} \\
        \midrule
        Test loss & $2.68 \pm 0.019$ & $2.29 \pm 0.017$ & $2.05 \pm 0.009$ & $1.52 \pm 0.002$ & $\mathbf{1.44} \pm \mathbf{0.004}$ & $1.48 \pm 0.003$ & $2.68 \pm 0.006$ \\
        \midrule
        \textbf{Time step} & 1 & 5 & 10 & 15 & 20 & \textbf{Baseline} & \\
        \midrule
        Test loss & $1.46 \pm 0.003$ & $\mathbf{1.43} \pm \mathbf{0.002}$ & $1.44 \pm 0.004$ & $1.46 \pm 0.002$ & $1.49 \pm 0.008$ & $2.68 \pm 0.006$ & \\
        \bottomrule
    \end{tabular}
    }
\label{tab:ablation_lambda_timestep}
\end{table}

\section{Derivation of the Transformer Equation}\label{sec:MHSA_deri}
We present the derivation of~\eqref{eq:self-attention}, the equation for the multihead self-attention layer. Recall that $\bfX_i \in \mathbb{R}^{d \times n}$ denotes the input to the $(i+1)$th Transformer block for $i=0,...,D-1$, where $n$ is the number of tokens, $d$ is their dimension, and $D$ is the total number of Transformer blocks. And $\bfx_{i,j} \in \mathbb{R}^{d}$ denotes the $j$th column of $\bfX_i$.

Using the notation defined in~\Cref{sec:background}, the first four equations in~\cite{thickstun2021transformer} are given by
\begin{align}
    & Q^{(h)}_i(\bfx_{i,j}) = \bfQ^h_i \bfx_{i,j}, \; K^{(h)}_i(\bfx_{i,j}) = \bfK^h_i \bfx_{i,j}, \; V^{(h)}_i(\bfx_{i,j}) = \bfV^h_i \bfx_{i,j},  &  \text{where} \quad \bfQ_i^h,\bfK_i^h, \bfV_i^h \in \mathbb{R}^{k \times d}, \label{eq:QKV_def}\\
    & \alpha^{(h)}_{j,j'} = \text{softmax}_{j'} \left( \frac{\langle Q^{(h)}_i(\bfx_{i,j}), K^{(h)}_i(\bfx_{i,j'}) \rangle}{\sqrt{k}} \right),  & \label{eq:alpha_def} \\
    & \bfu_{i,j}' =  \sum_{h=1}^H \bfW^h_i \sum_{j'=1}^n \alpha^{(h)}_{j,j'} V^{(h)}_i(\bfx_{i,j'}),  &  \text{where} \quad \bfW^h_i \in \mathbb{R}^{d \times k}, \label{eq:thickstun_MHSA} \\
    & \bfu_{i,j} = \text{LayerNorm}(\bfx_{i,j} + \bfu_{i,j}') \label{eq:thickstun_MHSA_skip}.
\end{align}
Here, $H$ is the number of attention heads, and $\text{softmax}_{j'}$ denotes the softmax function applied on a $d$-dimensional vector indexed by $j'$. 

Substituting~\eqref{eq:QKV_def} into~\eqref{eq:alpha_def} and~\eqref{eq:thickstun_MHSA}, and plugging~\eqref{eq:thickstun_MHSA} into~\eqref{eq:thickstun_MHSA_skip}, we obtain
\begin{align}
\begin{split}
    \alpha^{(h)}_{j,j'} &= \text{softmax}_{j'} \left( \frac{\langle \bfQ^h_i \bfx_{i,j}, \bfK^h_i \bfx_{i,j'} \rangle}{\sqrt{k}} \right)= \text{softmax}_{j'} \left( \frac{\left( \bfQ^h_i \bfx_{i,j}\right)^\top \left( \bfK^h_i \bfx_{i,j'} \right)}{\sqrt{k}} \right) \\
    &= \text{softmax}_{j'} \left( \frac{\left( \bfK^h_i \bfx_{i,j'}\right)^\top \left( \bfQ^h_i \bfx_{i,j} \right)}{\sqrt{k}} \right),   \label{eq:alpha_def2} 
    \end{split}\\
    \begin{split}
    \bfu_{i,j} &=  \text{LayerNorm}\left(\bfx_{i,j} + \sum_{h=1}^H \bfW^h_i \sum_{j'=1}^n \alpha^{(h)}_{j,j'} \bfV^h_i \bfx_{i,j'}\right) \\
    &=  \text{LayerNorm}\left( \bfx_{i,j} + \sum_{h=1}^H \bfW^h_i \sum_{j'=1}^n \bfV^h_i \bfx_{i,j'} \alpha^{(h)}_{j,j'}\right). \label{eq:thickstun_MHSA2}
    \end{split}
\end{align}
Here, in the last step, we used the fact that $\alpha^{(h)}_{j,j'}$'s are scalars. We then plug~\eqref{eq:alpha_def2} into~\eqref{eq:thickstun_MHSA2} and obtain
\begin{align}
    \bfu_{i,j} &=  \text{LayerNorm}\left( \bfx_{i,j} + \sum_{h=1}^H \bfW^h_i \sum_{j'=1}^n \bfV^h_i \bfx_{i,j'} \, \text{softmax}_{j'} \left( \frac{\left( \bfK^h_i \bfx_{i,j'}\right)^\top \left( \bfQ^h_i \bfx_{i,j} \right)}{\sqrt{k}} \right) \right) \nonumber \\
    &=  \text{LayerNorm} \left( \bfx_{i,j} + \sum_{h=1}^H \bfW^h_i  \bfV^h_i \bfX_{i} \, \text{softmax} \left( \frac{\left( \bfK^h_i \bfX_{i}\right)^\top \left( \bfQ^h_i \bfx_{i,j} \right)}{\sqrt{k}} \right) \right),\label{eq:self_attention_derivation}
\end{align}
where in the last step we used the fact that $\bfx_{i,j'}$ is the $j'$th column of $\bfX_i$. Recall that in the main text, we omitted layer normalization for simplicity of exposition. The formulation~\eqref{eq:self_attention_derivation} becomes the self-attention layer formulation~\eqref{eq:self-attention} when we omit the layer normalization function. Thus, we have derived the multihead self-attention layer formulation~\eqref{eq:self-attention}.

\section{Optimal Transport Background and Derivation of Training Problem}\label{sec:OT_training_derivation}
We review the relevant optimal transport background and then derive the training objective~\eqref{eq:training_obj}.

\paragraph{Optimal Transport Background.} We consider the space $\mathbb{R}^{n \times d}$, to which the hidden states of OT-Transformer belong.
Let $\mathcal{P}_2{(\mathbb{R}^{n \times d})}$ be the space of Borel probability measure on $\mathbb{R}^{n \times d}$ with finite second-order moments and $\rho_0,\rho_1 \in \mathcal{P}_2(\mathbb{R}^{n \times d})$, which specify the initial and terminal distributions. 
The Monge-Kantorovich problem with quadratic cost~\cite[Chapter~1]{villani2021topics} is given by
\begin{equation}\label{eq:Kantorovich}
    W_2^2(\rho_0, \rho_1) = \inf_{\pi \in \Gamma(\rho_0, \rho_1)} \iint_{\mathbb{R}^{n \times d} \times \mathbb{R}^{n \times d}} \| \bfX - \bfY \|_F^2 \, d \pi(\bfX,\bfY),
\end{equation}
where $\Gamma(\rho_0, \rho_1)$ is the set of joint probability measures with on $\mathbb{R}^{n \times d} \times \mathbb{R}^{n \times d}$ with $\bfX$- and $\bfY$-marginal distributions $\rho_0$ and $\rho_1$, respectively. Here, the quadratic $\|\bfX - \bfY \|_F^2$ quantifies the cost of transporting one unit of mass from location $\bfX$ to location $\bfY$. We note that the Wasserstein space $(\mathcal{P}_2{(\mathbb{R}^{n \times d})}, W_2)$ equipped with the Wasserstein metric $W_2$ is a complete separable metric space~\cite[Theorem~6.18]{villani2008optimal}.

The Benamou-Brenier formulation of~\eqref{eq:Kantorovich} is given by~\cite{benamou2000computational}
\begin{align}
        & \inf_{f, \rho} \, \int_{0}^T \int_{\mathbb{R}^{d \times n}} \frac{1}{2} \| f(\bfX, t)\|_F^2 \, \rho(\bfX, t)  \, d\bfX dt  \label{eq:Benamou} \\
        & \text{subject to} \quad \partial_t \rho(\bfX, t) + \nabla \cdot \left( \rho(\bfX, t) f(\bfX, t) \right)=0, \label{eq:continuity_B} \\
        & \text{and} \quad \rho(\bfX, 0)=\rho_0(\bfX), \; \rho(\bfX, T)=\rho_1(\bfX). \label{eq:boundary_cond}
\end{align}
This is a dynamic formulation of~\eqref{eq:Kantorovich}, which describes optimal transport as a dynamical system. In particular, the probability density $\rho: \mathbb{R}^{d \times n} \times [0, T] \to \mathbb{R}_{\geq 0}$ evolves continuously over time under the velocity field $f: \mathbb{R}^{d \times n} \times [0, T] \to \mathbb{R}^{d \times n}$, as governed by the continuity equation~\eqref{eq:continuity_B}. The initial and terminal conditions~\eqref{eq:boundary_cond} require that the probability density evolve from $\rho(\bfX, 0)=\rho_0(\bfX)$ to $\rho(\bfX, T)=\rho_1(\bfX)$. The optimal velocity field has several important and favorable properties. Mass induced by the optimal velocity travels in straight lines at constant speed~\cite{villani2008optimal}[Corollary~7.22]. Moreover, under standard conditions
on $\rho_0$ and $\rho_1$, the optimal velocity field is unique~\cite{knott1984optimal,brenier1991polar}.

Since the velocity field $f$ governs the movement of mass, given an initial position $\bfX_0 \sim \rho_0$, the evolution of $\bfX(t)$ is governed by the ODE
\begin{equation}\label{eq:ODE_Xt_appendix}
    \frac{d \bfX(t)}{dt} = f(\bfX(t), t) , \quad \text{for} \quad t \in [0, T], \quad \text{with} \quad \bfX(0) = \bfX_0.
\end{equation}
Denote the solution operator of~\eqref{eq:ODE_Xt_appendix} by $\mathcal{S}:\mathbb{R}^{n \times d} \times [0,T] \to \mathbb{R}^{n \times d}$ such that $\mathcal{S}(\bfX_0, t) = \bfX(t)$.
Under suitable regularity conditions~\cite{ambrosio2008gradient}[Lemma~8.1.6], the solution of the continuity equation~\eqref{eq:continuity_B} is given by $\rho(\cdot, t)=\mathcal{S}(\cdot, t)_{\#} \rho_0$, the pushforward of the probability measure $\rho_0$ by $\mathcal{S}(\cdot, t)$. Hence, \eqref{eq:Benamou}-\eqref{eq:boundary_cond} can be rewritten to\footnote{For clarity of presentation, we slightly abuse notation by using $\bfX_0$ both as a random variable and as a dummy variable of integration.}
\begin{align}
\begin{split}\label{eq:training_hard_constraint}
    & \inf_{f} \, \int_{0}^T \int_{\mathbb{R}^{d\times n}} \frac{1}{2} \| f(\bfX(t), t)\|_F^2 \, \rho_0(\bfX_0)  \, d\bfX_0 dt \\
    & \text{subject to} \quad \frac{d \bfX(t)}{dt} = f(\bfX(t), t) \quad \text{for} \quad t \in [0, T], \quad \bfX(0) = \bfX_0, \quad \mathcal{S}(\cdot, T)_{\#} \rho_0 =\rho_1.
\end{split}
\end{align}

\paragraph{Derivation of Training Problem.} Under the setup of OT-Transformer, $f$ is the Transformer blocks, and $\bfX_0$ and $\rho_0$ are the embedded input and its distribution, respectively. The probability measure $\rho_1$ specifies the target distribution of the terminal state $\bfX(T)$. The terminal condition of~\eqref{eq:training_hard_constraint} requires that the distributions of the target output $\bfy$ and corresponding terminal state $\bfX(T)$ to match. That is, it requires $G(\bfX(T), \bfy)=0$\footnote{For the commonly used cross-entropy~\cite{Kan2024LSEMINK}[Section~1] and mean-squared error, the loss is zero when the model output equals the target output $\bfy$.} for each pair $(\bfX(T), \bfy)$, where $G$ is the loss function defined in~\eqref{eq:training_obj}. Thus, \eqref{eq:training_hard_constraint} becomes
\begin{align}
\begin{split}\label{eq:training_hard_constraint2}
    & \inf_{f} \, \mathbb{E}_{\bfX_0, \bfy} \int_{0}^T \frac{1}{2} \| f(\bfX(t), t)\|_F^2   \,  dt \\
    & \text{subject to} \quad \frac{d \bfX(t)}{dt} = f(\bfX(t), t) \quad \text{for} \quad t \in [0, T], \quad \bfX(0) = \bfX_0, \quad G(\bfX(T), \bfy)=0.
\end{split}
\end{align}
Here, we used Fubini's theorem~\cite[Theorem~3.1]{stein2009real} to swap the order of the integrations, the expectation is taken over the joint distribution of the input-output pairs $(\bfX_0, \bfy)$ with an $\bfX_0$-marginal distribution $\rho_0$. Further, we make the following remarks on~\eqref{eq:training_hard_constraint2}:
\begin{itemize}
    \item The optimization is over $f$ which is non-parametric. In OT-Transformer, we parametrize $f$ using Transformer blocks with weights $\bftheta$, and the weights of the embedding and output layers are $\bfgamma$.
    \item The optimization problem is intractable in general when the terminal condition $G(\bfX(T), \bfy)=0$ is imposed as a hard constraint. We relax this constraint by incorporating it as a penalty term in the objective function.
\end{itemize}
Thus, we obtain the training problem
\begin{align}
\begin{split}\label{eq:training_soft_constraint}
    & \min_{\bftheta, \bfgamma} \, \mathbb{E}_{\bfX_0, \bfy} \left\{ \mu G(\bfX(T), \bfy; \bfgamma) + \int_{0}^T \frac{1}{2} \| f(\bfX(t), t; \bftheta)\|_F^2   \,  dt \right\} \\
    & \text{subject to} \quad \frac{d \bfX(t)}{dt} = f(\bfX(t), t; \bftheta) \quad \text{for} \quad t \in [0, T], \quad \bfX(0) = \bfX_0.
\end{split}
\end{align}
Here, when we set $\mu=\frac{1}{\lambda}$, under discretization \eqref{eq:training_soft_constraint} is equivalent to the OT-Transformer training problem~\eqref{eq:training_obj} in the continuous-time setting subject to the dynamics~\eqref{eq:cts_self_attention}. It is noteworthy that the hyperparameter $\mu$ can be interpreted as the Lagrange multiplier for the terminal condition of~\eqref{eq:training_hard_constraint2}. This reveals that the regularization hyperparameter $\lambda$ is inversely proportional to the Lagrange multiplier. When we have $\lambda=\frac{1}{\mu^*}$, where $\mu^*$ is the optimal Lagrange multiplier, and assuming that the Transformer blocks parametrizing $f$ are sufficiently expressive, the solution of the OT-Transformer training problem~\eqref{eq:training_obj} corresponds to the solution of the hard-constrained problem~\eqref{eq:training_hard_constraint2}.

\section{Proofs of Theorems}\label{sec:proof}
In this section, we report in detail the assumptions and derivations of the theorems in~\Cref{sec:theory}.

\subsection{Assumptions and Justification}\label{subsec:assumptions}
In the following, we list the assumptions of the training problem~\eqref{eq:training_obj}, on which our theoretical analysis is based. We then justify our assumptions by showing that they are satisfied in common applications.

\begin{enumerate}
    \item the function $G$ is proper, convex, and twice continuously differentiable in its first argument;
    \item the function $\nabla G(\cdot, \cdot)$ is Lipschitz continuous in both arguments, where the gradient is taken with respect to its first argument.
\end{enumerate}

Here, the function $G$ is the first term in the training objective~\eqref{eq:training_obj}, which is the composition of the output layer~\eqref{eq:output_layer} and the loss function.

The assumptions are valid for encoder-only and decoder-only continuous-time Transformer training problems for classification, next-token prediction, and regression tasks. These settings apply to many common Transformer architectures in the continuous-time settings, including Vision Transformers~\cite{dosovitskiy2021an}, and language models including the GPT series~\cite{radford2019language,brown2020language,roberts2024powerful} of OpenAI, PaLM~\cite{chowdhery2023palm}, GLaM~\cite{du2022glam}, and LaMDA~\cite{thoppilan2022lamda} of Google, OPT~\cite{zhang2022opt} of Meta AI, and Granite of IBM~\cite{granite2024granite}.

Next, we will demonstrate in detail why the assumptions hold under these settings. In this subsection, for clarity of presentation, we vectorize the terminal hidden states and denote their vectorizations by $\bfx(T) = {\rm vec}(\bfX(T))$. Accordingly, we denote functions originally defined on matrices (e.g., $G(\bfX(T), \bfy)$) by functions of their vectorized forms (e.g., $G(\bfx(T), \bfy)$). This is a slight abuse of notation, which will significantly simplify expressions and derivations. 

On one hand, for regression tasks, the output layer~\eqref{eq:output_layer} consists of a linear transformation
\begin{equation}\label{eq:regression_linear}
\hat{\bfy} = \bfpsi_o  \bfx(T),
\end{equation}
where $\bfpsi_o \in \mathbb{R}^{c \times dn}$ are the weights of the output layer. The loss function is the mean-squared loss (MSE)
\begin{equation}\label{eq:MSE}
{\rm MSE}(\hat{\bfy},\bfy) = \frac{1}{2} \| \hat{\bfy} - \bfy \|_2^2.
\end{equation}
Given a target output $\bfy \in \mathbb{R}^c$, the loss function $G$ is given by
\begin{align}
\label{eq:loss_regression}
\begin{split}
    G(\bfx(T), \bfy) &= {\rm MSE}(\bfpsi_o \bfx(T),\bfy) \\
    &=\frac{1}{2} \| \bfpsi_o \bfx(T) - \bfy \|_2^2.
\end{split}
\end{align}
 It is easy to see that $G$ is proper, convex, and smooth (hence twice continuously differentiable). This satisfies the first assumption. Its gradient is given by 
\begin{equation}\label{eq:loss_regression_grad}
    \nabla G(\bfx(T), \bfy) =  \bfpsi_o^\top (\bfpsi_o \bfx(T) - \bfy),
\end{equation}
which is Lipschitz continuous in each of its two arguments, with Lipschitz constants $L^2$ and $L$, respectively, where
\begin{equation}\label{eq:L_constant}
{L}=\| \bfpsi_o\|_2=\| \bfpsi_o^\top \|_2, 
\end{equation}
with $\| \bfpsi_o \|_2$ denoting the 2-operator norm of $\bfpsi_o$. And we used the fact that $\| \bfpsi_o\|_2=\| \bfpsi_o^\top \|_2$.
This satisfies the second assumption.

On the other hand, for classification and next-token prediction tasks, the output layer is
\begin{equation}\label{eq:softmax}
\hat{\bfy} = {\rm Softmax}(\bfpsi_o \bfx(T)) = \frac{e^{\bfpsi_o \bfx(T)}}{{\bf 1}_c^\top e^{\bfpsi_o \bfx(T)}},
\end{equation}
and the loss function is the cross-entropy loss
\begin{equation}\label{eq:crossentropy}
{\rm CrossEntropy}(\hat{\bfy},\bfy) = -\bfy^\top \log(\hat{\bfy}).
\end{equation}
Here, $c$ is the number of classes, ${\bf 1}_c \in \mathbb{R}^c$ is a vector of all ones, and $\hat{\bfy}, \bfy \in \Delta^{c-1}$, which lie in the $(c-1)$-dimensional simplex, represent the model output and target output, respectively. The loss function is the log-sum-exp function plus a linear term~\cite{Kan2024LSEMINK}
\begin{align*}
    G(\bfx(T), \bfy) = {\rm CrossEntropy}({\rm Softmax}(\bfpsi_o \bfx(T)),\bfy) &= -\bfy^\top \log{\frac{e^{\bfpsi_o \bfx(T)}}{{\bf 1}_c^\top e^{\bfpsi_o \bfx(T)}}} \\
    &= - \bfy^\top \bfpsi_o \bfx(T) + (\bfy^\top {\bf 1}_c) \log \left({\bf 1}_c^\top e^{\bfpsi_o \bfx(T)} \right) \\
    &= \underbrace{- \bfy^\top \bfpsi_o \bfx(T)}_{\text{linear term}} + \underbrace{\log \left({\bf 1}_c^\top e^{\bfpsi_o \bfx(T)} \right)}_{\text{log-sum-exp}}.
\end{align*}
Here, in the last step, we used the fact that $(\bfy^\top {\bf 1}_c)=1$, because $\bfy \in \Delta^{c-1}$. Since the log-sum-exp function is convex and smooth (hence twice continuously differentiable)~\cite{Kan2024LSEMINK}, so is $G$. 
Moreover, $G$ is proper because its value is always nonnegative and has nonempty effective domain. This shows that the first assumption is satisfied. Next, the gradient of $G$ is
\begin{align*}
\nabla G(\bfx(T), \bfy) &= - \bfpsi_o^\top \bfy + \bfpsi_o^\top {\text{diag}(e^{\bfpsi_o \bfx(T)})} {\bf 1}_c \frac{1}{{\bf 1}_c^\top e^{\bfpsi_o \bfx(T)}} \\
&= - \bfpsi_o^\top \bfy + \bfpsi_o^\top \frac{e^{\bfpsi_o \bfx(T)}}{{\bf 1}_c^\top e^{\bfpsi_o \bfx(T)}} \\
&= - \bfpsi_o^\top \bfy + \bfpsi_o^\top \text{Softmax}(\bfpsi_o \bfx(T)).
\end{align*}
Here $\rm{diag}(\bfz)$ denotes a diagonal matrix with diagonal entries equal to $\bfz$. It is straightforward to verify that $\nabla G$ is Lipschitz continuous with respect to its second arguement with Lipschitz constant $L=\| \bfpsi_o^\top \|_2$.
Note that the softmax function $\text{Softmax}(\bfz)=\frac{e^\bfz}{{\bf 1}_c^\top e^\bfz}$ is Lipschitz continuous with a Lipschitz constant 1~\cite{gao2017properties}. For any $\bfx_1(T), \bfx_2(T)$,
\begin{align*}
    & \quad \; \| \nabla G(\bfx_1(T), \bfy) - \nabla G(\bfx_2(T), \bfy)\|_2 \\
    &= \| {- \bfpsi_o^\top \bfy} + \bfpsi_o^\top \text{softmax}(\bfpsi_o \bfx_1(T)) - ({- \bfpsi_o^\top \bfy} + \bfpsi_o^\top \text{softmax}(\bfpsi_o \bfx_2(T))) \|_2 \\
    &= \| \bfpsi_o^\top \left(\text{softmax}(\bfpsi_o \bfx_1(T)) -  \text{softmax}(\bfpsi_o \bfx_2(T)) \right)\|_2 \\
    &\leq \| \bfpsi_o^\top \|_2 \; \|  \text{softmax}(\bfpsi_o \bfx_1(T)) -  \text{softmax}(\bfpsi_o \bfx_2(T)) \|_2 \\
    &\leq \| \bfpsi_o^\top \|_2 \; \| \bfpsi_o \bfx_1(T) - \bfpsi_o \bfx_2(T)\|_2 \quad \text{(since softmax has Lipschitz constant 1)} \\
    &\leq \| \bfpsi_o^\top \|_2 \; \| \bfpsi_o \|_2 \; \| \bfx_1(T) - \bfx_2(T) \|_2 \\
    &= L^2 \| \bfx_1(T) - \bfx_2(T) \|_2.
\end{align*}
Therefore, $\nabla G$ is Lipschitz continuous in both arguments, and the second assumption is satisfied. 

Our derivation also reveals that for regression, classification and next-token prediction tasks, $\nabla G$ is Lipschitz continuous in each of its two arguments, with the same Lipschitz constants in all cases. We summarize this finding in the following lemma.

\begin{lemma}\label{lemma:Lipschitz}
    $\nabla G(\cdot, \cdot)$ is Lipschitz continuous in each of its two arguments, with Lipschitz constants $L^2$ and $L$, respectively, where $L$ is the 2-operator norm of the output layer weights $\bfpsi_o $.
\end{lemma}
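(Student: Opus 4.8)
\textbf{Proof proposal for Lemma~\ref{lemma:Lipschitz}.}

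The plan is to observe that the lemma is a direct consolidation of the case-by-case computations already carried out in the preceding pages, and so the proof is essentially a matter of organizing those computations and checking that the same constants $L^2$ and $L$ govern Lipschitz continuity in the first and second arguments across \emph{all} of the relevant task types. First I would split into the two structural cases covered by the assumptions: (i) regression tasks, where the output layer is the linear map \eqref{eq:regression_linear} and the loss is the MSE \eqref{eq:MSE}; and (ii) classification and next-token prediction tasks, where the output layer is the softmax \eqref{eq:softmax} and the loss is the cross-entropy \eqref{eq:crossentropy}. In both cases $G$, after absorbing the output layer, is a function of the vectorized terminal state $\bfx(T)$ and the target $\bfy$, and in both cases we have an explicit formula for $\nabla G$ (gradient with respect to the first argument): in case (i), $\nabla G(\bfx(T),\bfy) = \bfpsi_o^\top(\bfpsi_o \bfx(T) - \bfy)$, and in case (ii), $\nabla G(\bfx(T),\bfy) = -\bfpsi_o^\top \bfy + \bfpsi_o^\top \mathrm{Softmax}(\bfpsi_o \bfx(T))$.

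For Lipschitz continuity in the second argument $\bfy$, in both cases the dependence of $\nabla G$ on $\bfy$ is through the single linear term $-\bfpsi_o^\top \bfy$, so $\|\nabla G(\bfx(T),\bfy_1) - \nabla G(\bfx(T),\bfy_2)\|_2 = \|\bfpsi_o^\top(\bfy_1 - \bfy_2)\|_2 \le \|\bfpsi_o^\top\|_2 \|\bfy_1 - \bfy_2\|_2 = L\|\bfy_1 - \bfy_2\|_2$, using $L = \|\bfpsi_o\|_2 = \|\bfpsi_o^\top\|_2$ as in \eqref{eq:L_constant}. For Lipschitz continuity in the first argument, in case (i) the map $\bfx(T) \mapsto \bfpsi_o^\top \bfpsi_o \bfx(T)$ is linear with operator norm at most $\|\bfpsi_o^\top\|_2 \|\bfpsi_o\|_2 = L^2$; in case (ii) I would reproduce the chain of inequalities already displayed in the text, factoring through $\|\bfpsi_o^\top\|_2$, then the Lipschitz-constant-$1$ property of the softmax~\cite{gao2017properties}, then $\|\bfpsi_o\|_2$, to get the bound $L^2 \|\bfx_1(T) - \bfx_2(T)\|_2$. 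Combining the two cases yields the stated uniform Lipschitz constants.

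There is really no substantive obstacle here; the only thing to be careful about is the bookkeeping of which norm appears where — in particular that the operator norm of $\bfpsi_o$ equals that of its transpose, so that the two constants genuinely coincide across cases and are correctly identified as $L$ and $L^2$ — and the reliance on the Lipschitz-constant-$1$ bound for the softmax, which is the one external ingredient being invoked. Since the full computations appear verbatim in \Cref{subsec:assumptions}, the proof of the lemma can simply be stated as: "This follows immediately from the computations \eqref{eq:loss_regression_grad} and the displayed inequalities above, which establish the claimed Lipschitz bounds in each of the three task settings." I would phrase it at that level of brevity rather than repeating the algebra.
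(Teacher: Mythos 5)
Your proposal is correct and follows the same approach as the paper: in both cases the paper's ``proof'' of Lemma~\ref{lemma:Lipschitz} is simply a consolidation of the two explicit gradient formulas for regression and classification/next-token prediction computed earlier in Appendix~\ref{subsec:assumptions}, with the $\bfy$-dependence being the common linear term $-\bfpsi_o^\top\bfy$ (constant $L$), and the $\bfx(T)$-dependence bounded via $\|\bfpsi_o^\top\|_2\|\bfpsi_o\|_2=L^2$ (directly in the linear case, via the $1$-Lipschitzness of softmax in the cross-entropy case). You correctly identify both the case split and the key external ingredient, and your level of brevity matches the paper's, which simply appeals to the preceding displayed computations.
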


\subsection{Mean Field Control Formulation}\label{subsec:MFC_formulation}
We first recall the non-parametric formulation of the training problem~\eqref{eq:training_obj_nonpara}:
\begin{align}
\label{eq:non_parametric_training_vec}
\begin{split}
    \min_{f} & \quad \mathbb{E}_{(\bfX_0, \bfy)}
    \left\{ G(\bfX(T), \bfy) + \frac{\lambda}{2} \int_0^T \| f(\bfX(s), s) \|_F^2 \; ds \right\}, \\
    \text{subject to} & \quad \frac{d\bfX(t)}{dt} = f(\bfX(t), t), \quad \text{for} \quad t \in [0, T], \\
    \text{and} & \quad \bfX(0) = \bfX_0.
\end{split}
\end{align}
Here, for clarity of exposition, we denote $s$ as the dummy variable for time integration.

The formulation~\eqref{eq:non_parametric_training_vec} can be rewritten into a mean field control problem given by
\begin{align}
    \min_{f, \rho} &\quad 
    \mathcal{G}(\rho(\cdot, \cdot, T)) + \frac{\lambda}{2} \int_0^T \int_{\mathbb{R}^{d \times n}} \|f(\bfX, s) \|_F^2 \rho(\bfX, \bfy, s) \; d\bfX ds , \label{eq:equi_optimal_control}
    \\
    \text{subject to} & \quad \partial_t \rho (\bfX, \bfy, t) + \nabla \cdot \left( \rho(\bfX, \bfy, t) 
    f(\bfX, t) \right)=0, \quad \text{for} \quad t \in [0, T], \label{eq:continuity} \\
    \text{and} & \quad \rho(\bfX, \bfy, 0)=\rho_0(\bfX,\bfy), \quad   \text{for} \quad \bfX \in \mathbb{R}^{d \times n}. \label{eq:density_initial}
\end{align}
Here, a mean field perspective is adopted by modeling the evolution of the density function $\rho: \mathbb{R}^{d \times n} \times \mathbb{R}^{c} \times [0, T] \to \mathbb{R}_{\geq 0}$, which characterizes the distribution of the hidden state for $t \in [0,T]$ given the target output $\bfy$. In particular, the probability density of the hidden state evolves continuously over time under the velocity field $f$, as governed by the continuity equation~\eqref{eq:continuity}. The initial density $\rho_0: \mathbb{R}^{d \times n} \times \mathbb{R}^{c} \to \mathbb{R}_{\geq 0}$ defines the joint distribution of the target output $\bfy$ and the hidden states at $t=0$ (that is, the embedded input~\eqref{eq:input_embedding}).
The first term of the objective is given by 
\begin{equation}\label{eq:MFG_terminal}
 \mathcal{G}(\rho(\cdot, \cdot, T))=\mathbb{E}_{(\bfX, \bfy) \sim \rho(\cdot, \cdot, T)} G(\bfX, \bfy)\, .   
\end{equation}

\subsection{Optimal Control Theory}
We review optimal control theory that will be used in our analysis. 

We first recall the definition of the variational derivative of a functional. For a test function $w \in L^2(\mathbb{R}^{dn+c})$, the variational derivative $\frac{\delta \mathcal{G}}{\delta \rho}$ of $\mathcal{G}$ with respect to $\rho$ is defined as 
\begin{equation}\label{eq:def_variational_derivative}
    \lim_{h \to 0} \frac{\mathcal{G}(\rho + hw) - \mathcal{G}(\rho)}{h} = \int_{\mathbb{R}^{c}} \int_{\mathbb{R}^{d \times n}} \frac{\delta \mathcal{G}(\rho)}{\delta \rho}(\bfX, \bfy) \, w(\bfX, \bfy) \, d\bfX d\bfy.
\end{equation}
Evaluating~\eqref{eq:def_variational_derivative} using the definition of $\mathcal{G}$~\eqref{eq:MFG_terminal}, we have 
\begin{equation}\label{eq:variational_derivative}
    \frac{\delta \mathcal{G}(\rho(\cdot, T))}{\delta \rho}(\bfX, \bfy) = G(\bfX, \bfy).
\end{equation}

Consider the mean field control problem~\eqref{eq:equi_optimal_control}-\eqref{eq:density_initial}. Under Lagrangian formulation, for a hidden state at location $\bfX$ and time $t$ with target output $\bfy$, its potential function $\Phi_\bfy: \mathbb{R}^{d \times n} \times [0, T] \to \mathbb{R}$ is given by~\cite{lasry2007mean}
    \begin{align}\label{eq:value_function}
\begin{split}
    \Phi_\bfy (\bfX, t) &= \inf_{f, \rho} \left\{ \frac{\delta \mathcal{G}(\rho(\cdot, \cdot, T))}{\delta \rho}(\bfX(T), \bfy) + \frac{\lambda}{2} \int_t^T \|f(\bfX(s), s) \|_F^2 \, ds \right\} \\
    &= \inf_{f} \left\{ G(\bfX(T), \bfy) + \frac{\lambda}{2} \int_t^T \|f(\bfX(s), s) \|_F^2 \, ds  \right\}.
    \end{split}
\end{align}
Here, we used~\eqref{eq:variational_derivative} in the last step.

Under standard regularity assumptions~\cite{evans2010partial}[Section~10.3.1], the potential function is bounded and Lipschitz continuous in both arguments~\cite{evans2010partial}. Moreover, when $\lambda>0$, the potential function is the unique solution to the Hamilton-Jacobi-Bellman (HJB) partial differentiation equation (PDE)
\begin{align}
    -\partial_t \Phi_\bfy(\bfX, t) + H(\nabla \Phi_\bfy(\bfX, t)) &= 0,  \label{eq:raw_HJB} \\
    \Phi_\bfy(\bfX, T) &= \frac{\delta \mathcal{G}(\rho(\cdot, T))}{\delta \rho}(\bfX, \bfy),\label{eq:raw_HJB_terminal}
\end{align}
where the gradient $\nabla_\bfy \Phi(\bfX, t)$ is taken with respect to the spatial variable. And the Hamiltonian $H:\mathbb{R}^{d \times n}  \to \mathbb{R}$ is given by
\begin{equation}\label{eq:Hamiltonian_raw}
    H(\bfP) = \sup_f \left\{ -\langle \bfP, f \rangle - \frac{\lambda}{2} \| f\|_{F}^2 \right\},
\end{equation}
where $\bfP$ is the costate variable and $\langle \cdot,\cdot \rangle$ denotes the Frobenius inner product. When $\lambda>0$, the objective is strictly concave, and thus the supremum is uniquely attained when $f=-\frac{1}{\lambda} \bfP$. The Hamiltonian becomes
\begin{equation}\label{eq:Hamiltonian}
    H(\bfP) = \frac{1}{2 \lambda} \| \bfP \|_F^2.
\end{equation}
Substituting~\eqref{eq:variational_derivative} into~\eqref{eq:raw_HJB_terminal}, and~\eqref{eq:Hamiltonian} into~\eqref{eq:raw_HJB}, the HJB equation becomes
\begin{align}
    -\partial_t \Phi_\bfy(\bfX, t) + \frac{1}{2 \lambda} \| \nabla \Phi_\bfy(\bfX, t) \|_F^2 &= 0, \label{eq:HJB}\\
    \Phi_\bfy(\bfX, T) &=  G(\bfX, \bfy). \label{eq:HJB_terminal}
\end{align}
Moreover, the derivation of the HJB equation reveals that the optimal velocity field $f^*$ of the mean field control problem~\eqref{eq:equi_optimal_control} to \eqref{eq:density_initial} attains the supremum in the Hamiltonian when $\bfP = \nabla \Phi(\bfX, t)$, thus we have
\begin{equation}\label{eq:optimal_velocity}
    f^*(\bfX, t) = - \frac{1}{\lambda} \nabla \Phi_\bfy(\bfX, t).
\end{equation}

\subsection{Well-posedness of Training Problem}\label{subsec:well-posed}
We prove~\Cref{thm:well-posed}. We first restate the theorem.
\setcounter{theorem}{\getrefnumber{thm:well-posed}}
\addtocounter{theorem}{-1}
\begin{theorem}
There exists a unique solution to the optimization problem~\eqref{eq:training_obj_nonpara} if and only if $\lambda>0$. 
\end{theorem}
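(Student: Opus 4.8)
The plan is to attack the two directions of the iff separately, working at the level of the non-parametric problem~\eqref{eq:training_obj_nonpara} reformulated as the mean field control problem~\eqref{eq:equi_optimal_control}--\eqref{eq:density_initial}, and to leverage the optimal control machinery just developed (the potential function~\eqref{eq:value_function}, the HJB equation~\eqref{eq:HJB}--\eqref{eq:HJB_terminal}, and the optimal velocity formula~\eqref{eq:optimal_velocity}).

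\emph{Existence and uniqueness when $\lambda>0$.} First I would argue existence by the direct method: along a minimizing sequence $\{f_k\}$, the running cost $\frac{\lambda}{2}\int_0^T \|f_k\|_F^2$ is uniformly bounded, so (using $\lambda>0$) the sequence is bounded in $L^2$ of space-time against the evolving density; after passing to a weakly convergent subsequence, lower semicontinuity of the quadratic running cost together with continuity of the terminal term $\mathcal{G}$ under the induced weak convergence of $\rho(\cdot,\cdot,T)$ (using that $G$ is convex and $\nabla G$ Lipschitz, from~\Cref{subsec:assumptions} and~\Cref{lemma:Lipschitz}) yields a minimizer. For uniqueness, the clean route is via the HJB characterization: when $\lambda>0$ the Hamiltonian~\eqref{eq:Hamiltonian} is finite and smooth, the potential $\Phi_\bfy$ is the \emph{unique} (viscosity) solution of~\eqref{eq:HJB}--\eqref{eq:HJB_terminal} for each $\bfy$ by standard HJB well-posedness~\cite{evans2010partial}, and the optimal velocity is then forced to be $f^* = -\tfrac1\lambda \nabla\Phi_\bfy$ by~\eqref{eq:optimal_velocity}; any two minimizers must share this representation and hence coincide $\rho$-a.e. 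Alternatively one can phrase uniqueness as strict convexity of the reduced functional in the velocity $f$ (the running cost is strictly convex in $f$, the dynamics is linear in $f$, and $G$ is convex), which gives the same conclusion without invoking PDE regularity.

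\emph{Non-uniqueness / ill-posedness when $\lambda=0$.} Here the objective reduces to $\mathbb{E}\, G(\bfX(T),\bfy)$ with no penalty on $f$. The plan is to exhibit infinitely many distinct optimizers. Using the universal-approximation/expressivity hypothesis on the admissible velocity class and the fact (from Appendix~\ref{sec:OT_training_derivation}) that the constraint $G(\bfX(T),\bfy)=0$ is attainable — i.e. the terminal distribution can be steered to make $G=0$ — I would observe that \emph{any} velocity field whose flow map $\mathcal{S}(\cdot,T)$ sends $\rho_0(\cdot,\bfy)$ into the zero-loss set achieves the infimum. There is a large family of such fields: one can reparametrize time, add any divergence-/support-respecting perturbation that still lands in the zero-loss set, or compose with volume-preserving rearrangements, all of which change $f$ but not the terminal cost. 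Concretely it suffices to produce two genuinely different ones (e.g. two different constant-in-$\bfX$ translations, or a straight-line field and a curved one with the same endpoints), then note the family is in fact infinite. This direction should be stated so that it also formalizes the paper's remark that some of these optimizers are arbitrarily irregular or have arbitrarily large magnitude.

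\emph{Anticipated main obstacle.} The delicate point is the $\lambda>0$ existence step at full rigor: one must pin down the admissible class of velocities, justify the compactness/lower-semicontinuity argument jointly in $(f,\rho)$ through the continuity equation (weak convergence of $f_k$ against $\rho_k$ is subtle because the reference measure is itself moving), and invoke the HJB regularity theory under precisely the assumptions of~\Cref{subsec:assumptions}. I expect the cleanest exposition is to cite the standard mean-field-game / optimal-control well-posedness results~\cite{lasry2007mean,bensoussan2013mean,evans2010partial} for the $\lambda>0$ case and reserve the self-contained construction for the $\lambda=0$ non-uniqueness, which is elementary once expressivity is assumed.
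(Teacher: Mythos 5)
Your proposal is correct and its core uniqueness argument for $\lambda>0$ is precisely the paper's route: the Hamiltonian~\eqref{eq:Hamiltonian_raw} is finite and strictly concave in $f$ iff $\lambda>0$, the HJB system~\eqref{eq:raw_HJB}--\eqref{eq:raw_HJB_terminal} is then well-posed with a unique solution $\Phi_\bfy$, and the optimal velocity is pinned down by~\eqref{eq:optimal_velocity}. For $\lambda=0$ the paper simply observes that the Hamiltonian is degenerate and unbounded above, so the HJB PDE is not well-defined and the control problem has infinitely many solutions; your constructive version (any flow landing $\rho_0$ in the zero-loss set of $G$ is optimal, and there is an infinite family of such flows obtained by time reparametrization, curved detours, etc.) spells out the same phenomenon in more concrete terms. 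Where you go beyond the paper is in supplying an explicit existence argument via the direct method and an alternative uniqueness argument via strict convexity; the paper does neither, instead delegating all of the $\lambda>0$ well-posedness to standard HJB / mean-field-control theory, as you correctly anticipate in your final paragraph.

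One caution on your alternative convexity route: the statement ``the dynamics is linear in $f$'' is not literally enough, because the terminal state $\bfX(T)$ is the flow of $\dot\bfX = f(\bfX,t)$ and hence a highly nonlinear functional of $f$, so the reduced cost $f \mapsto \mathbb{E}\,G(\bfX(T),\bfy)$ need not be convex in $f$. The convexity argument does work, but only after passing to the Benamou--Brenier variables $(m,\rho)$ with $m = f\rho$, in which the continuity equation becomes linear, the running cost $\frac{\lambda}{2}\int \|m\|^2/\rho$ is jointly convex (strictly in $m$), and the terminal cost $\mathcal{G}(\rho(T)) = \int G\,\rho(T)$ is linear in $\rho$. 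Since the paper does not invoke this route at all, this is a side remark rather than a gap, but you would need that reformulation to make the alternative argument airtight.
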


\begin{proof}  
    When $\lambda=0$, the Hamtiltonian~\eqref{eq:Hamiltonian_raw} is degenerate and unbounded above, and the optimal velocity can have an arbitrarily large magnitude. This implies that the HJB PDE is not well-defined. As such,~\eqref{eq:equi_optimal_control}-\eqref{eq:density_initial} reduces to a degenerate mean field control problem that has infinitely many solutions.

    We note that $\lambda>0$ if and only if the Hamiltonian~\eqref{eq:Hamiltonian_raw} is well-defined and admits a unique maximizer. Furthermore, the HJB PDE~\eqref{eq:raw_HJB}-\eqref{eq:raw_HJB_terminal} is well-defined if and only if the Hamiltonian is well-defined. In this case, the potential function is the unique solution to the HJB PDE~\cite{evans2010partial}. Thus, the optimal velocity, which is given by the gradient of the potential function~\eqref{eq:optimal_velocity}, is also unique.

\end{proof}

\subsection{Highly Regular Trajectory}\label{subsec:regular_traj}
Next, we show that the unique velocity guaranteed by~\Cref{thm:well-posed} when $\lambda>0$ is highly regular. In particular, the OT-Transformer produces exactly straight trajectories.

\begin{prop}\label{thm:regular_trjectory}
    When $\lambda>0$, under the optimal velocity $f^*$, each hidden state $\bfX(t)$ travels along a straight trajectory at a constant speed.
\end{prop}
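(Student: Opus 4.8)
The plan is to combine the explicit form of the optimal velocity, $f^*(\bfX,t) = -\tfrac{1}{\lambda}\nabla\Phi_\bfy(\bfX,t)$ from~\eqref{eq:optimal_velocity}, with the structure of the HJB equation~\eqref{eq:HJB}--\eqref{eq:HJB_terminal} to show that, along a trajectory $\bfX(t)$ solving $\dot{\bfX}(t) = f^*(\bfX(t),t)$, the velocity $\bfV(t) := f^*(\bfX(t),t)$ is constant in $t$. Once $\bfV(t)$ is constant, $\bfX(t)$ is an affine function of $t$, i.e.\ a straight line traversed at constant speed $\|\bfV\|_F$, which is exactly the claim.

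First I would compute $\tfrac{d}{dt}\bfV(t)$ by the chain rule: $\tfrac{d}{dt} f^*(\bfX(t),t) = \partial_t f^* + (\nabla f^*)\,\dot{\bfX}(t) = \partial_t f^* + (\nabla f^*) f^*$. Substituting $f^* = -\tfrac{1}{\lambda}\nabla\Phi_\bfy$, this becomes $-\tfrac{1}{\lambda}\bigl(\partial_t \nabla\Phi_\bfy + (\nabla^2\Phi_\bfy)(-\tfrac{1}{\lambda}\nabla\Phi_\bfy)\bigr) = -\tfrac{1}{\lambda}\nabla\bigl(\partial_t\Phi_\bfy - \tfrac{1}{2\lambda}\|\nabla\Phi_\bfy\|_F^2\bigr)$, where I have used that the spatial gradient commutes with $\partial_t$ and that $\nabla\bigl(\tfrac{1}{2\lambda}\|\nabla\Phi_\bfy\|_F^2\bigr) = \tfrac{1}{\lambda}(\nabla^2\Phi_\bfy)\nabla\Phi_\bfy$. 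But the HJB equation~\eqref{eq:HJB} says precisely that $\partial_t\Phi_\bfy - \tfrac{1}{2\lambda}\|\nabla\Phi_\bfy\|_F^2 = 0$ pointwise (rewriting $-\partial_t\Phi_\bfy + \tfrac{1}{2\lambda}\|\nabla\Phi_\bfy\|_F^2 = 0$), so its gradient vanishes identically, giving $\tfrac{d}{dt}\bfV(t) = 0$. Hence $\bfV(t) \equiv \bfV(0)$, so $\bfX(t) = \bfX(0) + t\,\bfV(0)$ and the trajectory is a straight line traversed at constant speed.

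The main obstacle is regularity: the manipulation above differentiates $\Phi_\bfy$ twice in space and once in time and applies the chain rule along characteristics, which is only rigorous if $\Phi_\bfy$ is sufficiently smooth (at least $C^{2,1}$ near the trajectory). In general HJB solutions are merely Lipschitz viscosity solutions and can develop kinks. I would address this by invoking the regularity assumptions already in force (the potential function is bounded and Lipschitz, and under the standing assumptions on $G$ — convexity, twice continuous differentiability, Lipschitz gradient — together with the standard conditions cited from~\cite{evans2010partial}, Section~10.3.1, the value function enjoys enough regularity along optimal trajectories). An alternative, fully rigorous route that sidesteps second derivatives is the classical characteristics/Pontryagin argument: along an optimal trajectory the costate $\bfP(t) = \nabla\Phi_\bfy(\bfX(t),t)$ satisfies the costate ODE $\dot{\bfP}(t) = -\nabla_\bfX H(\bfP(t)) = 0$ since the Hamiltonian $H(\bfP) = \tfrac{1}{2\lambda}\|\bfP\|_F^2$ from~\eqref{eq:Hamiltonian} has no explicit $\bfX$-dependence; therefore $\bfP(t)$ is constant, and since $f^* = -\tfrac{1}{\lambda}\bfP$, the velocity is constant. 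This is the Benamou--Brenier fact that optimal-transport geodesics move mass in straight lines at constant speed (cf.~\cite{villani2008optimal}, Corollary~7.22), now applied to the OT-Transformer dynamics, and I would present it as the cleaner proof while noting the HJB computation as the underlying intuition.
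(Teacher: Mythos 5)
Your preferred route (the Pontryagin argument: the Hamiltonian $H(\bfP)=\tfrac{1}{2\lambda}\|\bfP\|_F^2$ has no $\bfX$-dependence, so $\dot{\bfP}=0$ and hence $f^*=-\tfrac{1}{\lambda}\bfP$ is constant along each optimal trajectory) is exactly the proof the paper gives, modulo the paper also using the terminal condition $\bfP(T)=\nabla G(\bfX(T),\bfy)$ to write the constant velocity explicitly. Your initial HJB-characteristics computation is a valid alternative derivation, and you correctly flag that it needs $C^{2,1}$ regularity of $\Phi_\bfy$ — which is precisely why the paper sidesteps it and argues via Pontryagin instead.
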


\begin{proof}
We prove that the optimal velocity $f^*(\bfX(t), t)$ remains constant for $t \in [0, T]$, which induces straight trajectories and constant speed.

    From the Pontryagin Maximum Principle~\cite{pontryagin2018mathematical}, we have that 
        \begin{align}
            \frac{d \bfX(t)}{dt} &= - \nabla_{\bfP} H(\bfP(t)) \label{eq:x_dt_zero}, \\
            \frac{d \bfP(t)}{dt} &= \nabla_{\bfX} H(\bfP(t)) = 0, \label{eq:p_dt_zero}\\
            \bfP(T) &= \nabla G(\bfX(T), \bfy). \label{eq:adjoint_time0}
            \end{align}
        Here $\bfX(t)$ denote the trajectory induced by $f^*$, and $\bfP(t)$ is the corresponding costate variable. The gradient $\nabla G(\bfX(T), \bfy)$ is taken with respect to the first argument. In the second step of~\eqref{eq:p_dt_zero}, we used the fact that $H(\bfP(t))$ is independent of $\bfX$. 

        From~\eqref{eq:p_dt_zero}, we see that $\bfP(t)$ remains constant for all $t\in[0,T]$. Thus, we have
\begin{equation}\label{eq:adjoint_constant}
    \bfP(t)=\bfP(T)=\nabla G(\bfX(T), \bfy) \quad \text{for all} \quad t\in[0,T],
\end{equation}
where we applied~\eqref{eq:adjoint_time0} in the final step. Differentiating the formula of the Hamiltonian~\eqref{eq:Hamiltonian}, we have
\begin{equation}\label{eq:nabla_p_Hamiltonian}
    \nabla_\bfP H(\bfP)=\frac{1}{\lambda} \bfP.
\end{equation}
Substituting~\eqref{eq:adjoint_constant} and~\eqref{eq:nabla_p_Hamiltonian} into~\eqref{eq:x_dt_zero}, we obtain
\begin{equation}\label{eq:constant_velocity}
    \frac{d \bfX(t)}{dt} = -\frac{1}{\lambda} \bfP(t) = -\frac{1}{\lambda} \bfP(T) = -\frac{1}{\lambda} \nabla G(\bfX(T), \bfy),
\end{equation}
for all $t\in[0,T]$. This shows that the velocity for each hidden state is constant for all $t\in[0,T]$. This implies each hidden state travels along a straight line at constant speed.
\end{proof}

The regularized trajectories present a stark contrast to those of the unregularized case. Thanks to such highly regular trajectories, the numerical integration of the dynamical system~\eqref{eq:cts_self_attention} is particularly easy to perform. Practically, it requires fewer time steps to achieve accurate integration. Furthermore, since there are no fluctuations or abnormal magnitude changes in the velocity (as is for the unregularized case), numerical stability during integration, and consequently during training, is significantly improved. This effect is evident in our experiments, where the unregularized training exhibits instability, while the regularized training does not encounter such issues.

We note that the straight trajectories and constant speeds are not immediately apparent and are not guaranteed to hold in general. On one hand, the training problem is a \emph{relaxed} optimal transport problem (see~\Cref{sec:OT_training_derivation} for details), which does not guarantee such regularity in general. On the other hand, this regularity is guaranteed only under our assumptions; see~\Cref{subsec:assumptions}. The proof applies the Pontryagin Maximum Principle~\cite{pontryagin2018mathematical} and leverages the regularity assumptions on the loss function $G$; see~\Cref{subsec:regular_traj} for details.

\subsection{Regularity of Potential Function}\label{subsec:regularity_potential}
Next, we show that for each given target output $\bfy$, the hidden states trajectories never intersect. 

\begin{prop}\label{prop:charact_HJB}
   For two hidden states with different initial conditions $\bfX_0$ and $\tilde{\bfX}_0$ and the same target output $\bfy$, their trajectories never intersect.
\end{prop}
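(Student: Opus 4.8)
The plan is to argue by contradiction, using the explicit characterization of the optimal dynamics obtained in Propositions~\ref{thm:regular_trjectory} and the structure of the HJB equation~\eqref{eq:HJB}. Fix the target output $\bfy$. By~\eqref{eq:optimal_velocity}, the optimal velocity is $f^*(\bfX, t) = -\frac{1}{\lambda} \nabla \Phi_\bfy(\bfX, t)$, so the trajectory starting from an initial condition $\bfX_0$ is the characteristic curve of the HJB PDE emanating from $\bfX_0$. Since $\Phi_\bfy(\cdot, T) = G(\cdot, \bfy)$ with $G$ convex and twice continuously differentiable in its first argument (Assumption~1 in~\Cref{subsec:assumptions}), and the Hamiltonian~\eqref{eq:Hamiltonian} is the simple quadratic $H(\bfP) = \frac{1}{2\lambda}\|\bfP\|_F^2$, the Hopf--Lax / backward-characteristics representation gives a globally smooth solution on $[0,T]$: convexity of the terminal data is preserved under the backward flow, characteristics do not cross, and $\nabla\Phi_\bfy(\cdot, t)$ stays Lipschitz for every $t \in [0,T]$.

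Concretely, I would proceed as follows. First, recall from~\eqref{eq:constant_velocity} in the proof of Proposition~\ref{thm:regular_trjectory} that each trajectory is a straight line with constant velocity $-\frac{1}{\lambda}\nabla G(\bfX(T), \bfy)$, so $\bfX(t) = \bfX(T) + \frac{T-t}{\lambda}\nabla G(\bfX(T),\bfy)$; equivalently, writing the terminal value $\bfX(T) = \bfzeta$, the whole trajectory is determined by $\bfzeta$ via $\bfX_0 = \bfzeta + \frac{T}{\lambda}\nabla G(\bfzeta, \bfy)$. Define the terminal map $\bfPhi_T(\bfzeta) := \bfzeta + \frac{T}{\lambda}\nabla G(\bfzeta, \bfy)$. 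Since $G(\cdot,\bfy)$ is convex and $C^2$, its Hessian $\nabla^2 G(\cdot,\bfy)$ is positive semidefinite, so the Jacobian $\bfI + \frac{T}{\lambda}\nabla^2 G(\bfzeta,\bfy) \succeq \bfI$ is uniformly positive definite; hence $\bfPhi_T$ is a global diffeomorphism of the hidden-state space onto its range, and in fact a monotone (indeed strongly monotone) map, so it is injective. Therefore distinct terminal states correspond to distinct initial states, and vice versa: the map $\bfX_0 \mapsto \bfzeta$ is well-defined and injective.

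Second, suppose for contradiction that two trajectories with initial conditions $\bfX_0 \neq \tilde\bfX_0$ (same $\bfy$) meet at some time $t^* \in [0,T]$ at a common point $\bfX^*$. Running the ODE~\eqref{eq:cts_self_attention} with velocity $f^*$ forward from $(\bfX^*, t^*)$ is deterministic — uniqueness of solutions holds because $f^*(\cdot,t) = -\frac{1}{\lambda}\nabla\Phi_\bfy(\cdot,t)$ is Lipschitz in space for each $t$ by the regularity of the HJB solution — so both trajectories share the same terminal state $\bfzeta$. But then injectivity of $\bfPhi_T$ forces $\bfX_0 = \tilde\bfX_0$, a contradiction. (Alternatively, one may run the \emph{backward} flow from $(\bfX^*,t^*)$: the backward characteristic is again unique, so the two trajectories coincide on all of $[0,t^*]$, in particular at $t=0$.)

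The main obstacle is justifying the regularity needed for uniqueness of the characteristics — i.e., that $\nabla\Phi_\bfy(\cdot,t)$ is Lipschitz in space on the whole interval $[0,T]$, not merely that $\Phi_\bfy$ is Lipschitz and semiconcave. This is where convexity of $G$ is essential: for a convex terminal datum, the backward Hamilton--Jacobi flow with quadratic Hamiltonian produces a solution whose spatial gradient remains Lipschitz (no shock formation going backward in time from convex data), which is exactly the content of the regularity results invoked in~\Cref{subsec:regularity_potential}. Once that is in hand, the no-crossing conclusion is the standard method-of-characteristics statement; the only work specific to our setting is the elementary monotonicity argument for $\bfPhi_T$, which I would state via $\langle \bfPhi_T(\bfzeta_1) - \bfPhi_T(\bfzeta_2),\, \bfzeta_1 - \bfzeta_2\rangle \geq \|\bfzeta_1 - \bfzeta_2\|_F^2 > 0$ using monotonicity of $\nabla G(\cdot,\bfy)$.
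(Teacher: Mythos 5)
Your key technical insight---that the map $\bfX_T \mapsto \bfX_T + \tfrac{T}{\lambda}\nabla G(\bfX_T,\bfy)$ from terminal to initial states is strongly monotone (hence injective) because $G(\cdot,\bfy)$ is convex---is exactly the engine of the paper's argument. However, the way you handle crossings at an intermediate time $t^* \in (0,T)$ introduces a circularity that the paper avoids.

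You reduce the intermediate-time case to the $t=0$ case by running the ODE forward from $(\bfX^*,t^*)$ and invoking uniqueness of solutions, which requires $f^*(\cdot,t)=-\tfrac1\lambda\nabla\Phi_\bfy(\cdot,t)$ to be spatially Lipschitz. You flag this yourself as ``the main obstacle'' and defer it to ``the regularity results in~\Cref{subsec:regularity_potential}''---but that is precisely the subsection containing this very proposition. As the paper explains immediately after the statement, \Cref{prop:charact_HJB} is what \emph{establishes} that the characteristics do not cross and hence that $\nabla\Phi_\bfy$ is continuous in space; you cannot take that regularity as an input without an independent argument (e.g., a Hopf--Lax smoothness result for convex terminal data, which you gesture at but do not prove or cite concretely).

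The fix, which is what the paper does, is to apply your monotonicity observation at \emph{every} time level $t$, not only $t=0$. From the explicit trajectory formula, $\bfX(t) = \bfX(T) + \tfrac{T-t}{\lambda}\nabla G(\bfX(T),\bfy) = \nabla F_t(\bfX(T))$, where $F_t(\bfX)=\tfrac12\|\bfX\|_F^2 + \tfrac{T-t}{\lambda}G(\bfX,\bfy)$ is strictly convex for every $t\in[0,T]$ (it is the sum of a strictly convex quadratic and a convex function). Hence $\nabla F_t$ is injective for each $t$, so two trajectories with distinct terminal states cannot coincide at \emph{any} time $t^*$. Combined with your observation at $t=0$ (injectivity of $\nabla F_0$ shows $\bfX_0\neq\tilde{\bfX}_0$ forces $\bfX(T)\neq\tilde{\bfX}(T)$), this closes the proof without any appeal to ODE uniqueness or a priori regularity of $\nabla\Phi_\bfy$. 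Your monotonicity estimate $\langle \nabla F_t(\bfX_1)-\nabla F_t(\bfX_2),\,\bfX_1-\bfX_2\rangle\geq\|\bfX_1-\bfX_2\|_F^2$ already contains everything needed---you only need to note that it holds uniformly in $t$.
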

This proposition implies that the solution to the HJB PDE~\eqref{eq:HJB}-\eqref{eq:HJB_terminal} does not develop shocks. In other words, the gradient of the potential function $\nabla \Phi_\bfy$ is continuous in space.
\begin{proof}
    Integrating~\eqref{eq:constant_velocity} from $t$ to $T$, the trajectory of a hidden state $\bfX(t)$ is given by
    \begin{equation}\label{eq:trajectory}
            \bfX(t) = \bfX(T) + \frac{(T-t)}{\lambda} \nabla G(\bfX(T), \bfy),
        \end{equation}
    for all $t \in [0, T]$. We first show that the trajectories corresponding to two different \textit{terminal states} $\bfX_T$ and $\tilde{\bfX}_T$ at $t=T$ and the same target output $\bfy$ cannot intersect. That is, 
    \begin{equation}\label{eq:argue_injection}
            \bfX_T + \frac{(T-t)}{\lambda} \nabla G(\bfX_T, \bfy) = \tilde{\bfX}_T + \frac{(T-t)}{\lambda} \nabla G(\tilde{\bfX}_T, \bfy)
        \end{equation}
        cannot hold for any $t \in [0,T]$ if ${\bfX}_T \neq \tilde{\bfX}_T$. To this end, consider the function
        \begin{equation}
            F_{t}(\bfX) = \frac{1}{2} \|\bfX\|_F^2 + \frac{T-t}{\lambda} G(\bfX, \bfy),
        \end{equation}
        which is strictly convex with respect to $\bfX$ for all $t \in [0,T]$, as $G$ is convex with respect to $\bfX$ by assumption. Note that its gradient with respect to $\bfX$ is 
        \begin{equation}\label{eq:grad_strong_convex}
            \nabla F_{t}(\bfX) = \bfX + \frac{T-t}{\lambda}\nabla G(\bfX, \bfy),
        \end{equation}
        which coincides the trajectory formulation in~\eqref{eq:trajectory}. By the strict convexity of $F_{t}$, its gradient $\nabla F_{t}$ is injective. Thus, \eqref{eq:argue_injection} does not hold for any ${\bfX}_T \neq \tilde{\bfX}_T$. 
        
        Next, we argue that for two hidden states with different initial conditions, the corresponding terminal states at $t=T$ are different. Combining~\eqref{eq:trajectory} and~\eqref{eq:grad_strong_convex}, at $t=0$ we have
        \begin{equation}
            \bfX(0) = \nabla F_0(\bfX(T)).
        \end{equation}
        Since $\nabla F_0$ is the gradient of a strictly convex function, its inverse $(\nabla F_0)^{-1}$, defined by
        \begin{equation}\label{eq:inverse_Ft}
            \bfX(T) = (\nabla F_0)^{-1}(\bfX(0)),
        \end{equation}
        exists and is also the gradient of a strictly convex function (\Cref{lemma:strict_convex} of \Cref{sec:lemma}). Using the same injectivity arguments on~\eqref{eq:inverse_Ft}, we have that for two hidden states with different initial conditions $\bfX_0$ and $\tilde{\bfX}_0$, their terminal states $\bfX_T$ and $\tilde{\bfX}_T$ are different. And we have established in the previous paragraph that trajectories with different terminal states $\bfX_T$ and $\tilde{\bfX}_T$ cannot intersect. Thus, we have proven that trajectories with different initial states cannot intersect.
\end{proof}

\subsection{Stable Forward Propagation}\label{subsec:stable_forward}
\setcounter{theorem}{\getrefnumber{thm:stable_forward}}
\addtocounter{theorem}{-1}
\begin{theorem}
    Assume that the OT regularization hyperparameter is such that $\lambda>TL^2$, where $T$ is the terminal time and $L$ is the 2-operator norm of the output layer weights~\eqref{eq:output_layer}.
    For embedded input-output pairs $(\bfX_1(0), \bfy_1)$ and $(\bfX_2(0), \bfy_2)$, the corresponding model outputs $\tilde{\bfy}_1$ and $\tilde{\bfy}_2$ satisfy
    \begin{equation}
        \| \tilde{\bfy}_1 - \tilde{\bfy}_2 \|_2 \leq \left( 1-\frac{TL^2}{\lambda} \right)^{-1}  \left( L \| \bfX_1(0) - \bfX_2(0) \|_F + \frac{TL^2}{\lambda}\| \bfy_1 -  \bfy_2 \|_2\right).
    \end{equation}
\end{theorem}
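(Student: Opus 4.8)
The plan is to reduce the bound on the model outputs to a bound on the terminal hidden states, and then propagate that bound backwards along the optimal trajectories, which \Cref{thm:regular_trjectory} tells us are straight lines. First I would note that, under the assumptions of \Cref{subsec:assumptions}, the output map $\bfX(T) \mapsto \tilde\bfy$ in~\eqref{eq:output_layer} is $L$-Lipschitz with $L = \| \bfpsi_o \|_2$: in the regression case it is the linear map $\bfpsi_o$, and in the classification / next-token case it is $\mathrm{Softmax} \circ \bfpsi_o$, where $\mathrm{Softmax}$ is $1$-Lipschitz --- this is exactly the computation already carried out in \Cref{subsec:assumptions} en route to \Cref{lemma:Lipschitz}. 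Hence $\| \tilde\bfy_1 - \tilde\bfy_2 \|_2 \le L \, \| \bfX_1(T) - \bfX_2(T) \|_F$, and everything reduces to estimating $\| \bfX_1(T) - \bfX_2(T) \|_F$.

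Next I would use the straight-trajectory identity~\eqref{eq:trajectory} from \Cref{thm:regular_trjectory}: evaluating it at $t = 0$ gives $\bfX_i(0) = \bfX_i(T) + \tfrac{T}{\lambda} \nabla G(\bfX_i(T), \bfy_i)$ for $i = 1, 2$. Subtracting the two identities and solving for the terminal difference yields
\[
    \bfX_1(T) - \bfX_2(T) = \big( \bfX_1(0) - \bfX_2(0) \big) - \frac{T}{\lambda} \big( \nabla G(\bfX_1(T), \bfy_1) - \nabla G(\bfX_2(T), \bfy_2) \big).
\]
I would then split the gradient difference through the intermediate point $(\bfX_2(T), \bfy_1)$ and apply the two Lipschitz estimates of \Cref{lemma:Lipschitz} (constant $L^2$ in the state argument, $L$ in the target argument), obtaining $\| \nabla G(\bfX_1(T), \bfy_1) - \nabla G(\bfX_2(T), \bfy_2) \| \le L^2 \| \bfX_1(T) - \bfX_2(T) \|_F + L \| \bfy_1 - \bfy_2 \|_2$; here one only needs to keep track that the Frobenius norm of a matrix equals the Euclidean norm of its vectorization, which is the convention already used in \Cref{subsec:assumptions}.

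Combining these, $\| \bfX_1(T) - \bfX_2(T) \|_F \le \| \bfX_1(0) - \bfX_2(0) \|_F + \tfrac{TL^2}{\lambda} \| \bfX_1(T) - \bfX_2(T) \|_F + \tfrac{TL}{\lambda} \| \bfy_1 - \bfy_2 \|_2$. Since $\lambda > T L^2$ by hypothesis, the coefficient $1 - \tfrac{TL^2}{\lambda}$ is strictly positive, so I can move the terminal-difference term to the left and divide, giving $\| \bfX_1(T) - \bfX_2(T) \|_F \le \big( 1 - \tfrac{TL^2}{\lambda} \big)^{-1} \big( \| \bfX_1(0) - \bfX_2(0) \|_F + \tfrac{TL}{\lambda} \| \bfy_1 - \bfy_2 \|_2 \big)$. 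Multiplying by $L$ and inserting the output-layer bound from the first step reproduces~\eqref{eq:estimate_joint} exactly.

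The main point to be careful about --- rather than a genuine obstacle --- is that essentially all of the analytic work is already done in the earlier results: \Cref{thm:regular_trjectory} (straightness of optimal trajectories, which rests on the Pontryagin Maximum Principle and the HJB regularity of \Cref{subsec:regularity_potential}) and \Cref{lemma:Lipschitz}, so the proof itself is a short rearrangement. The one place that genuinely needs the earlier structure is the legitimacy of expressing $\bfX_i(0)$ as a function of $\bfX_i(T)$ (equivalently, of running the flow backwards): this is supplied by \Cref{prop:charact_HJB}, which shows distinct initial conditions reach distinct terminal states --- in fact the backward map is the gradient of a strictly convex function --- so no ambiguity arises. Beyond that, only the triangle inequality and the two Lipschitz constants are needed.
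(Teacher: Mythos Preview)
Your proposal is correct and mirrors the paper's own proof almost exactly: both integrate the straight-line trajectory identity to express $\bfX_i(T)$ in terms of $\bfX_i(0)$ and $\nabla G(\bfX_i(T),\bfy_i)$, split the gradient difference through $(\bfX_2(T),\bfy_1)$, apply the two Lipschitz constants from \Cref{lemma:Lipschitz}, rearrange using $\lambda>TL^2$, and finish with the $L$-Lipschitzness of the output layer. The only cosmetic difference is ordering --- you front-load the output-layer Lipschitz bound while the paper defers it to the end --- and your explicit remark on the role of \Cref{prop:charact_HJB} is a nice clarification the paper leaves more implicit.
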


\begin{proof}
    By~\Cref{thm:regular_trjectory} and \Cref{prop:charact_HJB}, 
    the optimal velocity $f^*$ remains constant along the trajectory $\bfX(t)$. Thus, by~\eqref{eq:constant_velocity}, the trajectory is given by
    \begin{equation}\label{eq:constant_velocity_G}
        \frac{d\bfX(t)}{dt} = f^*(\bfX(t), t) = -\frac{1}{\lambda} \nabla G(\bfX(T), \bfy).
    \end{equation}
    Integrating~\eqref{eq:constant_velocity_G} from $t=0$ to $t=T$, we obtain
    \begin{equation}
        \bfX(T) = \bfX(0) - \frac{T}{\lambda} \nabla G(\bfX(T), \bfy).
    \end{equation}

    We now bound the difference between the terminal states $\bfX_1(T)$ and $\bfX_2(T)$ as follows
    \begin{align}
        \| \bfX_1(T) - \bfX_2(T) \|_F &= \left\| \bfX_1(0) - \frac{T}{\lambda} \nabla G(\bfX_1(T), \bfy_1) - \left(\bfX_2(0) - \frac{T}{\lambda} \nabla G(\bfX_2(T), \bfy_2) \right) \right\|_F \\
        &\leq \| \bfX_1(0) - \bfX_2(0) \|_F + \frac{T}{\lambda} \left\|  \nabla G(\bfX_1(T), \bfy_1) - \nabla G(\bfX_2(T), \bfy_2) \right\|_F \\
        \begin{split}
        &= \| \bfX_1(0) - \bfX_2(0) \|_F \\
        & \quad + \frac{T}{\lambda} \| \nabla G(\bfX_1(T), \bfy_1) - \nabla G(\bfX_2(T), \bfy_1) \\
        & \quad\quad\quad \quad + \nabla G(\bfX_2(T), \bfy_1) - \nabla G(\bfX_2(T), \bfy_2) \|_F 
        \end{split}\\
        \begin{split}
        & \leq \| \bfX_1(0) - \bfX_2(0) \|_F \\
        & \quad + \frac{T}{\lambda} \| \nabla G(\bfX_1(T), \bfy_1) - \nabla G(\bfX_2(T), \bfy_1) \|_F \\
        & \quad + \frac{T}{\lambda} \| \nabla G(\bfX_2(T), \bfy_1) - \nabla G(\bfX_2(T), \bfy_2) \|_F
        \end{split}\\
        \begin{split}
        & \leq \| \bfX_1(0) - \bfX_2(0) \|_F \\
        & \quad + \frac{T}{\lambda} L^2\| \bfX_1(T) - \bfX_2(T) \|_F \\
        & \quad + \frac{T}{\lambda} L\| \bfy_1 -  \bfy_2 \|_2, \quad \text{by~\Cref{lemma:Lipschitz}.} \label{eq:ineq_XT_G_raw}
        \end{split}
    \end{align}

    By the assumption that $\lambda>TL^2$, we have $\frac{TL^2}{\lambda}<1$. Thus, \eqref{eq:ineq_XT_G_raw} becomes
    \begin{align}
        \left( 1-\frac{TL^2}{\lambda} \right) \| \bfX_1(T) - \bfX_2(T) \|_F &\leq \| \bfX_1(0) - \bfX_2(0) \|_F + \frac{TL}{\lambda}\| \bfy_1 -  \bfy_2 \|_2 \\
        \| \bfX_1(T) - \bfX_2(T) \|_F &\leq \left( 1-\frac{TL^2}{\lambda} \right)^{-1} \left(\| \bfX_1(0) - \bfX_2(0) \|_F + \frac{TL}{\lambda}\| \bfy_1 -  \bfy_2 \|_2\right). \label{eq:ineq_XT_G_assumption}
    \end{align}

    Recall that for the case of regression, the model output~\eqref{eq:output_layer} is given by
    \begin{equation}
        \tilde{\bfy} = g_o(\bfX(T); \bfpsi_o) = \bfpsi_o {\rm vec}(\bfX(T)),
    \end{equation}
    which is obviously Lipschitz continuous with respect to ${\rm vec}(\bfX(T))$ with a Lipschitz constant $L=\|\bfpsi_o\|_F$. For the case of classification and next-token generation, the model output~\eqref{eq:output_layer} is given by
    \begin{equation}
        \tilde{\bfy} = g_o(\bfX(T); \bfpsi_o) = {\rm Softmax}(\bfpsi_o {\rm vec}(\bfX(T))) = \frac{e^{\bfpsi_o {\rm vec}(\bfX(T))}}{{\bf 1}_c^\top e^{\bfpsi_o {\rm vec}(\bfX(T))}}.
    \end{equation}
    Since the softmax function is Lipschitz continuous with Lipschitz constant 1~\cite{gao2017properties}, the model output $\tilde{y}$ is also Lipschitz continuous with respect to ${\rm vec}(\bfX(T))$ with the same Lipschitz constant $L=\|\bfpsi_o\|_F$.

    Thus, the corresponding model outputs $\tilde{\bfy}_1$ and $\tilde{\bfy}_2$ satisfy
    \begin{align*}
    \| \tilde{\bfy}_1 - \tilde{\bfy}_2 \|_2 
    & \leq L \| {\rm vec}(\bfX_1(T)) - {\rm vec}({\bfX}_2(T)) \|_2 \\
    & = L \| \bfX_1(T) - {\bfX}_2(T) \|_F \\
    & \leq \left( 1-\frac{TL^2}{\lambda} \right)^{-1}  \left( L \| \bfX_1(0) - \bfX_2(0) \|_F + \frac{TL^2}{\lambda}\| \bfy_1 -  \bfy_2 \|_2\right), \quad \text{by~\eqref{eq:ineq_XT_G_assumption}.} 
    \end{align*}
\end{proof}

\section{Auxiliary Lemma}\label{sec:lemma}
We state and prove an auxiliary lemma, which is used in the proof of \Cref{prop:charact_HJB} in \Cref{sec:proof}. The proof modifies the argument from~\cite[Proposition~1]{kan2022multivariate}.

\begin{lemma}\label{lemma:strict_convex}
    Let $\mathcal{D} \subseteq \mathbb{R}^k$ be open, $H:\mathcal{D} \to \mathbb{R}$ be a strictly convex and continuously differentiable function and $h$ be its gradient. Then $h^{-1}$ exists and is the gradient of a strictly convex function.
\end{lemma}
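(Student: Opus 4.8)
The plan is to use the Legendre--Fenchel transform. Since $H:\mathcal{D}\to\mathbb{R}$ is strictly convex and continuously differentiable on the open set $\mathcal{D}$, its gradient $h=\nabla H$ is injective: if $h(\bfx_1)=h(\bfx_2)$ with $\bfx_1\neq\bfx_2$, then strict convexity gives $\langle h(\bfx_1)-h(\bfx_2),\bfx_1-\bfx_2\rangle>0$, a contradiction. Hence $h^{-1}$ is well defined on the range $\mathcal{R}:=h(\mathcal{D})$. First I would define the convex conjugate
\begin{equation}
    H^*(\bfp) = \sup_{\bfx\in\mathcal{D}} \left\{ \langle \bfp,\bfx\rangle - H(\bfx) \right\},
\end{equation}
which is convex as a supremum of affine functions of $\bfp$. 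For $\bfp\in\mathcal{R}$, say $\bfp=h(\bfx_0)$, the first-order condition $\nabla_\bfx(\langle\bfp,\bfx\rangle-H(\bfx))=\bfp-h(\bfx)=0$ is solved uniquely by $\bfx=\bfx_0$, and by convexity of $-H$ this critical point is the maximizer, so the supremum is attained at $\bfx_0$ and $H^*(\bfp)=\langle\bfp,\bfx_0\rangle-H(\bfx_0)$ is finite.

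Next I would show $H^*$ is differentiable on the interior of its domain (which contains $\mathcal{R}$, at least on the relevant component) with $\nabla H^*(\bfp)=h^{-1}(\bfp)$. This is the standard envelope/Danskin argument: since the maximizer $\bfx_0=h^{-1}(\bfp)$ is unique, $H^*$ is differentiable at $\bfp$ with gradient equal to the unique maximizer $\bfx_0$. Concretely, for $\bfp=h(\bfx_0)$ one checks $H^*(\bfq)\geq \langle\bfq,\bfx_0\rangle - H(\bfx_0) = H^*(\bfp)+\langle\bfq-\bfp,\bfx_0\rangle$, so $\bfx_0\in\partial H^*(\bfp)$; uniqueness of the maximizer forces the subdifferential to be the singleton $\{\bfx_0\}$, hence $H^*$ is differentiable there with $\nabla H^*(\bfp)=\bfx_0=h^{-1}(\bfp)$.

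Finally I would upgrade convexity of $H^*$ to strict convexity. This follows because $\nabla H^* = h^{-1}$ is injective (it is a genuine inverse map), and a differentiable convex function whose gradient is injective on a convex open set is strictly convex: if it were affine along some segment, the gradient would be constant there, contradicting injectivity. (Equivalently, one invokes that the conjugate of a differentiable convex function is strictly convex on the interior of its domain, a biconjugation fact valid here since $H=H^{**}$ on $\mathcal{D}$ by $H$ being convex, lower semicontinuous and proper.) Combining the pieces, $h^{-1}=\nabla H^*$ is the gradient of the strictly convex function $H^*$, as claimed. The main obstacle is the domain bookkeeping: $\mathcal{R}=h(\mathcal{D})$ need not be all of $\mathbb{R}^k$ nor obviously open, so I would either restrict attention to $\mathrm{int}(\mathrm{dom}\,H^*)$ and note $\mathcal{R}$ lies in it, or — since in the application ($H=F_0$ with $\nabla F_0(\bfX)=\bfX+\tfrac{T}{\lambda}\nabla G(\bfX,\bfy)$) the map is a global diffeomorphism of $\mathbb{R}^{d\times n}$ with uniformly positive-definite Hessian — simply observe $\mathcal{R}=\mathbb{R}^k$ and all subtleties vanish; I would state the lemma's proof at the level of generality of~\cite{kan2022multivariate}[Proposition~1] and cite it for the envelope step.
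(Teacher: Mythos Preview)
Your approach via the Legendre--Fenchel conjugate is correct and genuinely different from the paper's. The paper proceeds constructively: it (tacitly) assumes $H\in C^2$ with positive-definite Hessian, applies the Inverse Function Theorem to obtain that $\nabla h^{-1}$ exists and is SPD, then exhibits an explicit potential $\psi(\bfy)=\sum_j y_j\int_0^1[h^{-1}]_j(t\bfy)\,dt$ and verifies $\nabla\psi=h^{-1}$ by a direct integration-by-parts computation exploiting the symmetry of $\nabla h^{-1}$; strict convexity of $\psi$ then follows from its Hessian being SPD. You instead identify the potential as $H^*$ and read off $\nabla H^*=h^{-1}$ from subdifferential inversion (Danskin/envelope). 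Your argument is more conceptual and requires only the stated $C^1$ regularity, whereas the paper's is explicit and self-contained but implicitly uses the stronger hypothesis that $\nabla h$ exists and is positive definite (note strict convexity alone does not give this, e.g.\ $H(x)=x^4$). Both routes carry analogous domain caveats---the paper's line integral needs $h(\mathcal{D})$ to be star-shaped about the origin, yours needs closedness of $H$ for the full Fenchel machinery---and both are harmless in the actual application, where $\mathcal{D}=\mathbb{R}^{d\times n}$ and $F_0$ is smooth with uniformly positive-definite Hessian. One minor slip: ``by convexity of $-H$'' should read concavity (equivalently, convexity of $H$).
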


\begin{proof}
The strict convexity and continuous differentiability of $H$ implies the existence of $h^{-1}$ and that $\nabla h(\bfx)$ is symmetric positive definite (SPD) for all $\bfx \in \mathcal{D}$. Since $h$ is continuously differentiable and $\nabla h(\bfx)$ is SPD for all $\bfx \in \mathcal{D}$, by the Inverse Function Theorem~\cite[Theorem~9.24]{rudin1964principles}, $h^{-1}$ is also continuously differentiable and
\begin{equation*}
    \nabla h^{-1} (h(\bfx)) \nabla h(\bfx) = \bfI_k \quad \text{for all} \quad \bfx \in \mathcal{D}.
\end{equation*}
Here $\bfI_k$ is the $k\times k$ identity matrix.
This implies $\nabla h^{-1} (\bfy)$ is also SPD for all $\bfy \in h(\mathcal{D})$. Thus, by the symmetry of $\nabla h^{-1} (\bfy)$, we have
\begin{equation}\label{eq:symmetry_2nd_deri}
    \frac{\partial [h^{-1}]_i}{\partial y_j} =\frac{\partial [h^{-1}]_j}{\partial y_i} \quad \text{for all } i,j=1,2,...,k.
\end{equation}
Here, $y_j$ denotes the $j$th entry of $\bfy$.
Let $\psi(\bfy) = \sum_{j=1}^{k} y_j \int_0^1 [h^{-1}]_j (t\bfy) dt$. Consider its partial derivative
\begin{align*}
    \frac{\partial \psi}{\partial y_i} (\bfy) &= \int_0^1 [h^{-1}]_i (t\bfy) dt + \int_0^1 \sum_{j=1}^{k} y_j t \frac{\partial [h^{-1}]_j}{\partial y_i} (t\bfy) dt \\
    \intertext{using~\eqref{eq:symmetry_2nd_deri}, we get}
     &= \int_0^1 [h^{-1}]_i (t\bfy) dt + \int_0^1 \sum_{j=1}^{k} y_j t \frac{\partial [h^{-1}]_i}{\partial y_j} (t\bfy) dt \\
     \intertext{applying the chain rule $\frac{\partial}{\partial t} \Big( [h^{-1}]_i (t\bfy) \Big) = \sum_{j=1}^k y_j \frac{\partial [h^{-1}]_i}{\partial y_j} (t\bfy)$, we obtain}
    &= \int_0^1 [h^{-1}]_i (t\bfy) dt + \int_0^1 t \frac{\partial}{\partial t} \Big( [h^{-1}]_i (t\bfy) \Big) dt \\
    \intertext{performing integration by parts, we have}
    &= \cancel{\int_0^1 [h^{-1}]_i (t\bfy) dt} + \left. t [h^{-1}]_i (t\bfy) \right|_{t=0}^{t=1} \cancel{- \int_0^1 [h^{-1}]_i (t\bfy) dt} \\
    &= [h^{-1}]_i (\bfy), \quad \text{for} \quad i=1,2,...,k.
\end{align*}

Therefore, $\nabla \psi=h^{-1}$. Moreover $\psi$ is strictly convex because its Hessian $\nabla h^{-1} (\bfy)$ is SPD for all $\bfy \in h(\mathcal{D})$. Therefore, $h^{-1}$ is the gradient of the strictly convex function $\psi$.
\end{proof}

\section{Derivations for Robustness and Generalization}\label{sec:robust_generalization}
In this section, we provide detailed derivations for the robustness and generalization properties discussed in~\Cref{sec:intro} and~\Cref{sec:theory}.

Recall the stability in forward propagation inequality
\begin{equation}\label{eq:}
    \| \tilde{\bfy}_1 - \tilde{\bfy}_2 \|_2 \leq  C \| \bfX_1(0) - \bfX_2(0) \|_F + C' \| \bfy_1 -  \bfy_2 \|_2 ,
\end{equation}
where $C=\left( 1-\frac{TL^2}{\lambda} \right)^{-1}L$ and $C'=\left( 1-\frac{TL^2}{\lambda} \right)^{-1}\frac{TL^2}{\lambda}$.

\subsection{Distributional robustness}\label{subsec:distri_robust}
We first provide a proof of Theorem~\ref{prop:wassersteinstab}, the distributional robustness inequality
\begin{equation}\label{eq:distribution_robust_appendix}
        W_p(\mathbf{T}_\# \mu, \mathbf{T}_\# \nu) \leq 
        \tilde{C} 
        W_p(\mu, \nu), \quad \text{for any } p \geq 1,
        \end{equation}
where $\mathbf{T}_\#$ denotes the pushforward operator under the trained Transformer.

\begin{proof}
    Define $\bfx_1 = \text{vec}(\mathbf{X}_1(0))$ and $\bfx_2 = \text{vec}(\mathbf{X}_2(0))$, and note that $\|\mathbf{X}_1(0) - \mathbf{X}_2(0)\|_F = \|\bfx_1 - \bfx_2\|_2$. Suppose $\bfx_1 \sim \mu$ and $\bfx_2 \sim \nu$. Holding the output data to be the same, the stable forward propagation inequality gives us $\|\mathbf{T}(\bfx_1) - \mathbf{T}(\bfx_2)\|_2 \le C \|\bfx_1- \bfx_2 \|_2$. By equivalence of $p$-norms we have that there exists a constant $\hat{C}(p)>0$ such that 
    \begin{align}
        \|\mathbf{T}(\bfx_1) - \mathbf{T}(\bfx_2) \|_p \le \hat{C}(p) L(1 - TL^2 \lambda^{-1})^{-1} \| \bfx_1 - \bfx_2\|_p.
    \end{align}

    For a fixed $p$, let $\pi \in \Gamma(\mu, \nu)$ to be the optimal coupling between $\mu$ and $\nu$. Then the pushforward of $\pi$ under the map $(\mathbf{T} \otimes \mathbf{T})(\bfx_1,\bfx_2) = (\mathbf{T}(\bfx_1), \mathbf{T}(\bfx_2)) $ is a suboptimal coupling between measures $\mathbf{T}_\sharp \mu$ and $\mathbf{T}_\sharp \nu$. Therefore, we have
    \begin{align*}
        W_p^p(\mathbf{T}_\sharp \mu, \mathbf{T}_\sharp \nu) &\le 
        \int \|\bfy_1 - \bfy_2 \|_p^p d(\mathbf{T}\otimes \mathbf{T})_\sharp  \pi(\bfy_1,\bfy_2) \\
        &= \int \|\mathbf{T}(\bfx_1) - \mathbf{T}(\bfx_2)\|_p^p d  \pi(\bfx_1,\bfx_2) \\ 
        & \le \hat{C}(p)^p L^p(1 - TL^2 \lambda^{-1})^{-p} W_p^p(\mu,\nu).
    \end{align*}
    Taking the $p$-th root on each side, we obtain the desired result. 
\end{proof}

\subsection{Derivation of Robustness Under Sampling Error or Data Perturbation}
Note that the data used for training always come from the underlying true distribution in theory; however, sampling error and perturbation error may occur when generating the training dataset. We show here that the forward model remains accurate when such errors are small. 

To derive this property, we first specify the variables. 
Let $\mu$ and $\tilde{\mu}$ denote the training distributions for the input and output data, and $\nu$ and $\tilde{\nu}$ the true distributions for the input and output data, respectively. Denote $\mathbf{T}_\#$ as the pushforward operator under the trained Transformer. 

Our goal is to show that $W_p(\mathbf{T}_\# \nu, \tilde{\nu})$ remains bounded. That is, the pushforward (induced by the trained Transformer) of the true distribution for input data closely approximates the true distribution of output data.

Applying triangle inequality to~\eqref{eq:distribution_robust_appendix}, we obtain
\begin{align*}
    -W_p(\mathbf{T}_\# \mu,  \tilde{\mu}) -W_p(\tilde{\mu}, \tilde{\nu})  + W_p(\mathbf{T}_\# \nu, \tilde{\nu}) \leq W_p(\mathbf{T}_\# \mu, \mathbf{T}_\# \nu) \leq \tilde{C} W_p(\mu, \nu).
\end{align*}
The use of Wasserstein metrics is crucial in this context. The derivation above relies on the fact that the Wasserstein distance $W_p$ satisfies the triangle inequality—a property not shared by divergences such as the KL divergence, for which this argument would not hold.
Thus, here we have
\begin{equation}\label{eq:stability_data_error}
    W_p(\mathbf{T}_\# \nu, \tilde{\nu}) \leq \tilde{C} \underbrace{W_p(\mu, \nu)}_{\text{input distribution gap}} + \underbrace{W_p(\mathbf{T}_\# \mu, \tilde{\mu})}_{\approx \text{training loss}} + \underbrace{W_p(\tilde{\mu}, \tilde{\nu})}_{\text{output distribution gap}}.
\end{equation}
Here, on the right hand side, the first term is the input distribution gap: it measures the difference between the training distribution and the true distribution of the input data. The third term measure the same for output data. We can bound both these terms using sample complexity results for Wasserstein metrics ($p\ge 1$)
\cite{FournierGuillin2015}, see also \eqref{eq:sample_complexity} below. Lastly, the second term in \eqref{eq:stability_data_error} quantifies the difference between the training distribution for output data and the pushforward (induced by the trained Transformer) of the training distribution for input data. This term is an informative surrogate for the training loss. When the training loss decreases, this term will tend to decrease. And when the training loss is 0, this term will be 0 too. 

Assuming the Transformer model is trained with sufficient accuracy, when the sampling error or perturbation error in the data is small, both the first and third terms will remain relatively small, showing that $W_p(\mathbf{T}_\# \nu, \tilde{\nu})$ is not only bounded but also small. Thus, \Cref{eq:stability_data_error} indicates robustness with respect to the training dataset. The model’s prediction will remain close to the underlying true distribution even in the presence of mild sampling error or data corruption.

\subsection{Out-of-Distribution Generalization} A great benefit of having Lipschitz functions is that we may obtain performance guarantees on the quality of the model learned from finite training samples is at making predictions on out-of-distribution samples. Suppose map $\mathbf{T}$ is trained on input samples \(X_1, \dots, X_N \sim \mu\) that form empirical measure $\hat{\mu}_N$. We use ideas from distributionally robust optimization (DRO) to produce performance bounds on out-of-sample performance and non-asymptotic generalization bounds \cite{esfahanikuhn2018data,sinha_DRO_2018certifiable,Jegelka_DRO_NEURIPS2019}.

Denote $\mathcal{W}_r(\hat{\mu}_N)$ to be the Wasserstein ball of radius $r$ around $\hat{\mu}_N$, i.e., $\mathcal{W}_r(\hat{\mu}_N):= \{\rho \in \mathcal{P}(\Omega) : W_1(\rho,\hat{\mu}_N) \le r \}$. For simplicity, consider the setting where $\Omega$ is a bounded domain. Suppose we have out-of-distribution samples from distribution $\nu \in \mathcal{W}_r(\hat{\mu}_N)$, we wish to obtain bounds on the test error of the Transformer $\mathbf{T}$ on the out-of-distribution samples, i.e., $\mathbb{E}_\nu\left[G(\mathbf{T}(X),y)\right] = \mathbb{E}_{T_\sharp \nu}\left[G(Z,y) \right]$, where $Z = T(X)$ in terms of the training error $\mathbb{E}_{\hat{\mu}_N}\left[G(\mathbf{T}(X), y) \right] = \mathbb{E}_{T_\sharp\hat{\mu}_N}\left[ G(Z,y)\right]$ where $G$ is the terminal condition of the training objective~\eqref{eq:training_obj}. As $\Omega$ is a bounded domain, $G$ is a Lipschitz function. 

Observe that by Kantorovich duality, the 1-Wasserstein distance is given by the variational formula
\begin{align*}
    W_1(\hat{\mu}_N, \nu) = \sup_{\varphi \in \text{Lip}_1(\Omega)} \{\mathbb{E}_{\hat{\mu}_N}[\varphi(X)] - \mathbb{E}_{\nu}[\varphi(X)] \},
\end{align*}
where $\text{Lip}_1(\Omega)$ is the set of $1$-Lipschitz functions on $\Omega$. Observe that with \Cref{prop:wassersteinstab} we can obtain a bound on the expected test error with respect to the distribution $\nu \in \mathcal{W}_r(\hat{\mu}_N)$
\begin{align}
\mathbb{E}_{\mathbf{T}_\sharp \nu}[ G(Z,y)] - \mathbb{E}_{\mathbf{T}_\sharp\hat{\mu}_N}[ G(Z,y)] &\le  \|G\|_{\text{Lip}} W_1(\mathbf{T}_\sharp \nu, \mathbf{T}_\sharp \hat{\mu}_N) \notag\\ &\le\| G\|_{\text{Lip}} L(1 - \lambda^{-1}TL^2)^{-1} W_1(\nu,\hat{\mu}_N) \label{eq:generalization:1}\\ & = \| G\|_{\text{Lip}}L(1 - \lambda^{-1}TL^2)^{-1} r, \notag
\end{align}
which implies that 
\begin{align}\label{eq:deterministic_generalization}
     \mathbb{E}_\nu[ G(\bfT(X),y)] \le \mathbb{E}_{\hat{\mu}_N}[ G(\bfT(X),y)] +  r\| G\|_{\text{Lip}}L(1 - \lambda^{-1}TL^2)^{-1}.
\end{align}
This bound is deterministic and holds for any $ \nu \in \mathcal{W}_r(\hat{\mu}_N) $. If a statistical argument ensures this inclusion with high probability, then this becomes a generalization guarantee. In particular, if $\nu = \mu$, the distribution that generated the empirical distribution $\mu$, then $r$ is computable as a function of samples and we can obtain a non-asymptotic generalization bound.

\subsection{Generalization Bound via Concentration Inequalities} To turn the deterministic bound into a statistical generalization guarantee, first recall $\hat{\mu}_N$ to be the empirical measure constructed from the training data. 
Then $\mu \in \mathcal{W}_r(\hat{\mu}_N)$ with high probability--for a computable radius $r$--which is ensured by the concentration of the empirical measure in the Wasserstein distance. Indeed, under mild assumptions  (e.g., bounded domains or sub-Gaussian tails), we have a sample complexity result \cite{FournierGuillin2015}:
there exists $C > 0$ such that with probability at least $1-\delta$, 
\begin{align}\label{eq:sample_complexity}
    W_1(\mu, \hat{\mu}_N) \le CN^{-1/d} + \sqrt{\frac{\log(1/\delta)}{N}} = r(N,\delta)\, .
\end{align}
Choosing the radius $r$ as a function of $\delta$ and the number of samples $N$ accordingly, we obtain, \textcolor{black}{by combining \eqref{eq:generalization:1} and \eqref{eq:sample_complexity}}, 
that with probability at least $1-\delta$, 
\begin{align}
    \mathbb{E}_{\mu}[G(\bfT(X),y)] \le \mathbb{E}_{\hat{\mu}_N}[ G(\bfT(X),y)] + r(\delta, N)  \cdot \| G\|_{\text{Lip}}L(1 - \lambda^{-1}TL^2)^{-1}. 
\end{align}
This is a \emph{non-asymptotic} generalization bound for Lipschitz losses in terms of the Wasserstein distance between $\mu$ and $\hat{\mu}_N$, linking distributional robustness of generalization performance. We emphasize this is only a crude application of DRO tools for analyzing Transformers that arise from optimal control tools. \textcolor{black}{Finally, we note that Wasserstein distances are particularly well-suited for sample complexity analysis such as \eqref{eq:sample_complexity}. In contrast to KL divergence, which requires absolute continuity between distributions, Wasserstein distances remain well-defined even when comparing an empirical measure $\hat{\mu}_N$—a discrete distribution—to a possibly continuous distribution $\mu$.}

\section{Experimental Details and Results}\label{sec:exp_details}
We report the detailed experimental setups here. We adapted the code provided by~\cite{sander2022sinkformers, yang2020potential}, maintaining the same default data processing setup, hyperparameters, and other experimental settings as used in their implementations. Our implementation is based on PyTorch~\cite{paszke2017automatic} and experiments are conducted using NVIDIA A100 GPUs with 40GB of memory. Runtime varies by experiment. While \Cref{thm:regular_trjectory} shows that the optimally trained model yields a straight hidden state trajectory with constant speed, in practice we use 8–20 integration steps and do not yet fully exploit this property to reduce runtime. In practice, fewer steps may suffice, which we leave for future work.

We also compared against N-ODE Transformer, an existing continuous-time Transformer formulation which is introduced in~\cite{baier2020n} and has been considered in other works. For details about the formulation and specific applications, see the discussion in~\Cref{sec:related}. In the reported results, we refer to N-ODE Transformer with and without transport cost as unregularized N-ODE Transformer and regularized N-ODE Transformer, respectively. The goal here is to demonstrate that our proposed approach is not only theoretically sound, but also excels in performance when compared against the baseline method and similar approaches.

\paragraph{Point Cloud Classification.}
Given the ModelNet40 dataset, we experiment with the Set Transformer model~\cite{lee2019set}. It has an encoder-decoder architecture and is specifically designed to process unordered data, such as point clouds, ensuring that the output remains permutation invariant to its input. Following the setup of~\cite{sander2022sinkformers}, we use a Set Transformer~\cite{lee2019set} with two Induced Self Attention Blocks (ISABs) in the encoder, where each ISAB contains two Transformer blocks, and with a Pooling by Multihead Attention (PMA) Module in the decoder.
For each instance, we uniformly sample 5000 points from each element in the dataset. 
For the continuous-time models, we use the same architecture except that we put a fully-connected layer before the Transformer blocks so that the dimension is consistent for continuous-time dynamics. Also the hidden dimensions $d$ and $k$ of the ISABs are reduced from $256$ to $200$. This reduces the number of parameters for the ISABs by $24\%$. 
We use an Adam optimizer, with batch size 64, 200 training epochs, and learning rate of $1 \times 10^{-3}$. 
For the regularized N-ODE Transformer and OT-Transformer, the regularization hyperparameters are $\lambda=0.1$ and $\lambda=1$, respectively, as they provide the optimal performance in our tests. We use $T=1$ and a total of 8 time steps for the numerical integration.

We perform the experiment over five random trials and report the best test accuracies in~\Cref{tab:pointcloud}. The unregularized continuous-time models encountered gradient explosion, resulting in NaN outputs, and the issue persists even with slight regularization. We found that the models never suffered from gradient explosion with sufficient regularization, indicating that transport cost effectively stabilizes the training process. Hence, we only report the performance of the regularized models. The baseline Set Transformer obtains an average test accuracy of 87.4\%. The regularized N-ODE Transformer achieves an accuracy of 87.5\%, indicating negligible improvement over the vanilla model. Our OT-Transformer shows a sizable improvement and reports an average 89.9\% test accuracy even with a smaller model. From the learning curves in~\Cref{fig:pointcloud}, we see that our model reports a lower data-fitting loss for training data compared to the vanilla model, despite the inclusion of a regularization term.
This example also highlights the generalizability of our model: despite using an encoder-decoder architecture, it demonstrates clear advantages and noticeable performance improvements over the baseline.

\begin{table}[h]
    \centering
    \caption{Number of parameters for the Transformer blocks, best and final test accuracies (with standard deviation) across five trials for the point cloud experiment.}
    \vspace{1em}
    \begin{tabular}{@{}lccc@{}}
        \toprule
        \textbf{Method/Exp.} & \textbf{Para. Count} & \textbf{Best Test Accuracy} & \textbf{Final Test Accuracy}\\ 
        \midrule
        Baseline & 0.86M & $87.4\% \pm 0.45\%$ & $86.6\% \pm 0.67\%$ \\ 
        Reg. N-ODE Trans. & 0.65M & $87.5\% \pm 0.51\%$ & $86.7\% \pm 0.43\%$  \\ 
        OT-Trans. (Ours) & 0.65M & $\mathbf{89.9} \textbf{\%} \bm{\pm}\mathbf{ 0.42} \textbf{\%}$ & $\mathbf{89.3} \textbf{\%} \bm{\pm}\mathbf{ 0.69} \textbf{\%}$ \\ 
        \bottomrule
    \end{tabular}
    \label{tab:pointcloud}
\end{table}

\begin{figure}[h]
    \centering
    \includegraphics[width=0.95\textwidth]{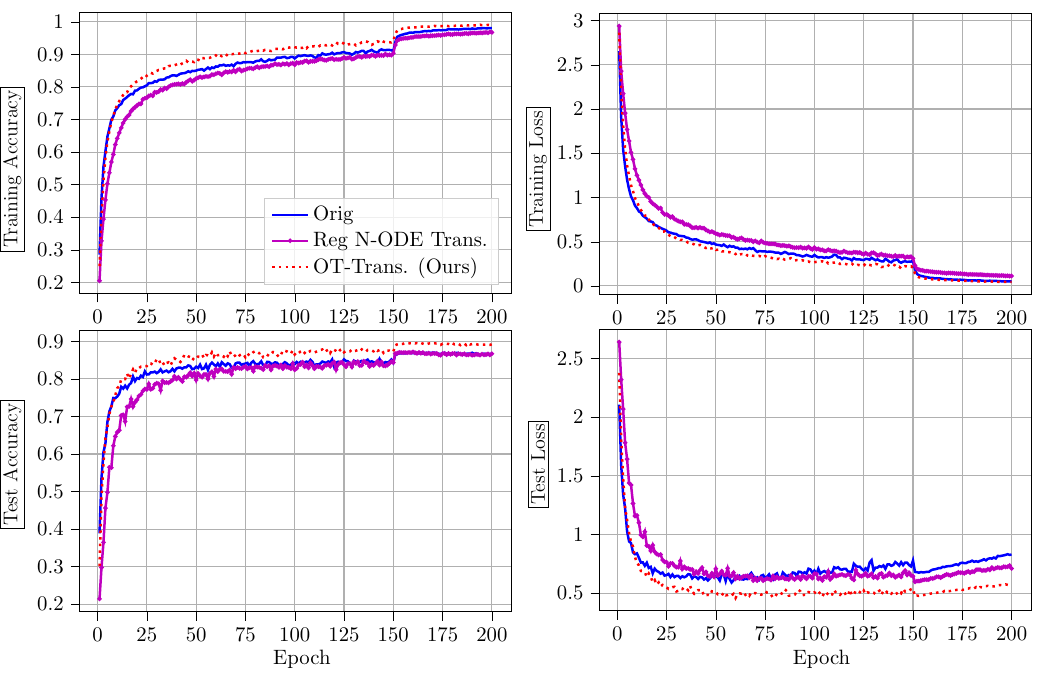}
    \caption{Accuracy and data-fitting loss for the point cloud experiment (averaged over five trials)}
    \label{fig:pointcloud}
\end{figure}

\paragraph{MNIST Classification.}
Following~\cite{sander2022sinkformers}, the baseline model ViT has one Transformer block with a single-head self-attention layer and no fully-connected layer. Since it has only one Transformer block, N-ODE Transformer and our OT-Transformer share the same formulation, and we report the results as OT-Transformer. 

The OT-Transformer uses the same model architecture as the baseline model, except that the hidden dimensions $d$ and $k$ of the self-attention layer are reduced to 64 from 128. This reduces the number of parameters by over $80\%$. 
Adapting from the baseline setting, the patch size is $7 \times 7$. We use an Adam optimizer. The number of epochs is 45 and the batch size is 100. The learning rate is set to $5\times 10^{-4}$ for the first 35 epochs, then decreased to $5\times 10^{-5}$ until the 41st epoch, at which point it is reduced to $5\times 10^{-6}$. For OT-Transformer, the regularization hyperparameter is $\lambda=0.01$ as it provides the optimal performance in our tests. We use $T=1$ and a total of 20 time steps for the numerical integration.

The experiments are conducted over five random trials. The best test accuracies are reported in~\Cref{tab:mnist}. OT-Transformer demonstrates significant improvements over the baseline in both accuracy and model efficiency. The baseline model achieved a test accuracy of 93.0\%. The unregularized OT-Transformer improves the test accuracy to 96.8\%, although it uses a much smaller model architecture. The transport cost regularization further improves the test accuracy to 97.1\% while maintaining the same reduced parameter count. Notably, OT-Transformer also exhibits significantly lower standard deviation across five trials when compared to the baseline and unregularized model, indicating enhanced stability and reliability in its performance. Interestingly, when we compare the learning curves of the unregularized and regularized OT-Transformers in~\Cref{fig:mnist}, we observe that including the transport cost regularization also reduces the training loss for data-fitting and accuracy. 

\begin{table}[h]
    \centering
    \caption{Number of parameters for the Transformer blocks, best and final test accuracies (with standard deviation) across five trials for the MNIST image classification experiment.}
    \vspace{1em}
    \begin{tabular}{@{}lccc@{}}
        \toprule
        \textbf{Method/Exp.} & \textbf{Para. Count} & \textbf{Best Test Accuracy} & \textbf{Final Test Accuracy} \\ 
        \midrule
        Baseline & 93k & $93.0\% \pm 0.69\%$ & $93.0\% \pm 0.67\%$ \\ 
        Unreg. OT-Trans. & 18k & $96.8\% \pm 0.23\%$ & $96.8\% \pm 0.25\%$ \\ 
        OT-Trans. (Ours) & 18k & $\mathbf{97.1} \textbf{\%} \bm{\pm}
        \mathbf{0.16\%}$ & $\mathbf{97.1} \textbf{\%} \bm{\pm}
        \mathbf{0.15\%}$ \\ 
        \bottomrule
    \end{tabular}
\label{tab:mnist}
\end{table}

\begin{figure}[h]
    \centering
    \includegraphics[width=0.95\textwidth]{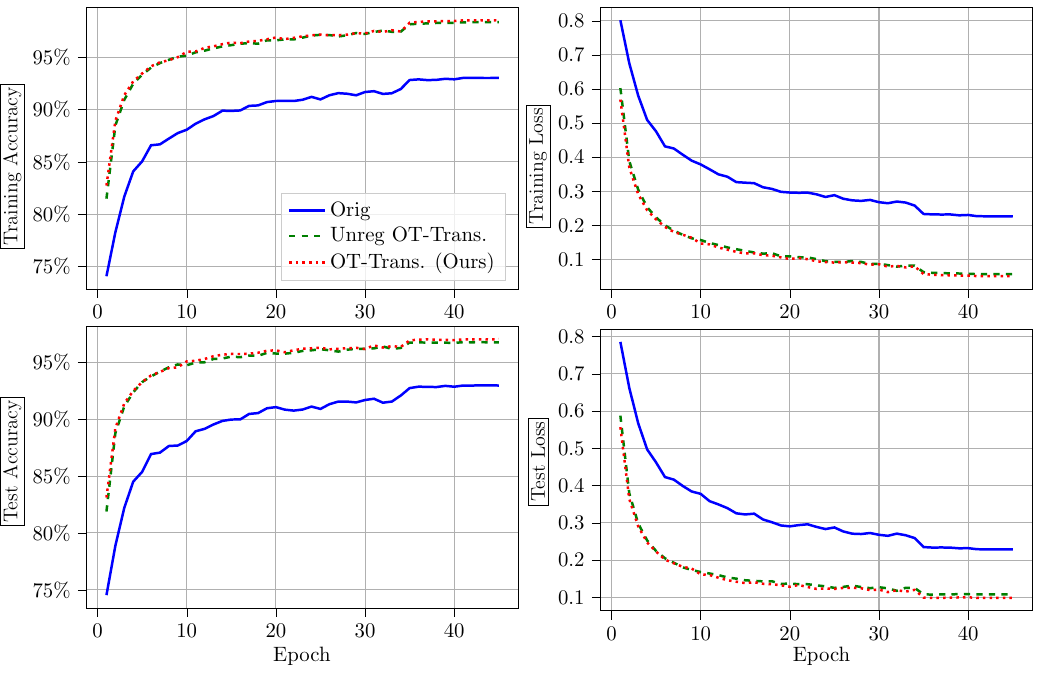}
    \caption{Accuracy and data-fitting loss for the MNIST image classification experiment (averaged over five trials)}
    \label{fig:mnist}
\end{figure}

\paragraph{Cats and Dogs Classification.}
We again use ViT. The patch size is $16 \times 16$. We use an Adam optimizer. The learning rate is $3 \times 10^{-5}$. The number of epochs is 250, and the batch size is 64. The hidden dimensions $d$ and $k$ are 128. For the baseline model, it has 6 Transformer blocks. For the continuous-time models, the number of Transformer blocks is reduced to 5; this reduces the number of parameters for the Transformer blocks by around $20\%$. 
For the regularized N-ODE Transformer and OT-Transformer, the regularization hyperparameters are $\lambda=0.005$ and $\lambda=0.01$, respectively, as they provide the optimal performance in our tests. We use $T=1$ and a total of 20 time steps for the numerical integration.

We report in~\Cref{tab:catdog} the test accuracies over different seeded trials. We observe again that our OT-Transformer has the best performance and obtains a test accuracy of $79.0\%$, improving from the baseline's $77.6\%$. The standard deviation of the test accuracy, at $0.31\%$, is significantly lower than the baseline value of $0.86\%$, showing our proposed approach is more robust and reliable. We also observe that incorporating the transport cost regularization improves generalization and stability of OT-Transformer; without it, the average and standard deviation of test accuracy worsen to $78.2\%$ and $0.39\%$, respectively. Both the unregularized and regularized N-ODE Transformers report a test accuracy of 75.6\%, which is worse than the baseline model, making them undesirable methods for the problem. Unlike our model, incorporating the regularization also has little effect on the performance of N-ODE Transformer. This is likely due to the incompatibility of N-ODE Transformer and the regularizatio.  We report the learning curves in~\Cref{fig:catdog}. When we compare the learning curves of the unregularized and regularized OT-Transformers, we see that including the transport cost regularization also improves the training loss for data-fitting and accuracy.

\begin{table}[h]
    \centering
    \caption{Number of parameters for the Transformer blocks, best and final test accuracies (with standard deviation) across three trials for the cats and dogs image classification experiment.}
    \vspace{1em}
    \begin{tabular}{@{}lccc@{}}
        \toprule
        \textbf{Method/Exp.} & \textbf{Para. Count} & \textbf{Best Test Accuracy} & \textbf{Final Test Accuracy} \\ 
        \midrule
        Baseline & 1.77M & $79.3\% \pm 0.52\%$ & $77.6\% \pm 0.86\%$ \\ 
        Unreg. N-ODE Trans. & 1.48M & $76.4\% \pm 0.37\%$ & $75.6\% \pm 0.48\%$ \\ 
        Reg. N-ODE Trans. & 1.48M & $76.4\% \pm 0.30\%$ & $75.6 \% \pm 0.03 \%$ \\ 
        Unreg. OT-Trans. & 1.48M & $78.8\% \pm 0.63\%$ & $78.2\% \pm 0.39\%$ \\ 
        OT-Trans. (Ours) & 1.48M & $\mathbf{79.5} \textbf{\%} \bm{\pm}
        \mathbf{0.46\%}$ & $\mathbf{79.0} \textbf{\%} \bm{\pm}
        \mathbf{0.31\%}$ \\ 
        \bottomrule
    \end{tabular}
\label{tab:catdog}
\end{table}

\begin{figure}[h]
    \centering
    \includegraphics[width=0.95\textwidth]{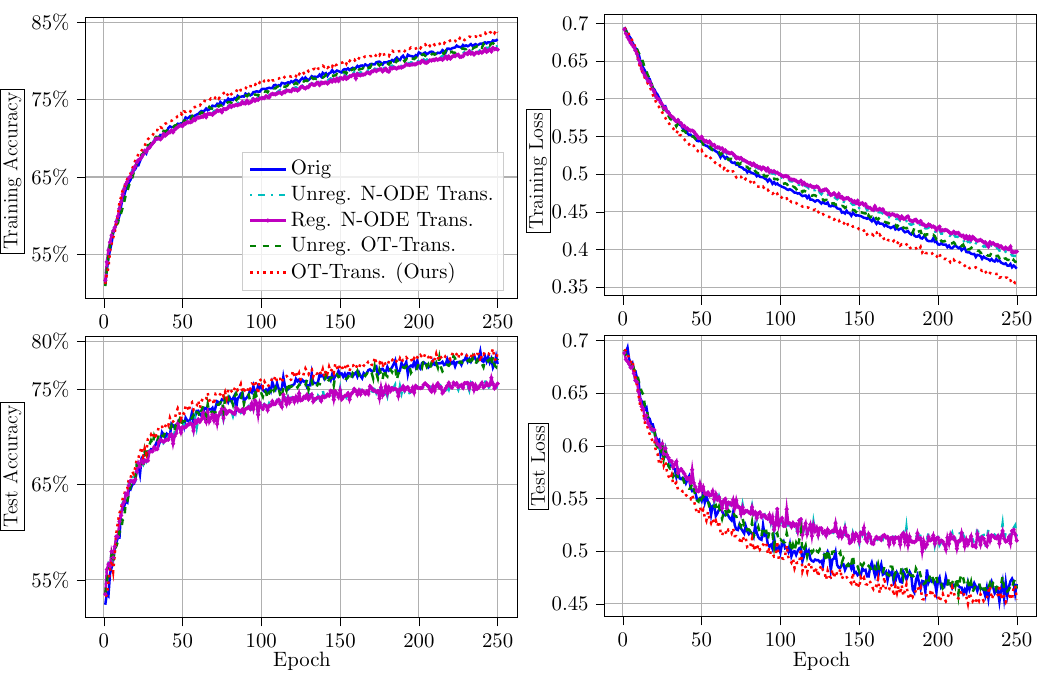}
    \caption{Accuracy and data-fitting loss for the cats and dogs image classification experiment}
    \label{fig:catdog}
\end{figure}

\paragraph{Sentiment Analysis.}
We use an identical baseline Transformer architecture as in~\cite{sander2022sinkformers}, which has six layers of Transformer blocks. The OT-Transformer counterpart has only 3 layers, reducing the number of parameters of the Transformer blocks by half.
We use an Adam optimizer with 15 epochs. The learning rate is $1 \times 10^{-4}$ for the first 12 epochs and $1 \times 10^{-5}$ afterward. The batch size is 64. The hidden dimensions $d$ and $k$ are 256. The batch size is 64. 
For both the regularized N-ODE Transformer and OT-Transformer, the regularization hyperparameter is $\lambda=0.5$, as it provides the optimal performance in our tests.
We use $T=1$ and a total of 8 time steps for the numerical integration.

We repeat the experiment for five random trials. In all trials, the unregularized N-ODE Transformer and OT-Transformer experienced issues with exploding gradients, resulting in NaN outputs. In order to estimate how the unregularized model would perform under more stable conditions, we impose a slight transport cost with $\lambda=0.001$. We note that the continuous-time models with slight and standard regularization completed all trials without issues. This shows the effectiveness of the transport cost regularization in stabilizing the training process and avoiding exploding gradients.

The best test accuracies are reported in~\Cref{tab:sentiment}. The baseline architecture achieved a test accuracy of 83.9\%. The N-ODE Transformers with slight and standard regularization report a test accuracy of 83.6\% and 83.9\%, respectively, which are not better than the baseline model. The N-ODE Transformer with slight regularization reports a test accuracy of 83.6\%. With a standard regularization, the test accuracy slightly increases to 83.9\%. However, both results are not better than that of the baseline model. The OT-Transformer with slight regularization reported a test accuracy of 82.7\%, which is subpar compared to the baseline model. 
On the other hand, the standard OT-Transformer achieves the best test accuracy of 84.6\%, which is 0.7\% higher than the baseline model, in spite of using a smaller model. The test accuracy is also 0.7\% higher than that of the N-ODE Transformer's. We note that with the incorporation of transport cost, the accuracy of N-ODE Transformer is improved by only 0.3\%. In contrast, the accuracy of OT-Transformer is boosted by 1.9\%. Again, this is likely due to that our continuous-time formulation is inherently more suited for transport cost regularization than that of N-ODE Transformer.

The learning curves are reported in~\Cref{fig:sentiment}. When we compare the results of the unregularized and regularized OT-Transformers, we see that the regularization effectively reduces overfitting by increasing training loss while simultaneously lowering test loss. Overall, we see that the combination of our continuous-in-time formulation and transport cost regularization enhances parameter efficiency and generalization of Transformers.

\begin{table}[h]
    \centering
    \caption{Number of parameters for the Transformer blocks, best and final test accuracies (with standard deviation) across five trials for for the sentiment analysis experiment. $^*$: The unregularized continuous-time models experienced gradient explosion. And we estimate their performance by using a slight regularization $\lambda=0.001$.}
    \vspace{1em}
    \begin{tabular}{@{}lccc@{}}
        \toprule
        \textbf{Method/Exp.} & \textbf{Para. Count} & \textbf{Best Test Accuracy} & \textbf{Final Test Accuracy} \\ 
        \midrule
        Baseline & 4.74M & $83.9\% \pm 0.26\%$ & $\mathbf{83.7} \textbf{\%} \bm{\pm} \mathbf{0.21} \textbf{\%}$ \\ 
        Unreg. N-ODE Trans. & 2.37M & $83.6\% \pm 0.40\%$$^*$ & $83.4\% \pm 0.40\%$$^*$  \\ 
        Reg. N-ODE Trans. & 2.37M & $83.9\% \pm 0.48\%$ & $83.5\% \pm 0.84\%$ \\ 
        Unreg. OT-Trans. & 2.37M & $82.7\% \pm 0.38\%$$^*$ & $82.1\% \pm 0.89\%$$^*$\\ 
        OT-Trans. (Ours) & 2.37M & $\mathbf{84.6} \textbf{\%} \bm{\pm} \mathbf{0.55} \textbf{\%}$ & $83.7\% \pm 0.86\%$ \\ 
        \bottomrule
    \end{tabular}
\label{tab:sentiment}
\end{table}

\begin{figure}[h]
    \centering
    \includegraphics[width=0.95\textwidth]{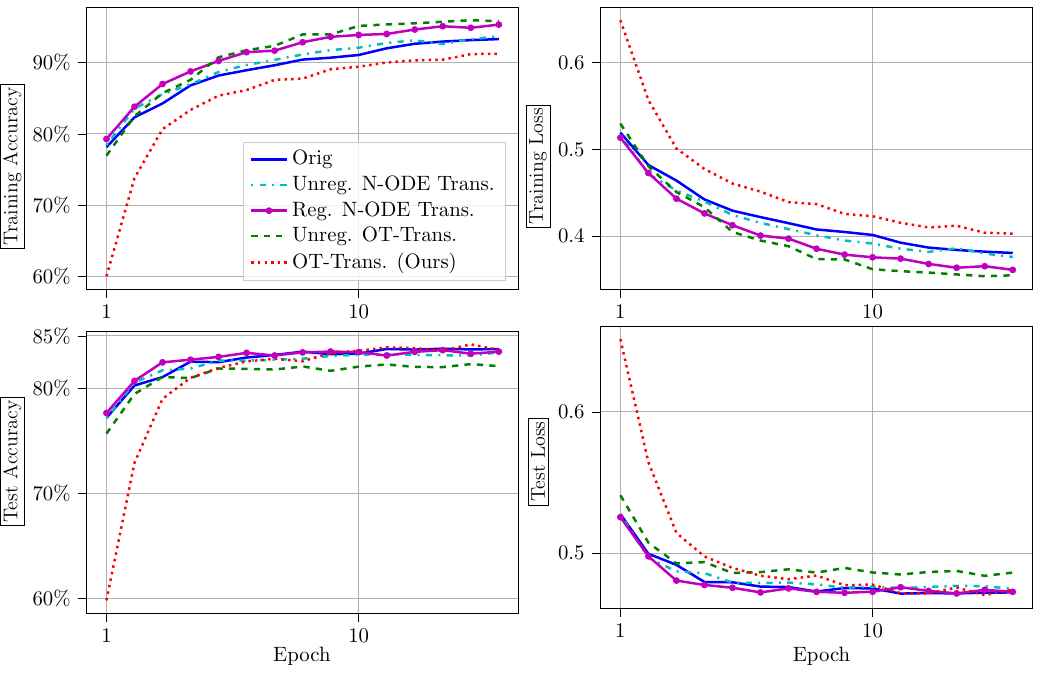}
    \caption{Accuracy and data-fitting loss for the sentiment analysis experiment}
    \label{fig:sentiment}
\end{figure}

\paragraph{Text Generation: nanoGPT on Shakespeare (Character-level) Dataset.}
We use the nanoGPT implementation provided by~\cite{Karpathy2022} as the baseline model. The baseline architecture uses a Transformer model with $6$ Transformer blocks, 6 self-attention heads and hidden dimension of $384$, resulting in a total parameter count of 10.65 million. In contrast, for the OT-Transformer model, we reduce both the number of Transformer blocks and number of attention heads to 5 and hidden size to 320, in total this reduces the total parameter count to 6.16 million. We use $10$ time steps for the numerical integration and $\lambda=1$. We run a total of $5000$ training iterations. The learning rate decays from $1 \times 10^{-3}$ to $1 \times 10^{-4}$ over the course of training. Model performances are measured using the test loss in all tests. The reported results are reported in~\Cref{tab:nanogpt_shakespeare} and averaged over 3 random trials. 
We highlight two main findings from the results. First, our proposed model displays significant improvement in parameter efficiency compared to the baseline solution, this is consistent with earlier results from different classification examples. Second, our model outperforms the baseline model in both the best recorded test loss and final step test loss, more notably our model ensures generalization while the baseline model overfits and shows a decline in performance, this coincides with our theoretical results and further highlights the importance of regularization.

\begin{table}[h]
    \centering
    \caption{Number of parameters for the models, best and final test losses (with standard deviation) and additional LLM metrics across three trials for for the nanoGPT experiment. Rouge and Bleu scores are omitted as less meaningful for character-level experiments; see~\cite{gehrmann2021gem,denoual2005bleu}.}
    \vspace{1em}
    \resizebox{\textwidth}{!}{
    \begin{tabular}{@{}lccccccc@{}}
        \toprule
        \textbf{Method} & \textbf{Para. Count} & \textbf{Best Test Loss} & \textbf{Final Test Loss} & \textbf{Perplexity} & \textbf{BERT-P } & \textbf{BERT-R} & \textbf{BERT-F1} \\ 
        \midrule
        Baseline & 10.65M & $1.47 \pm 0.005$ & $2.68 \pm 0.006$ & 17.26 & 74.4 \% & 81.1 \% & 77.6 \% \\ 
        Ours & 6.16M & $\mathbf{1.44} \bm{\pm} \mathbf{0.004}$ & $\mathbf{1.44} \bm{\pm} \mathbf{0.005}$ & $\mathbf{4.40}$ & $\mathbf{74.8 \%}$ & $\mathbf{81.2 \%}$ & $\mathbf{77.9 \%}$ \\ 
        \bottomrule
    \end{tabular}
    }
\label{tab:nanogpt_shakespeare}
\end{table}

\begin{figure}[h]
    \centering
    \includegraphics[width=0.95\textwidth]{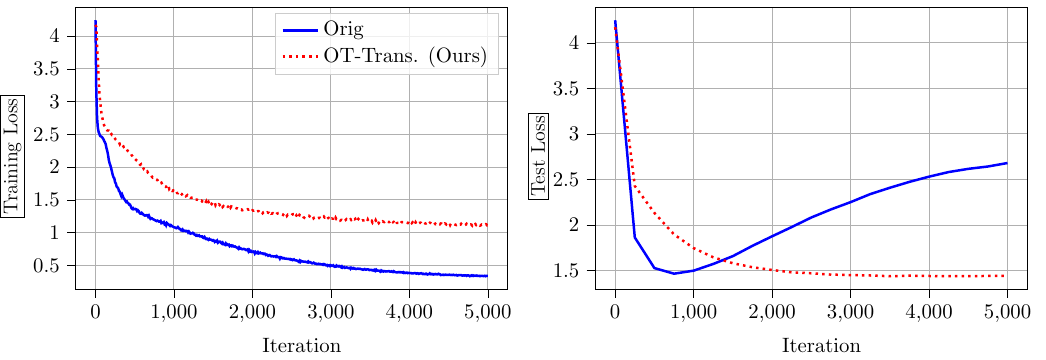}
    \caption{Training and test loss for the nanoGPT on Shakespeare (character-level) experiment.}
    \label{fig:nanoGPT}
\end{figure}

\paragraph{Interpretation of Model Uncertainty through the Perplexity Metric}
We provide a more detailed explanation of model robustness in terms of the prediction uncertainty of a trained model. Note that the perplexity metric, derived from the test loss, measures the discrepancy between the predicted distribution and the underlying true distribution. In simple terms, a model with perplexity $10$ behaves as if it is equally uncertain among 10 plausible choices, even if the actual distribution is not uniform. Thus, a lower perplexity is usually more desirable in practice. 
We derive \Cref{tab:nanpgpt_perplexity} from \Cref{tab:Robustness_tests} by including the additional perplexity metric. A visualization of the perplexity metric is also shown in~\Cref{fig:perplexity}. It is evident that our model yields much smaller perplexity values across different noise levels, indicating a much lower level of uncertainty in the model’s predictions.

\begin{table}[h]
    \centering
    \caption{Test loss and perplexity metrics of the nanoGPT model under noise, where noise is introduced as random text replacement.}
    \vspace{1em}
    \resizebox{0.8\textwidth}{!}{
    \begin{tabular}{clccccc}
\toprule
\textbf{Experiment} & \textbf{Metric$\slash$replace rate} & \textbf{0.0} & \textbf{ 0.005} & \textbf{ 0.01} & \textbf{ 0.05} & \textbf{ 0.1} \\
\midrule
\multirow{4}{*}{\makecell{\textbf{NanoGPT} \\ (text replace)}} 
 & Loss Base. & 2.68 & 2.78 & 2.88 & 3.65 & 4.60 \\
 & Loss Ours & \textbf{1.44} & \textbf{1.49} & \textbf{1.55} & \textbf{1.95} & \textbf{2.42} \\
 & Perplexity Base. & 14.59 & 16.12 & 17.81 & 38.47 & 99.48 \\
 & Perplexity Ours & \textbf{4.22} & \textbf{4.44} & \textbf{4.71} & \textbf{7.03} & \textbf{11.25}  \\
\bottomrule
\end{tabular}
    }
\label{tab:nanpgpt_perplexity}
\end{table}

\begin{figure}[h!]
    \centering
    \includegraphics[width=0.5\textwidth]{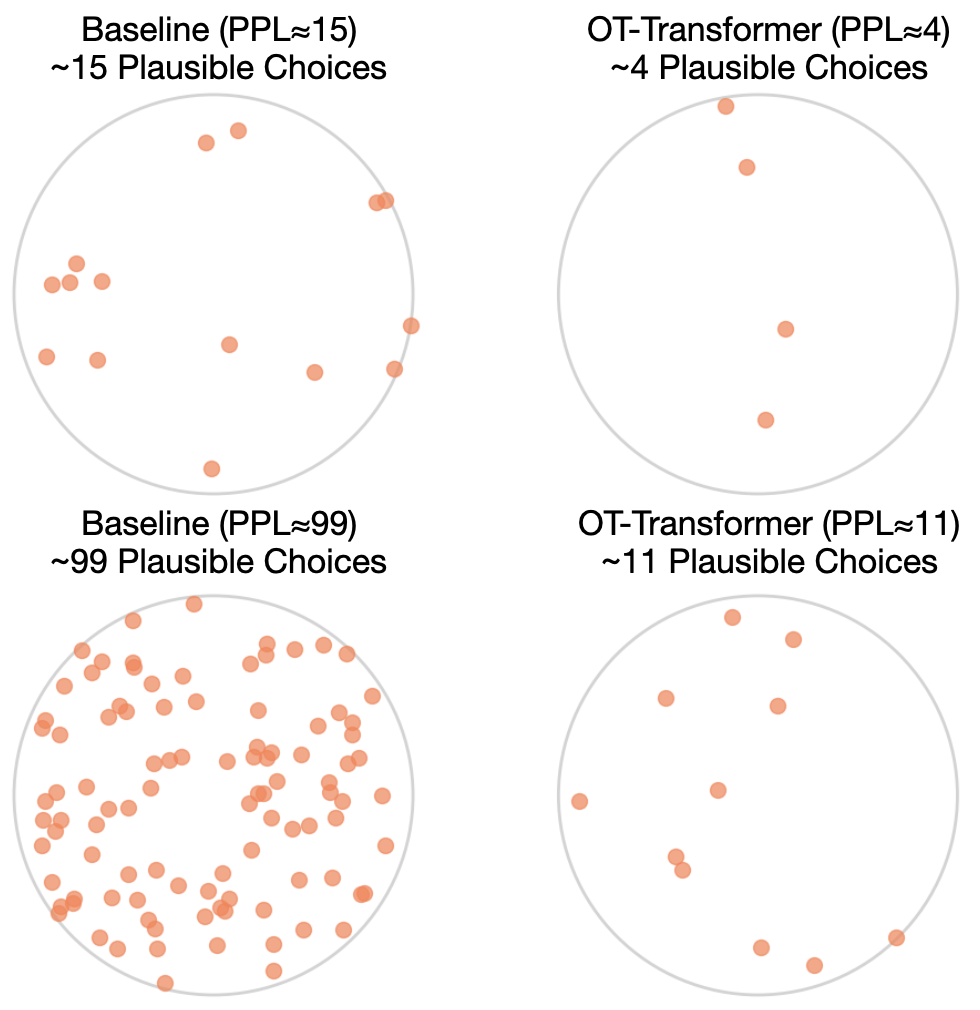}
    \caption{Visualization of the perplexity metric for baseline and our OT-Transformer}
    \label{fig:perplexity}
\end{figure}

\paragraph{Text Generation: GPT-2 on Shakespeare (Word-level) Dataset.}
We next perform experiments using the more popular GPT-2 architecture on the Shakespeare (Word-level) dataset. We use the same implementation as in the first experiment, which is based on~\cite{Karpathy2022}. The goal for the experiment is to test the generalizability of our proposed approach to large models of over 100M total parameters. 
The baseline architecture has in total $12$ layers, $12$ attention heads and embedding size of $768$, the models has 123.7M parameters in total. For above mentioned reasons we use the same exact architecture for OT-Transformer testing. Its also worth mentioning that the GPT-2 model is in fact overparametrized for the specific task, which also means reducing the model size is less significant for the task and should not be the focus. We use $10$ time steps for the numerical integration and set $\lambda=1$ for the example. We run a total of $500$ iterations for training. The learning rate decays from $6 \times 10^{-4}$ to $6 \times 10^{-5}$ over the course of training. Model performances are measured using the test loss in all tests. The reported results are reported in~\Cref{tab:gpt2_shakespeare} and averaged over 3 random trials. 

The results indicate that our proposed approach comes out ahead in both the best documented test loss as well as the test loss in the last iteration. This shows the efficacy of our method extends to large-scale Transformer models. We also point out that while the baseline model sees fluctuations in performance as training progresses, with proper regularization our method is more stable as training progresses.

\begin{table}[h]
    \centering
    \caption{Number of parameters for the models, best and final test losses (with standard deviation) across three trials for the GPT-2 and Shakespeare dataset experiment. We use test loss as the measure of model performance. }
    \vspace{1em}
    \begin{tabular}{@{}lccc@{}}
        \toprule
        \textbf{Method/Exp.} & \textbf{Para. Count} & \textbf{Best Test Loss} & \textbf{Final Test Loss} \\ 
        \midrule
        Baseline & 123.7M & $4.91 \pm 0.001$ & $5.18 \pm 0.032$ \\ 
        OT-Trans. (Ours) & 123.7M & $\mathbf{4.87} \bm{\pm} \mathbf{0.035}$ & $\mathbf{4.96} \bm{\pm} \mathbf{0.012}$ \\ 
        \bottomrule
    \end{tabular}
\label{tab:gpt2_shakespeare}
\end{table}

\begin{figure}[h]
    \centering
    \includegraphics[width=0.95\textwidth]{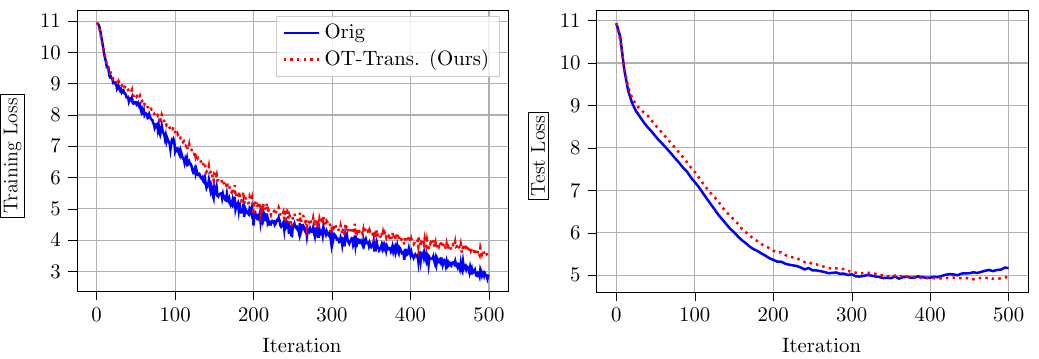}
    \caption{Training and test loss for the GPT-2 on Shakespeare (word-level) experiment.}
    \label{fig:GPT2_shakspeare}
\end{figure}

\paragraph{Text Generation: GPT-2 on OpenWebText Dataset.} Lastly, We conduct a large-scale experiment on text generation using GPT-2, trained on the OpenWebText dataset~\cite{gao2020pile}. The dataset is approximately 17GB in size, with the training data containing around 9 billion tokens and the test data containing about 4 million tokens. We use the GPT-2 model as the baseline, the Transformer architecture has in total $12$ layers, $12$ attention heads and embedding size of $768$, as a result, the model has 123.7M parameters in total. We use the same architecture to parametrize OT-Transformer. For OT-Transformer, we use $10$ time steps for the numerical integration and set $\lambda=1$. We run a total of $60,00$ iterations for training. The learning rate decays from $6 \times 10^{-4}$ to $6 \times 10^{-5}$ over the course of training. Given the large scale of the experiment, we perform only one seeded trial for each model. We use the example to test the performance of our method on not only large models but also large datasets. 

We present the results in~\Cref{tab:gpt2_openweb}, we see that our OT-Transformer outperforms the baseline GPT-2 by a sizable margin in both best test loss and final test loss. These results highlight the ability of our model to outperform established baselines in large-scale and realistic experimental settings.

\begin{table}[h]
    \centering
    \caption{Number of parameters for the models, best and final test accuracies and LLM metrics for the GPT-2 and OpenWebText dataset (9 billion tokens) experiment. Both the baseline and our OT-Transformer use a model with 123.7 million parameters.}
    \vspace{0.5em}
    \resizebox{\textwidth}{!}{%
    \begin{tabular}{@{}lcccccccccc@{}}
        \toprule
        \textbf{Model} & \textbf{B. Loss} & \textbf{F. Loss} & \textbf{Perplexity} & \textbf{BLEU} & \textbf{ROUGE-1} & \textbf{ROUGE-2} & \textbf{ROUGE-L } & \textbf{BERT-P } & \textbf{BERT-R} & \textbf{BERT-F1} \\ 
        \midrule
        Baseline & $3.20$ & $3.21$ & 26.09 & $12.22\%$ & $57.30\%$ & $17.07\%$ & $35.17\%$ & $80.77\%$ & $83.31\%$ & $82.02\%$ \\ 
        Ours & $\mathbf{2.90}$ & $\mathbf{2.91}$ & $\mathbf{19.56}$ & $\mathbf{14.07\%}$ & $\mathbf{59.46\%}$ & $\mathbf{18.99\%}$ & $\mathbf{38.05\%}$ & $\mathbf{81.91\%}$ & $\mathbf{84.27\%}$ & $\mathbf{83.07\%}$ \\ 
        \bottomrule
    \end{tabular}%
    }
\label{tab:gpt2_openweb}
\end{table}

\begin{figure}[h!]
    \centering
    \includegraphics[width=0.95\textwidth]{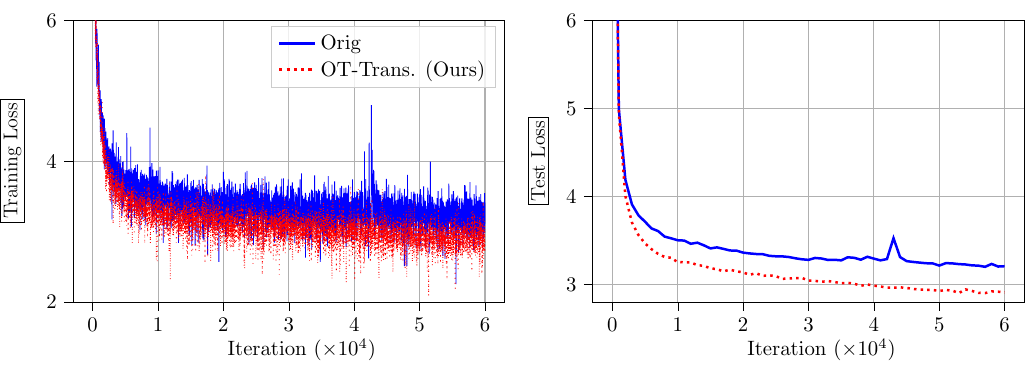}
    \caption{Training and test loss for the GPT-2 on OpenWebTest experiment. The Test loss is computed every 1,000 iterations.}
    \label{fig:gpt2_OWT}
\end{figure}

\clearpage

\section*{NeurIPS Paper Checklist}

The checklist is designed to encourage best practices for responsible machine learning research, addressing issues of reproducibility, transparency, research ethics, and societal impact. Do not remove the checklist: {\bf The papers not including the checklist will be desk rejected.} The checklist should follow the references and follow the (optional) supplemental material.  The checklist does NOT count towards the page
limit. 

Please read the checklist guidelines carefully for information on how to answer these questions. For each question in the checklist:
\begin{itemize}
    \item You should answer \answerYes{}, \answerNo{}, or \answerNA{}.
    \item \answerNA{} means either that the question is Not Applicable for that particular paper or the relevant information is Not Available.
    \item Please provide a short (1–2 sentence) justification right after your answer (even for NA). 
\end{itemize}

{\bf The checklist answers are an integral part of your paper submission.} They are visible to the reviewers, area chairs, senior area chairs, and ethics reviewers. You will be asked to also include it (after eventual revisions) with the final version of your paper, and its final version will be published with the paper.

The reviewers of your paper will be asked to use the checklist as one of the factors in their evaluation. While "\answerYes{}" is generally preferable to "\answerNo{}", it is perfectly acceptable to answer "\answerNo{}" provided a proper justification is given (e.g., "error bars are not reported because it would be too computationally expensive" or "we were unable to find the license for the dataset we used"). In general, answering "\answerNo{}" or "\answerNA{}" is not grounds for rejection. While the questions are phrased in a binary way, we acknowledge that the true answer is often more nuanced, so please just use your best judgment and write a justification to elaborate. All supporting evidence can appear either in the main paper or the supplemental material, provided in appendix. If you answer \answerYes{} to a question, in the justification please point to the section(s) where related material for the question can be found.

IMPORTANT, please:
\begin{itemize}
    \item {\bf Delete this instruction block, but keep the section heading ``NeurIPS Paper Checklist"},
    \item  {\bf Keep the checklist subsection headings, questions/answers and guidelines below.}
    \item {\bf Do not modify the questions and only use the provided macros for your answers}.
\end{itemize}

\begin{enumerate}

\item {\bf Claims}
    \item[] Question: Do the main claims made in the abstract and introduction accurately reflect the paper's contributions and scope?
    \item[] Answer: \answerYes{}
    \item[] Justification: The abstract and introduction clearly state the main claims, which are supported by the presented theoretical and numerical results.
    \item[] Guidelines:
    \begin{itemize}
        \item The answer NA means that the abstract and introduction do not include the claims made in the paper.
        \item The abstract and/or introduction should clearly state the claims made, including the contributions made in the paper and important assumptions and limitations. A No or NA answer to this question will not be perceived well by the reviewers. 
        \item The claims made should match theoretical and experimental results, and reflect how much the results can be expected to generalize to other settings. 
        \item It is fine to include aspirational goals as motivation as long as it is clear that these goals are not attained by the paper. 
    \end{itemize}

\item {\bf Limitations}
    \item[] Question: Does the paper discuss the limitations of the work performed by the authors?
    \item[] Answer: \answerYes{} 
    \item[] Justification: We have discussed in~\Cref{sec:theory} that our assumptions for theoretical analysis may not fully align with the structure of general encoder-decoder Transformer architectures. However, we remark that our model performs well empirically on encoder-decoder architecture, see~\Cref{subsec:point_cloud_exp}.
    \item[] Guidelines:
    \begin{itemize}
        \item The answer NA means that the paper has no limitation while the answer No means that the paper has limitations, but those are not discussed in the paper. 
        \item The authors are encouraged to create a separate "Limitations" section in their paper.
        \item The paper should point out any strong assumptions and how robust the results are to violations of these assumptions (e.g., independence assumptions, noiseless settings, model well-specification, asymptotic approximations only holding locally). The authors should reflect on how these assumptions might be violated in practice and what the implications would be.
        \item The authors should reflect on the scope of the claims made, e.g., if the approach was only tested on a few datasets or with a few runs. In general, empirical results often depend on implicit assumptions, which should be articulated.
        \item The authors should reflect on the factors that influence the performance of the approach. For example, a facial recognition algorithm may perform poorly when image resolution is low or images are taken in low lighting. Or a speech-to-text system might not be used reliably to provide closed captions for online lectures because it fails to handle technical jargon.
        \item The authors should discuss the computational efficiency of the proposed algorithms and how they scale with dataset size.
        \item If applicable, the authors should discuss possible limitations of their approach to address problems of privacy and fairness.
        \item While the authors might fear that complete honesty about limitations might be used by reviewers as grounds for rejection, a worse outcome might be that reviewers discover limitations that aren't acknowledged in the paper. The authors should use their best judgment and recognize that individual actions in favor of transparency play an important role in developing norms that preserve the integrity of the community. Reviewers will be specifically instructed to not penalize honesty concerning limitations.
    \end{itemize}

\item {\bf Theory assumptions and proofs}
    \item[] Question: For each theoretical result, does the paper provide the full set of assumptions and a complete (and correct) proof?
    \item[] Answer: \answerYes{}
    \item[] Justification: All theorems and necessary assumptions are clearly documented in the main text, with detailed proofs to the theorems listed in the Appendix.
    \item[] Guidelines:
    \begin{itemize}
        \item The answer NA means that the paper does not include theoretical results. 
        \item All the theorems, formulas, and proofs in the paper should be numbered and cross-referenced.
        \item All assumptions should be clearly stated or referenced in the statement of any theorems.
        \item The proofs can either appear in the main paper or the supplemental material, but if they appear in the supplemental material, the authors are encouraged to provide a short proof sketch to provide intuition. 
        \item Inversely, any informal proof provided in the core of the paper should be complemented by formal proofs provided in appendix or supplemental material.
        \item Theorems and Lemmas that the proof relies upon should be properly referenced. 
    \end{itemize}

    \item {\bf Experimental result reproducibility}
    \item[] Question: Does the paper fully disclose all the information needed to reproduce the main experimental results of the paper to the extent that it affects the main claims and/or conclusions of the paper (regardless of whether the code and data are provided or not)?
    \item[] Answer: \answerYes{}
    \item[] Justification: Details on experimental settings and results are thoroughly documented in~\Cref{sec:exp_details}, ensuring our numerical results can be reproduced.
    \item[] Guidelines:
    \begin{itemize}
        \item The answer NA means that the paper does not include experiments.
        \item If the paper includes experiments, a No answer to this question will not be perceived well by the reviewers: Making the paper reproducible is important, regardless of whether the code and data are provided or not.
        \item If the contribution is a dataset and/or model, the authors should describe the steps taken to make their results reproducible or verifiable. 
        \item Depending on the contribution, reproducibility can be accomplished in various ways. For example, if the contribution is a novel architecture, describing the architecture fully might suffice, or if the contribution is a specific model and empirical evaluation, it may be necessary to either make it possible for others to replicate the model with the same dataset, or provide access to the model. In general. releasing code and data is often one good way to accomplish this, but reproducibility can also be provided via detailed instructions for how to replicate the results, access to a hosted model (e.g., in the case of a large language model), releasing of a model checkpoint, or other means that are appropriate to the research performed.
        \item While NeurIPS does not require releasing code, the conference does require all submissions to provide some reasonable avenue for reproducibility, which may depend on the nature of the contribution. For example
        \begin{enumerate}
            \item If the contribution is primarily a new algorithm, the paper should make it clear how to reproduce that algorithm.
            \item If the contribution is primarily a new model architecture, the paper should describe the architecture clearly and fully.
            \item If the contribution is a new model (e.g., a large language model), then there should either be a way to access this model for reproducing the results or a way to reproduce the model (e.g., with an open-source dataset or instructions for how to construct the dataset).
            \item We recognize that reproducibility may be tricky in some cases, in which case authors are welcome to describe the particular way they provide for reproducibility. In the case of closed-source models, it may be that access to the model is limited in some way (e.g., to registered users), but it should be possible for other researchers to have some path to reproducing or verifying the results.
        \end{enumerate}
    \end{itemize}

\item {\bf Open access to data and code}
    \item[] Question: Does the paper provide open access to the data and code, with sufficient instructions to faithfully reproduce the main experimental results, as described in supplemental material?
    \item[] Answer: \answerYes{} 
    \item[] Our code is available at \url{https://github.com/KelvinKan/OT-Transformer}
    \item[] Guidelines:
    \begin{itemize}
        \item The answer NA means that paper does not include experiments requiring code.
        \item Please see the NeurIPS code and data submission guidelines (\url{https://nips.cc/public/guides/CodeSubmissionPolicy}) for more details.
        \item While we encourage the release of code and data, we understand that this might not be possible, so “No” is an acceptable answer. Papers cannot be rejected simply for not including code, unless this is central to the contribution (e.g., for a new open-source benchmark).
        \item The instructions should contain the exact command and environment needed to run to reproduce the results. See the NeurIPS code and data submission guidelines (\url{https://nips.cc/public/guides/CodeSubmissionPolicy}) for more details.
        \item The authors should provide instructions on data access and preparation, including how to access the raw data, preprocessed data, intermediate data, and generated data, etc.
        \item The authors should provide scripts to reproduce all experimental results for the new proposed method and baselines. If only a subset of experiments are reproducible, they should state which ones are omitted from the script and why.
        \item At submission time, to preserve anonymity, the authors should release anonymized versions (if applicable).
        \item Providing as much information as possible in supplemental material (appended to the paper) is recommended, but including URLs to data and code is permitted.
    \end{itemize}

\item {\bf Experimental setting/details}
    \item[] Question: Does the paper specify all the training and test details (e.g., data splits, hyperparameters, how they were chosen, type of optimizer, etc.) necessary to understand the results?
    \item[] Answer: \answerYes{}
    \item[] Justification: The paper provides a comprehensive description of the experimental setup, including all relevant parameters, implementation details, and evaluation metrics necessary to reproduce all experimental results presented. These details are included in~\Cref{sec:exp_details}.
    \item[] Guidelines:
    \begin{itemize}
        \item The answer NA means that the paper does not include experiments.
        \item The experimental setting should be presented in the core of the paper to a level of detail that is necessary to appreciate the results and make sense of them.
        \item The full details can be provided either with the code, in appendix, or as supplemental material.
    \end{itemize}

\item {\bf Experiment statistical significance}
    \item[] Question: Does the paper report error bars suitably and correctly defined or other appropriate information about the statistical significance of the experiments?
    \item[] Answer: \answerYes{} 
    \item[] Justification: We repeated our experiments over multiple trials, and report the mean and std in the main text and~\Cref{sec:exp_details} of the paper.
    \item[] Guidelines:
    \begin{itemize}
        \item The answer NA means that the paper does not include experiments.
        \item The authors should answer "Yes" if the results are accompanied by error bars, confidence intervals, or statistical significance tests, at least for the experiments that support the main claims of the paper.
        \item The factors of variability that the error bars are capturing should be clearly stated (for example, train/test split, initialization, random drawing of some parameter, or overall run with given experimental conditions).
        \item The method for calculating the error bars should be explained (closed form formula, call to a library function, bootstrap, etc.)
        \item The assumptions made should be given (e.g., Normally distributed errors).
        \item It should be clear whether the error bar is the standard deviation or the standard error of the mean.
        \item It is OK to report 1-sigma error bars, but one should state it. The authors should preferably report a 2-sigma error bar than state that they have a 96\% CI, if the hypothesis of Normality of errors is not verified.
        \item For asymmetric distributions, the authors should be careful not to show in tables or figures symmetric error bars that would yield results that are out of range (e.g. negative error rates).
        \item If error bars are reported in tables or plots, The authors should explain in the text how they were calculated and reference the corresponding figures or tables in the text.
    \end{itemize}

\item {\bf Experiments compute resources}
    \item[] Question: For each experiment, does the paper provide sufficient information on the computer resources (type of compute workers, memory, time of execution) needed to reproduce the experiments?
    \item[] Answer: \answerYes{} 
    \item[] Justification: We report the GPU specifications in the main text and~\Cref{sec:exp_details}. We also discussed runtime in~\Cref{sec:comparison_appendix,sec:exp_details}.
    \item[] Guidelines:
    \begin{itemize}
        \item The answer NA means that the paper does not include experiments.
        \item The paper should indicate the type of compute workers CPU or GPU, internal cluster, or cloud provider, including relevant memory and storage.
        \item The paper should provide the amount of compute required for each of the individual experimental runs as well as estimate the total compute. 
        \item The paper should disclose whether the full research project required more compute than the experiments reported in the paper (e.g., preliminary or failed experiments that didn't make it into the paper). 
    \end{itemize}
    
\item {\bf Code of ethics}
    \item[] Question: Does the research conducted in the paper conform, in every respect, with the NeurIPS Code of Ethics \url{https://neurips.cc/public/EthicsGuidelines}?
    \item[] Answer: \answerYes{}
    \item[] Justification: Yes, the research conducted in this paper conforms, in every respect, with the NeurIPS Code of Ethics as outlined in the provided guidelines.
    \item[] Guidelines:
    \begin{itemize}
        \item The answer NA means that the authors have not reviewed the NeurIPS Code of Ethics.
        \item If the authors answer No, they should explain the special circumstances that require a deviation from the Code of Ethics.
        \item The authors should make sure to preserve anonymity (e.g., if there is a special consideration due to laws or regulations in their jurisdiction).
    \end{itemize}

\item {\bf Broader impacts}
    \item[] Question: Does the paper discuss both potential positive societal impacts and negative societal impacts of the work performed?
    \item[] Answer: \answerYes{}
    \item[] Justification: Discussion on broader impacts are in~\Cref{sec:conclusion}.
    \item[] Guidelines:
    \begin{itemize}
        \item The answer NA means that there is no societal impact of the work performed.
        \item If the authors answer NA or No, they should explain why their work has no societal impact or why the paper does not address societal impact.
        \item Examples of negative societal impacts include potential malicious or unintended uses (e.g., disinformation, generating fake profiles, surveillance), fairness considerations (e.g., deployment of technologies that could make decisions that unfairly impact specific groups), privacy considerations, and security considerations.
        \item The conference expects that many papers will be foundational research and not tied to particular applications, let alone deployments. However, if there is a direct path to any negative applications, the authors should point it out. For example, it is legitimate to point out that an improvement in the quality of generative models could be used to generate deepfakes for disinformation. On the other hand, it is not needed to point out that a generic algorithm for optimizing neural networks could enable people to train models that generate Deepfakes faster.
        \item The authors should consider possible harms that could arise when the technology is being used as intended and functioning correctly, harms that could arise when the technology is being used as intended but gives incorrect results, and harms following from (intentional or unintentional) misuse of the technology.
        \item If there are negative societal impacts, the authors could also discuss possible mitigation strategies (e.g., gated release of models, providing defenses in addition to attacks, mechanisms for monitoring misuse, mechanisms to monitor how a system learns from feedback over time, improving the efficiency and accessibility of ML).
    \end{itemize}
    
\item {\bf Safeguards}
    \item[] Question: Does the paper describe safeguards that have been put in place for responsible release of data or models that have a high risk for misuse (e.g., pretrained language models, image generators, or scraped datasets)?
    \item[] Answer: \answerNA{} 
    \item[] Justification: The paper does not pose any such risks.
    \item[] Guidelines:
    \begin{itemize}
        \item The answer NA means that the paper poses no such risks.
        \item Released models that have a high risk for misuse or dual-use should be released with necessary safeguards to allow for controlled use of the model, for example by requiring that users adhere to usage guidelines or restrictions to access the model or implementing safety filters. 
        \item Datasets that have been scraped from the Internet could pose safety risks. The authors should describe how they avoided releasing unsafe images.
        \item We recognize that providing effective safeguards is challenging, and many papers do not require this, but we encourage authors to take this into account and make a best faith effort.
    \end{itemize}

\item {\bf Licenses for existing assets}
    \item[] Question: Are the creators or original owners of assets (e.g., code, data, models), used in the paper, properly credited and are the license and terms of use explicitly mentioned and properly respected?
    \item[] Answer: \answerYes{}
    \item[] Justification: All creators or original owners of assets used in this paper have been properly credited. Our usage of these assets is in compliance with the license and terms of use.
    \item[] Guidelines:
    \begin{itemize}
        \item The answer NA means that the paper does not use existing assets.
        \item The authors should cite the original paper that produced the code package or dataset.
        \item The authors should state which version of the asset is used and, if possible, include a URL.
        \item The name of the license (e.g., CC-BY 4.0) should be included for each asset.
        \item For scraped data from a particular source (e.g., website), the copyright and terms of service of that source should be provided.
        \item If assets are released, the license, copyright information, and terms of use in the package should be provided. For popular datasets, \url{paperswithcode.com/datasets} has curated licenses for some datasets. Their licensing guide can help determine the license of a dataset.
        \item For existing datasets that are re-packaged, both the original license and the license of the derived asset (if it has changed) should be provided.
        \item If this information is not available online, the authors are encouraged to reach out to the asset's creators.
    \end{itemize}

\item {\bf New assets}
    \item[] Question: Are new assets introduced in the paper well documented and is the documentation provided alongside the assets?
    \item[] Answer: \answerYes{} 
    \item[] Justification: Our work has been discussed and documented throughout the paper and Appendix in detail.
    \item[] Guidelines:
    \begin{itemize}
        \item The answer NA means that the paper does not release new assets.
        \item Researchers should communicate the details of the dataset/code/model as part of their submissions via structured templates. This includes details about training, license, limitations, etc. 
        \item The paper should discuss whether and how consent was obtained from people whose asset is used.
        \item At submission time, remember to anonymize your assets (if applicable). You can either create an anonymized URL or include an anonymized zip file.
    \end{itemize}

\item {\bf Crowdsourcing and research with human subjects}
    \item[] Question: For crowdsourcing experiments and research with human subjects, does the paper include the full text of instructions given to participants and screenshots, if applicable, as well as details about compensation (if any)? 
    \item[] Answer: \answerNA{}
    \item[] Justification: This paper does not involve research with human subjects nor crowdsourcing.
    \item[] Guidelines:
    \begin{itemize}
        \item The answer NA means that the paper does not involve crowdsourcing nor research with human subjects.
        \item Including this information in the supplemental material is fine, but if the main contribution of the paper involves human subjects, then as much detail as possible should be included in the main paper. 
        \item According to the NeurIPS Code of Ethics, workers involved in data collection, curation, or other labor should be paid at least the minimum wage in the country of the data collector. 
    \end{itemize}

\item {\bf Institutional review board (IRB) approvals or equivalent for research with human subjects}
    \item[] Question: Does the paper describe potential risks incurred by study participants, whether such risks were disclosed to the subjects, and whether Institutional Review Board (IRB) approvals (or an equivalent approval/review based on the requirements of your country or institution) were obtained?
    \item[] Answer: \answerNA{}
    \item[] Justification: This paper does not involve research with human subjects nor crowdsourcing.
    \item[] Guidelines:
    \begin{itemize}
        \item The answer NA means that the paper does not involve crowdsourcing nor research with human subjects.
        \item Depending on the country in which research is conducted, IRB approval (or equivalent) may be required for any human subjects research. If you obtained IRB approval, you should clearly state this in the paper. 
        \item We recognize that the procedures for this may vary significantly between institutions and locations, and we expect authors to adhere to the NeurIPS Code of Ethics and the guidelines for their institution. 
        \item For initial submissions, do not include any information that would break anonymity (if applicable), such as the institution conducting the review.
    \end{itemize}

\item {\bf Declaration of LLM usage}
    \item[] Question: Does the paper describe the usage of LLMs if it is an important, original, or non-standard component of the core methods in this research? Note that if the LLM is used only for writing, editing, or formatting purposes and does not impact the core methodology, scientific rigorousness, or originality of the research, declaration is not required.
    \item[] Answer: \answerNA{}
    \item[] Justification: No LLMs are used in any important, original, or non-standard component of the core methods in this research.
    \item[] Guidelines:
    \begin{itemize}
        \item The answer NA means that the core method development in this research does not involve LLMs as any important, original, or non-standard components.
        \item Please refer to our LLM policy (\url{https://neurips.cc/Conferences/2025/LLM}) for what should or should not be described.
    \end{itemize}

\end{enumerate}

\end{document}